\documentclass[]{styles/jair}

\usepackage{url}
\usepackage[noend]{algpseudocode}
\usepackage{algorithm}
\usepackage{algorithmicx}
\usepackage{multirow}
\usepackage{multicol}
\usepackage[T1]{fontenc}
\usepackage{amsthm}

\newtheorem{theorem}{Theorem}

\newcommand{\mGF}{\mathcal{GF}\xspace}

\newcommand{\rem}[1]{}
\newcommand{\aleft}{A_\mathrm{left}}
\newcommand{\alayer}{A_\mathrm{layer}}

\newcommand{\res}{\textit{res}\xspace}

\newcommand{\ENHSP}{\textsc{enhsp}\xspace}

\newcommand{\NLM}{\textsc{nlm}\xspace}
\newcommand{\FF}{\textsc{ff}\xspace}

\newcommand{\pattygkr}{\ensuremath{\textsc{Patty}_\textsc{s}}\xspace}
\newcommand{\pattyhkr}{\ensuremath{\textsc{Patty}_\textsc{d}}\xspace}

\newcommand{\pattyo}{\ensuremath{\textsc{Patty}_\textsc{o}}\xspace}

\newcommand{\re}{{\ensuremath{R^2\exists}}}

\newcommand{\pattyr}{\ensuremath{\textsc{Patty}_\textsc{r}}\xspace}

\newcommand{\pattyc}{\ensuremath{\textsc{Patty}_\textsc{c}}\xspace}
\newcommand{\pattyb}{\ensuremath{\textsc{Patty}_\textsc{b}}\xspace}
\newcommand{\pattyg}{\ensuremath{\textsc{Patty}_\textsc{g}}\xspace}

\newcommand{\pattyd}{\ensuremath{\textsc{Patty}_\textsc{d}}\xspace}

\newcommand{\stp}{\textsc{computeS2P}}
\newcommand{\ptg}{\textsc{computeP2G}}
\newcommand{\stpn}{\textsc{computeS2Pn}}
\newcommand{\ptgn}{\textsc{computeP2Gn}}
\usepackage{xspace}
\usepackage{comment}
\usepackage{bm}
\usepackage{amsthm}
\usepackage{todonotes}

\newcommand{\set}[1]{\ensuremath{\{#1\}}\xspace}
\newcommand{\tuple}[1]{\ensuremath{\langle #1 \rangle}\xspace}

\newcommand{\op}[1]{\mathrm{#1}}
\newcommand{\lb}{\underline}
\newcommand{\ub}{\overline}

\newcommand{\pluseq}{\mathrel{+}=}
\newcommand{\minuseq}{\mathrel{-}=}

\newcommand{\mA}{\mathcal{A}\xspace}

\newcommand{\mG}{\mathcal{G}\xspace}

\newcommand{\mI}{\mathcal{I}\xspace}

\newcommand{\mS}{\mathcal{S}\xspace}
\newcommand{\mT}{\mathcal{T}\xspace}

\newcommand{\mX}{\mathcal{X}\xspace}

\newcommand{\rational}{\ensuremath{\mathbb{Q}}\xspace}

\renewcommand{\natural}{\ensuremath{\mathbb{N}}\xspace}

\newcommand{\real}{\ensuremath{\mathbb{R}}\xspace}

\usepackage{pifont}

\newcommand{\pre}{\op{pre}}

\newcommand{\eff}{\op{eff}}

\renewcommand{\implies}{\rightarrow}

\newcommand{\liff}{\leftrightarrow}

\newcommand{\patty}{\textsc{patty}\xspace}

\newcommand{\pattern}{{\prec\xspace}}

\usepackage{ stmaryrd }

\newcommand{\asseq}{:=}

\newcommand{\smt}{\textsc{smt}\xspace}

\newcommand{\spp}{\textsc{spp}\xspace}

\newcommand{\ipc}{\textsc{ipc}\xspace}
\newcommand{\ite}{\textsc{ite}\xspace}

\DeclareFontFamily{U}{mathb}{\hyphenchar\font45}
\DeclareFontShape{U}{mathb}{m}{n}{
<-6> mathb5 <6-7> mathb6 <7-8> mathb7
<8-9> mathb8 <9-10> mathb9
<10-12> mathb10 <12-> mathb12
}{}
\DeclareSymbolFont{mathb}{U}{mathb}{m}{n}
\DeclareMathSymbol{\cespatternbasic}{\mathrel}{mathb}{"CE}

\newcommand{\arpg}{\textsc{arpg}\xspace}

\newcommand{\ENHSPCT}{{\textsc{ENHSP}}$_{\textsc{CT}}$}

\setcopyright{cc}
\copyrightyear{2025}
\acmDOI{10.1613/jair.1.xxxxx}

\JAIRAE{TBD}
\JAIRTrack{} 
\acmVolume{0}
\acmArticle{0}
\acmMonth{0}
\acmYear{0000}

\RequirePackage[
  datamodel=acmdatamodel,
  style=acmauthoryear,
  backend=biber,
  giveninits=true,
  uniquename=init
  ]{biblatex}

\begin{document}

\title[Exploiting Search in Symbolic Numeric Planning with Patterns]{Exploiting Search in Symbolic Numeric Planning with Patterns}

\author{Matteo Cardellini}
\authornote{Corresponding Author.}
\orcid{0000-0003-3788-9475}
\email{matteo.cardellini@unige.it}
\affiliation{%
  \institution{DIBRIS, University of Genoa}
  \city{Genova}
  \country{Italy}
}

\author{Enrico Giunchiglia}
\orcid{0000-0001-5758-2556}
\email{enrico.giunchiglia@unige.it}
\affiliation{%
  \institution{DIBRIS, University of Genoa}
  \city{Genova}
  \country{Italy}
}

\renewcommand{\shortauthors}{Cardellini \& Giunchiglia}

\begin{abstract}
In this paper, we present a procedure for numeric planning based on Symbolic Pattern Planning (\spp). Given a numeric planning problem $\Pi$,  a pattern $\pattern$ is a sequence of actions used to define a formula encoding the subsequences of $\pattern$ executable from a starting state $S$. 
\citet{DBLP:conf/aaai/CardelliniGM24} follow the Planning as Satisfiability approach by defining, at each step $n \ge 0$, a formula $\Pi^\pattern_n$ in which $(i)$ the pattern $\pattern$ is computed only for $n=0$ in the initial state $I$ of $\Pi$, and then exploited at each step $n$, $(ii)$ the starting state $S$ is set to $I$, and $(iii)$ the set $G$ of goals is required to hold in the last state that can be reached by one of the subsequences of $\pattern$ concatenated $n$ times. The procedure begins with $n=0$, terminates as soon as $\Pi^\pattern_n$ is satisfiable, and otherwise proceeds by incrementing $n$.
In this paper, possibly at each step, $(i)$ we symbolically search for an intermediate state $P$ reachable from $I$, closer to a goal state, $(ii)$ dynamically recompute the pattern $\pattern_h$ --to be used in the next step-- in $P$, $(iii)$ refine the pattern $\pattern_g$ used to reach $P$, and $(iv)$ start the new search from the state $S$ which can be either the initial state $I$ or the last computed intermediate state $P$, exploiting the computed patterns $\pattern_g$ and $\pattern_h$ to define the pattern $\pattern$ to be used in the search. In particular, at each step, we define a formula $\Pi^{\pattern}_{S,P}$ encoding the existence of a state $P'$ closer than $P$ to a goal state, with $P'$ reachable from the starting state $S$ when using the pattern $\pattern$. 
We present different techniques for producing such formulas, each corresponding to a different strategy for exploring the search space. We prove their correctness and completeness, the latter under certain conditions. We evaluate their performance against both the original~\spp approach and other publicly available numeric planners, using the settings and benchmarks from the 2023 International Planning Competition Agile track. 
The results highlight that the proposed procedures achieve good overall performance compared to state-of-the-art numeric planners, including the original \spp{} approach. Ablation studies highlight which technique is better suited to which domain.
\end{abstract}


\maketitle

\section{Introduction}
In a deterministic \textsc{ai} planning problem $\Pi$, the objective is to find a sequence in a set $A$ of possible actions leading from an initial state $I$ to a state  satisfying a set~$G$ of goals.
Two are the main alternative approaches  to solve a planning problem. \textsl{Planning as Search}, given a set of states already determined to be reachable from the initial state, selects one of these states and progress it with a sequence of actions likely to lead to a goal state,
see, e.g., 
\cite{Bonet_Geffner_2001}.
\textsl{Planning as Satisfiability}
$(i)$ fixes a \textsl{bound} or \textsl{number of steps} $n$, initially set to $0$,  $(ii)$ computes a logic formula encoding the sequences of actions of length $n$ which are executable in a given starting state, $(iii)$ checks whether one of such sequences is a valid plan by imposing $I$ and $G$ as conditions on the starting and the final state, and $(iv)$ upon failure, iterates from the second step while increasing $n$, see, e.g. \cite{DBLP:conf/kr/KautzMS96}. 
Symbolic Pattern Planning (\spp) \cite{DBLP:conf/aaai/CardelliniGM24} is a recent logic encoding of numeric planning problems, which, given an arbitrary sequence $\pattern$ of actions called {\sl pattern}, allows producing a formula encoding all the executable subsequences of the pattern $\pattern$.
\citet{DBLP:conf/aaai/CardelliniGM24} compute a static pattern $\pattern$ once in the initial state, and then exploit it at each iteration in the Planning as Satisfiability setting. The proposal showed to be very competitive with state-of-the-art (\textsc{sota}) planning systems when using the settings and benchmarks from the 2023 International Planning Competition (\ipc) Agile track \cite{ipc2023}. 
 
In this paper, we push the envelope of numeric planning with patterns, showing how to symbolically search for a valid plan  exploiting patterns.
Specifically, given a state $P$ reachable from the initial state $I$, at each iteration we define a formula $\Pi^{\pattern}_{S,P}$ enforcing that:
\begin{enumerate}
    \item the search starts from a selected state $S$;
    \item the pattern $\pattern$ is obtained by concatenating:
    \begin{enumerate}
        \item a prefix pattern $\pattern_g$ that allows reaching the state $P$ from $S$, and
        \item a suffix pattern $\pattern_h$, computed in $P$, aimed at progressing the search beyond $P$;
    \end{enumerate}
    \item there exists a state $P'$ that is closer than $P$ to a goal state, according to a goal value function $GF_P$, which assigns a positive value to every non-goal state.
\end{enumerate}
At the beginning, the state $P$ and $S$ are both set to the initial state $I$, $\pattern_g$ is empty and $\pattern_h$  is computed from the initial state. Then, assuming such a new state $P'$ is determined, the procedure terminates if $P'$ is a goal state, and, otherwise, $(i)$ the state $P$ is set to $P'$, $(ii)$
the starting state $S$ is set to  either $P$ or to 
the initial state $I$, and $(iii)$ $\pattern_g$, $\pattern_h$ and $GF_P$ --and thus the new formula $\Pi^\pattern_{S,P}$ to be used in the next iteration-- are recomputed. When the search for such a state $P'$ fails, the procedure is iterated, possibly setting a new starting state $S$, $\pattern_g$ and $\pattern_h$. 
Depending on the function $GF_P$ used to decide when a state $P'$ is closer than $P$ to a goal state, and  on how the state $S$ and the patterns $\pattern_g$ and $\pattern_h$ are defined at each iteration, we obtain different $\Pi^\pattern_{S,P}$ formulas, each leading to a different exploration of the search space. The {\spp} approach by \citet{DBLP:conf/aaai/CardelliniGM24} can be seen as the special case of the above outlined procedure in which in each iteration no intermediate state $P'$ is ever computed (and thus the
starting state $S$ is always fixed to the initial state $I$), $\pattern_g$ is empty and the pattern $\pattern$ used at each iteration is obtained by concatenating once more the initially computed pattern $\pattern_h$. In our case,  similarly (resp. differently) to planning as search (resp. standard planning as satisfiability), at the $n$-th iteration,
$(i)$ the starting state $S$ of the search can be an intermediate state $P$ which is reachable from $I$, $(ii)$ when $S$ is set to $I$, the pattern $\pattern_g$ allowing reaching $P$ starting from $S$ does not necessarily encode all executable plans of length $n$; and
$(iii)$ the pattern $\pattern_h$ is selected dynamically during the search and can contain a subset of the available actions.

On the theoretical side, we prove the correctness of the general procedure (any returned plan is valid) and its completeness  (if there exists a valid plan, one will be returned), the latter under certain conditions. We then focus on four specific exploration strategies, that we dub as \emph{cautious}, \emph{brave}, \emph{reckless} and \emph{greedy}. We experimentally evaluate them using the settings of the 2023 \ipc Agile track, i.e., 20 domains and 20 problems per domain. To these, we added the 20 problems of the Line Exchange domain introduced by \citet{DBLP:conf/aaai/CardelliniGM24}. 
The analysis indicates that the proposed four specific procedures have comparatively better performance on different domains, and overall good performance compared to both the previous \spp approach and all the other publicly available planners, both symbolic and search-based.
Overall, the cautious/brave/reckless/greedy procedures solve respectively 326/330/316/315 problems out of the 420 considered, compared to the 291 
solved by the original {\spp} approach proposed by \citet{DBLP:conf/aaai/CardelliniGM24}, and to the 276 solved by the best performing \textsc{sota} planner not based on patterns.
Ablation studies confirm that the specific combination of techniques used in the cautious, brave, reckless and greedy procedures lead to the best overall performance, but that in some domain the exploitation of some alternative technique leads to even better performance.

This makes use of the \spp approach presented by \citet{DBLP:conf/aaai/CardelliniGM24}, which itself builds upon $(i)$ the \re-encoding for classical planning introduced by \citet{balyo_relaxing_2013} and adapted to numeric planning by \citet{bofill_espasa_villaret_2016}, and $(ii)$ the rolling encoding described by \citet{Scala_Ramirez_Haslum_Thiebaux_2016_Rolling}. In search-based planning, several planners leverage action sequences by first attempting to reach a goal state  sequences of actions, reverting to individual actions if unsuccessful. The classical planner \textsc{yahsp} \cite{vidal2004yahsp} utilizes “look-ahead plans” (i.e., sequences) during forward search to leap to intermediate states closer to the goal, an approach conceptually similar to macro-actions \cite{alarnaouti2024macroactions}. Patterns, however, can capture a broader superset of sequences than macro-actions or “look-ahead plans”, as actions at any position within a pattern may be omitted from the plan or rolled \cite{Scala_Ramirez_Haslum_Thiebaux_2016_Rolling}, whereas the other approaches always rely on the full sequence. More important, all of the approaches mentioned above have been implemented exclusively in search-based planning, while our work pushes forward the \textsc{sota} in satisfiability-based planning. Finally, a preliminary version of this paper appeared in \citet{DBLP:conf/kr/CardelliniG25} where we presented the procedure that we now call \emph{cautious}.

Summing up, the main contributions of the paper are:
\begin{enumerate}
    \item We present a general  \spp procedure for numeric planning with dynamic recomputation of the pattern.
    \item We prove its correctness and present conditions which ensure also its completeness.
    \item We specialize the general procedure by considering the \emph{cautious}, \emph{brave}, \emph{reckless} and \emph{greedy} strategies for exploring the search space.
    \item We show that these procedures exhibit varying performance across different domains, and that they perform competitively with publicly available planners on the setting and benchmarks used in the 2023 \ipc, Agile track.
    \item We conduct ablation studies, revealing that the four procedures have the best overall performance, though some alternative strategy leads to better performance on some domain.
\end{enumerate}

After the background on numeric Planning as Satisfiability with \spp (Section~\ref{sec:background}), we present our general procedure in Section~\ref{sec:pattyd}, starting with a simple example used to illustrate the drawbacks of the original \spp approach and the intuitions behind it. In Section~\ref{sec:cautious-brave-reckless-greedy}, we introduce the cautious, brave, reckless, and greedy variants of the general procedure, prove their formal properties, and discuss their differing behaviors on the motivating example. The implementation and the experimental analysis are presented in Section~\ref{sec:impl-exper}, which is followed by the ablation studies in Section~\ref{sec:ablation}. We end the paper with the conclusions. 

\section{Numeric Planning as Satisfiability with Patterns} \label{sec:background}

In this section, we first introduce the syntax and semantics of \textsc{pddl2.1}  level~2 \cite{Fox_Long_2003}, the de-facto standard in numeric planning (Subsection~\ref{subsec:pddl}). Then, in Subsection~\ref{subsec:planning-as-sat-patterns} we show how  \textsc{pddl2.1}  problems have been encoded in Satisfiability Modulo Theories (\smt) \cite{BarFT-SMTLIB} exploiting the Planning as Satisfiability approach and the patten encoding. In the last Subsection~\ref{subsec:arpg}, we detail the procedure used for the pattern computation.

\subsection{Numeric Planning} 
\label{subsec:pddl}

In \textsc{pddl2.1}, a {\sl numeric planning problem} $\Pi$ is a tuple $\tuple{X, A, I, G}$ representing the sets of {\sl state} and {\sl action variables}, the {\sl initial} and {\sl goal states} respectively. The set $X$ of {\sl state variables} is the union of the sets 
$V_B$ and $V_N$ of propositional and numeric variables with domain in $\set{\top, \bot}$ and $\rational$, respectively, where $\top$ and $\bot$ are the symbols for truth and falsity. A {\sl condition} is either a propositional or a numeric condition. A {\sl propositional condition} is an expression of the form $v = \top$ or $v = \bot$, with $v \in V_B$. A {\sl numeric condition} has the form $\psi \unrhd 0$, with $\unrhd \in \set{\ge, >}$ and $\psi$ an arbitrary expression in $V_N$. For simplicity, and following common practice, we restrict to the case where $\psi$ is a linear combination of the variables in $V_N$. i.e., $\psi = \sum_{x \in V_N} k_x x + k$, with $k_x, k \in \rational$. A {\sl state} is a function assigning each variable in $X$ to an element in its domain, and is naturally extended to expressions in $V_N$, conditions  and formulas, the latter defined as propositional combination of conditions. A state $S$ {\sl satisfies a condition} $v = \top$ (resp. $w = \bot$, resp. $\psi \unrhd 0$) if $S(v) = \top$, (resp. $S(w) = \bot$, resp. $S(\psi) \unrhd 0$) and {\sl a formula} according to the truth tables of propositional logic.
In $\Pi$,
$I$ is the {\sl initial state} and $G$ is a finite set of disjunctions of conditions, each disjunct in $G$ called {\sl goal}. The models of $G$ are the {\sl goal states}. An {\sl action} $a \in A$ is a pair $\tuple{\pre(a), \eff(a)}$ where $\pre(a)$ is a set of conditions called the {\sl preconditions} of $a$, and $\eff(a)$ is the set of {\sl effects} of $a$, i.e., expressions of the form $v \asseq \top$ or $w \asseq \bot$ or $x \asseq \psi$ with $v,w \in V_B$, $x \in V_N$  and $\psi$ a linear expression. For each action $a$, the variables occurring in $\eff(a)$  to the left of the ``$\asseq$" symbol are said to be {\sl assigned by $a$}.
A numeric effect $x := \psi$ is said to be a {\sl linear increment} if $\psi = x + \psi'$ with $\psi'$ a linear expression not containing $x$. As standard, we write $(\psi \unrhd \psi')$ for $(\psi - \psi' \unrhd 0)$ and we abbreviate $x := x + \psi$ and $x := x - \psi$ with $x \pluseq \psi$ and $x \minuseq \psi$ respectively, $v = \top$ and $v = \bot$ with $v$ and $\neg v$, respectively.
From here on, $v, w, x$ represent variables and $\psi$ a numeric expression, each symbol possibly decorated with subscripts or superscripts.

An action $a$ is \textsl{executable} in a state $S$ if $S$ satisfies all the preconditions of $a$. Given an action $a$ executable in a state $S$, the {\sl result of executing} $a$ in $S$ is the state $S' = \res(a,S)$ such that, for each $v \in X$, $S'(v) = \top$ if $v := \top \in \eff(a)$, $S'(v) = \bot$ if $v := \bot \in \eff(a)$, $S'(v) = S(\psi)$ if $v := \psi \in \eff(a)$, and $S'(v) = S(v)$ otherwise. 

Consider a {\sl plan} $\pi$, defined as a finite sequence of actions $a_1;a_2;\ldots;a_{n}$ of {\sl length} $n \geq 0$. The \textsl{state sequence $S_0; S_1; \ldots; S_n$ induced by $\pi$ in $S_0$} is such that for $i \in \set{1,2,\dots, n}$, $S_{i}$ $(i)$ is undefined if either $a_{i}$ is not executable in $S_{i-1}$ or $S_{i-1}$ is undefined, and $(ii)$ is the result of executing $a_{i}$ in $S_{i-1}$ otherwise. We say that $\pi$ is \textsl{executable in a state $S$} if each state in the sequence induced by $\pi$ in $S$ is defined, and in such a case, the {\sl result of executing $\pi$ in $S$} is defined as
$\res(\pi,S) = \res(a_n,\ldots,\res(a_2,\res(a_1,S))\ldots)$.
If $\pi$ is executable in the initial state $I$ and the resulting state $\res(\pi,I)$ is a goal state, we say that $\pi$ is a {\sl valid plan}. 

\subsection{Numeric Planning as Satisfiability with Patterns}
\label{subsec:planning-as-sat-patterns}

Let $\Pi = \langle X, A, I,G\rangle$ be a numeric planning problem. In Planning as Satisfiability \cite{DBLP:conf/ecai/KautzS92,DBLP:conf/aaai/KautzS96}, an \textsl{encoding $E$ of $\Pi$} is a tuple $\langle \mX, \mA, \mI(\mX), \mT(\mX,\mA,\mX'), \mG(\mX)\rangle$ where $\mX$ is a finite set of propositional and numeric \textsl{state variables} including $X$; $\mA$ is a finite set of \textsl{ action variables}, each one with the set of values it can take;  $\mI(\mX)$ is the \textsl{initial state formula} in the set $\mX$ of variables, defined as
$$ 
\bigwedge_{v: I(v) = \top} v  \wedge \bigwedge_{w: I(w) = \bot} \neg w  \wedge \bigwedge_{x, k: I(x) = k} x = k;
$$
while $\mG(\mX)$ is the \textsl{goal formula} in the set $\mX$ of variables, obtained by making the conjunction of the goals in $G$, once each $v = \top$ and $v = \bot$ are substituted with $v$ and $\neg v$, respectively. 
The valid transitions between states correspond to the models of $\mT(\mX,\mA,\mX')$, the
\textsl{symbolic transition relation}, a formula in the variables $\mX \cup \mA \cup \mX'$, where 
 $\mX'$ is the set of {\sl next state variables} consisting of a new variable $v'$ for each variable $v \in\mX$. 
 
 In \spp, the definition of the symbolic transition relation starts by fixing a {\sl pattern (of length $k \ge 0$)}, defined as a finite sequence $a_1; a_2; \ldots; a_k$ of actions in $A$. The empty pattern, obtained for  $k=0$, is  denoted with $\epsilon$. Consider a pattern $\pattern = a_1; a_2; \ldots; a_k$, $k \ge 0$. The basic idea of \spp is to define the value of each state variable in the state resulting from the execution of each action in $\pattern$ for $0$ or more times, as a function of both the state in which the execution is started and of the pattern $\pattern$. Notice that, by definition, the pattern can contain multiple, even consecutive, occurrences of the same action $a$. However, each repeated occurrence is treated as a different copy of the action. With such assumption, with $a_i$ we denote both the $i$-th action in the pattern and the corresponding action variable in the encoding. Then, in the {\sl pattern  $\pattern$-encoding of $\Pi$}, 
\begin{enumerate}
    \item The set $\mX$ of state variables  is $X$, i.e., the set of propositional and numeric variables of $\Pi$,
    \item The set $\mA^\pattern$ of action variables is defined as $\set{a_1, a_2, \ldots, a_k}$, with one variable per action occurrence in the pattern~$\pattern$. Each variable $a_i$ ranges over the non-negative integers. Intuitively, the value of $a_i$ represents the number of times that the action is executed consecutively, following the sequential execution of each action in $\set{a_1,a_2,\ldots,a_{i-1}}$, each also executed zero or more times. Thus, in \spp, depending on the fixed pattern $\pattern$, we have a different set  $\mA^\pattern$ of action variables. 
\end{enumerate}

The value taken by $v \in X$ after the sequential execution  of each action occurrence $a_i$ in~$\pattern$ for a number $\ge~0$ of consecutive times, is given by $\sigma_i(v)$, inductively defined as $\sigma_0(v) = v$, and for each $i \in\set{1,2,\ldots,k}$
\begin{enumerate}
    \item $\sigma_i(v) = \sigma_{i-1}(v)$ if $v$ is not assigned by $a_i$,
    \item $\sigma_i(v) = (\sigma_{i-1}(v) \vee a_i > 0)$ if $v := \top \in \eff(a_i)$,
    \item $\sigma_i(v) = (\sigma_{i-1}(v) \wedge a_i = 0)$ if $v := \bot \in \eff(a_i)$,
    \item $\sigma_i(v) = (\sigma_{i-1}(v) + a_i \times \sigma_{i-1}(\psi))$ if $v \pluseq \psi \in \eff(a_i)$ is a linear increment, where $\sigma_{i-1}(\psi)$ is the expression obtained from $\psi$ after each variable $x \in V_N$ has been replaced with $\sigma_{i-1}(x)$,
    \item $\sigma_i(v) = \ite(a_i > 0, \sigma_{i-1}(\psi), \sigma_{i-1}(v))$ if $v := \psi \in \eff(a_i)$ is not a linear increment.
\end{enumerate}
The term $\ite(c,t,e)$ for ``\textit{If (c) Then $t$ Else $e$}"
returns $t$ or $e$ depending on whether the condition $c$ is true or not, and is part of the standard language supported by \smt solvers.
The symbolic transition relation of the $\pattern$-encoding --which defines the value of the variables in $\mX'$ on the basis of the values of the  variables in $\mX$ and in $\mA^\pattern$-- is denoted with $\mT^\pattern(\mX, \mA^\pattern, \mX')$, and is defined as the conjunction of the formulas in the sets:
\begin{enumerate}
\item $\op{pre}^\pattern(A)$, which contains, for each action $a_i$ in $\pattern$ and for each $v = \top$ and $w = \bot$ in $\op{pre}(a_i)$,
    $$
    a_i > 0 \implies \sigma_{i-1}(v), \qquad a_i > 0 \implies \neg \sigma_{i-1}(w),
    $$
    and for each $\psi \unrhd 0$ in $\op{pre}(a_i)$,
    $$
        a_i > 0 \implies \sigma_{i-1}(\psi) \unrhd 0, \qquad
    a_i > 1 \implies \sigma_{i-1}(\psi[a_i]) \unrhd 0,
    $$
where $\psi[a_i]$ is the linear expression obtained from $\psi$ by substituting each variable $x \in V_N$ with 
    \begin{enumerate}
        \item $x + (a_i-1) \times \psi'$, whenever $x \pluseq \psi'\in \eff(a_i)$ is a linear increment,
        \item $\psi'$, if $x := \psi' \in \op{eff}(a_i)$ is not a linear increment,
        \item $x$, if $x$ is not assigned by $a_i$.
    \end{enumerate} 
    The above formulas ensure that the numeric precondition 
    $\psi \unrhd 0 \in \op{pre}(a_i)$  holds both in the first and in the last state in which $a_i$ is executed, and thus that  $\psi \unrhd 0$  holds also in all the intermediate states in which $a_i$ is consecutively executed, see \cite{Scala_Ramirez_Haslum_Thiebaux_2016_Rolling}.


\item 
    $\op{amo}^\pattern(A)$, which contains, for each action $a_i$ in $\pattern$  which is not eligible for rolling \cite{Scala_Ramirez_Haslum_Thiebaux_2016_Rolling} 
    \begin{equation*}
        a_i = 0 \vee a_i = 1.
    \end{equation*}   
    An action $a_i$ is {\sl eligible for rolling} if 
    \begin{enumerate}
        \item $v = \bot\in \op{pre}(a_i)$ (resp. $v = \top\in \op{pre}(a_i)$) implies $v \asseq \top\not\in \op{eff}(a_i)$ (resp. $v \asseq \bot\not\in \op{eff}(a_i)$), and
        \item all the numeric variables assigned by $a_i$ with a linear increment do not occur elsewhere in $\op{eff}(a_i)$, and 
        \item $a_i$ contains a linear increment.
    \end{enumerate} 
\item $\op{frame}^\pattern(X)$, consisting of, for each variable $v \in V_B$ and $x \in V_N$, 
\begin{equation*}\begin{array}{c}
v' \liff \sigma_k(v), \qquad x' = \sigma_k(x).
\end{array} \end{equation*}     
\end{enumerate}

\rem{
For the {\sl correctness} of the encoding, each model $\mu$ of $\mT(\mX,\mA,\mX')$
has to correspond to at least one sequence of actions $\alpha$ such that $(i)$ $\alpha$ is executable in the state  $S$ such that, for each variable $v \in X$, $S(v) = \mu(v)$; and $(ii)$ the last state induced by $\alpha$ executed in $S$ is the state $s'$ such that, for each variable $v \in X$, $s'(v) = \mu(v')$. 
For the {\sl completeness} of the encoding, for each state $S$ and action $a$ executable in $S$, if $s'$ is the state resulting from the execution of $a$ in $S$, then there must be a model $\mu$ of $\mT(\mX,\mA,\mX')$ such that for each variable $v \in X$, $S(v) = \mu(v)$ and $s'(v) = \mu(v')$.}

Given $\mT^\pattern(\mX, \mA^\pattern, \mX')$,
following the Planning as Satisfiability approach, 
\begin{enumerate}
    \item an integer $n \geq 0$ called {\sl bound} or {\sl number of steps} is fixed, 
    \item $n+1$ pairwise disjoint copies  $\mX_0,\mX_1,\ldots,\mX_n$ of the set  $\mX$ of state variables, and $n$ pairwise disjoint copies $\mA^\pattern_1,\mA^\pattern_2,\ldots,\mA^\pattern_{n}$ of the set $\mA^\pattern$ of action variables are made, and 
    \item the \textsl{$\pattern$-encoding of $\Pi$ with bound $n$} is defined to be the formula 
\begin{equation}\label{eq:enc-sat}
     \Pi^\pattern_n = \mI(\mX_0) \wedge \bigwedge_{i=1}^{n} \mT^\pattern(\mX_{i-1},\mA^\pattern_i,\mX_{i}) \wedge \mG(\mX_n),
\end{equation}
in which  $\mI(\mX_0)$ is the formula in the variables $\mX_0$ obtained by substituting each  variable $x \in \mX$ with $x_0 \in \mX_0$ in $\mI(\mX)$, and similarly for  $\mT^\pattern(\mX_{i-1},\mA^\pattern_i,\mX_{i})$ and $\mG(\mX_n)$. 
\end{enumerate}
Then, the satisfiability of $\Pi^\pattern_n$ is checked calling an {\smt} solver starting from $n=0$ and then incrementing $n$ until a model is found. 

\citet{DBLP:conf/aaai/CardelliniGM24} showed that for any pattern~$\pattern$ and bound $n$, any model of $\Pi^\pattern_n$ corresponds to a valid plan of $\Pi$ ({\sl correctness}), and that if $\Pi$ has a valid plan then for any complete pattern~$\pattern$ there exists a bound $n$ for which $\Pi^\pattern_n$ is satisfiable ({\sl completeness}).
A pattern is {\sl complete} (resp. {\sl elementary}) if it contains at least (resp. at most) one occurrence of each action in the set $A$ of actions of $\Pi$. 

\rem{
\begin{theorem}[\citeauthor{DBLP:conf/aaai/CardelliniGM24} \citeyear{DBLP:conf/aaai/CardelliniGM24}]\label{th:corr-compl-pattern}
	Let $\Pi$ be a numeric planning problem and $\pattern$ be a pattern. The pattern $\pattern$-encoding  is correct. If $\pattern$ is complete, then the pattern $\pattern$-encoding is complete.
\end{theorem}

The above theorem holds for any pattern
$\pattern$, not necessarily elementary or complete. 
When 
$\pattern$ is elementary and complete, \citeauthor{DBLP:conf/aaai/CardelliniGM24} also proved that the $\pattern$-encoding of $\Pi$
allows finding a valid plan with a bound $n$ that is possibly lower and never higher than the bound necessary when using $(i)$ the standard encoding based on effect and explanatory frame axioms (see, e.g., \cite{Leofante2020}), or $(ii)$ the \re{} encoding by \citet{DBLP:conf/ijcai/BofillEV17} when using $\pattern$ as total ordering on actions, or $(iii)$ the rolling encoding by \citet{Scala_Ramirez_Haslum_Thiebaux_2016_Rolling}.}

\subsection{Computing Patterns with Asymptotic Relaxed Planning Graphs}\label{subsec:arpg}

Let $\Pi = \langle X, A, I,G\rangle$ be a numeric planning problem. For selecting the pattern, \citeauthor{DBLP:conf/aaai/CardelliniGM24}~\citeyear{DBLP:conf/aaai/CardelliniGM24} exploited the 
 Asymptotic Relaxed Planning Graph (\arpg)  construction from \cite{Scala_Haslum_Thiebaux_Ramirez_2016_AIBR}, formally defined in the following. 
 A \textsl{relaxed state} $\hat{S}$ is
a function mapping each Boolean variable $v \in V_B$ to a subset of $\set{\bot, \top}$ and each numeric variable $x \in V_N$ to an interval 
$[\lb{x}, \ub{x}]$ with $\lb{x} \in \rational \cup \set{-\infty}$, $\ub{x} \in \rational \cup \set{+\infty}$, $\lb{x} \le \ub{x}$.  Intuitively, 
the set associated to each variable represents a superset of the values it can assume in the relaxed state. 
A relaxed state $\hat{S}$ can be extended to linear numeric expressions $\psi$ by defining  $\hat{S}(\psi)$ according to Moore's Interval Analysis \cite{moore2009introduction}:  
for  $x,y \in V_N$, $c \in \rational$,  $\hat{S}(x) = [\lb{x}, \ub{x}]$ and  $\hat{S}(y) = [\lb{y}, \ub{y}]$, we define:
\begin{gather*}
    \hat{S}(c) = [c, c], \qquad \hat{S}(x + c) = [\lb{x} + c, \ub{x} + c],\\
    \hat{S}(x + y) = [\lb{x} + \lb{y}, \ub{x} + \ub{y}], \qquad \hat{S}(x - y) = [\lb{x} - \ub{y}, \ub{x} - \lb{y}],\\
    \hat{S}(c \cdot x) = [\min(c \cdot \lb{x}, c \cdot \ub{x}), \max(c \cdot \lb{x}, c \cdot \ub{x})].
\end{gather*}
The \textsl{convex union} of two intervals $[\lb{x}, \ub{x}]$ and $[\lb{y}, \ub{y}]$ is denoted and computed as $[\lb{x}, \ub{x}] \sqcup [\lb{y}, \ub{y}] = [\min(\lb{x}, \lb{y}), \max(\ub{x}, \ub{y})]$.
The {\sl convex union of two relaxed states} $\hat{S}_1$ and $\hat{S}_2$ is the relaxed state $\hat{S} = \hat{S}_1 \sqcup \hat{S}_2$  such that
\begin{enumerate}
    \item for each $v \in V_B$, 
$\hat{S}(v) = \hat{S}_1(v) \cup \hat{S}_2(v)$, and 
\item for each $x \in V_n$, $\hat{S}(x) = \hat{S}_1(x) \sqcup \hat{S}_2(x)$. 
\end{enumerate}

Consider a relaxed state $\hat{S}$.
We say that $\hat{S}$ {\sl satisfies}
\begin{enumerate}
    \item {\sl a propositional condition} $v = \top$ (resp. $v = \bot$) if $\top \in \hat{S}(v)$ (resp. $\bot \in \hat{S}(v)$),
    \item {\sl a numeric condition} $\psi \unrhd 0$ if $\hat{S}(\psi) = [\lb{\psi}, \ub{\psi}]$ and $\ub{\psi} \unrhd 0$,
\end{enumerate}    
and is naturally extended to sets of conditions. An  action $a$ is {\sl  executable} in the relaxed state $\hat{S}$ if $\hat{S}$ satisfies the preconditions of $a$.
If $a$ is executable in $\hat{S}$, the {\sl result of executing $a$ in $\hat{S}$} is the relaxed state $\hat{S}' = \res(a, \hat{S})$ such that:
\begin{enumerate}
    \item For each $v \in V_B$, $(i)$ if $v := \top \in \eff(a)$ (resp. $v := \bot \in \eff(a)$), then $\hat{S}'(v) = \hat{S}(v) \cup \set{\top}$ (resp. $\hat{S}'(v) = \hat{S}(v) \cup \set{\bot}$), and $\hat{S}'(v) = \hat{S}(v)$ otherwise. 
    \item For each numeric variable $x \in V_N$,
\begin{enumerate}
    \item if $x := \psi \in \eff(a)$ is a general assignment, then $\hat{S}'(x) = \hat{S}(x) \sqcup \hat{S}(\psi)$,
    \item if $x \pluseq \psi \in \eff(a)$ is a linear increment, then $\hat{S}'(x) = [\lb{x}',\ub{x}']$, where, given $\hat{S}(x) = [\lb{x},\ub{x}]$, $\hat{S}(\psi) = [\lb{\psi},\ub{\psi}]$ and $\hat{S}(x + \psi) = [\lb{x+\psi},\ub{x+\psi}]$,
    $$
    \lb{x}' = \begin{cases}
        - \infty & \text{if } \lb{\psi} < 0 \text{ and $a$ is eligible for rolling},\\
        \lb{x + \psi} & \text{otherwise},
    \end{cases}
    $$
    $$
    \ub{x}' = \begin{cases}
        +\infty & \text{if } \ub{\psi} > 0 \text{ and $a$ is eligible for rolling},\\
        \ub{x+\psi}& \text{otherwise},
    \end{cases}
    $$
    \item otherwise, $\hat{S}'(x) = \hat{S}(x)$. 
\end{enumerate}
\end{enumerate}
Finally, let $A' \subseteq A$ be a set of actions, each executable in $\hat{S}$. The {\sl parallel execution} of the actions in $A'$ results in the relaxed state $\hat{S}$ if $A' = \emptyset$, or otherwise
$$\hat{S}' = \res(A', \hat{S}) = \bigsqcup_{a \in A'} \res(a,\hat{S}).$$

\newcommand{\supporters}[1]{\Call{Supp}{#1}}
\newcommand{\action}[1]{\Call{ActOfSupp}{#1}}
\begin{algorithm}[t]
\caption{\textsc{computePatternARPG}. \\
Input: a starting state $S$, a set $A$ of actions, a set $G$ of goal conditions. \\
Output: An elementary pattern or \textsc{failure} in case $G$ is not reachable from $S$ using the actions in $A$. \label{alg:arpg-complete}
}
\begin{algorithmic}[1]
\Function{$\textsc{computePatternARPG}$}{$S, A, G$}
    \State $\arpg \gets \Call{computeARPG}{S, A, G}$
    \State \Return \textsc{computePattern($\arpg$)} \label{arpg2pattern}
\EndFunction

\Function{$\textsc{computeARPG}$}{$S, A, G$}
    \State $\arpg \gets \epsilon$
    \State $\aleft \gets \supporters{A}$
    \State $\hat{S} \gets \Call{Relax}{S}$
    \While{(\textsc{True})}
        \State $\alayer \gets \set{a \mid a \in \aleft, \text{$\hat{S}$ satisfies $\pre(a)$}}$ 
        \If{($\alayer = \emptyset$)}
            \If{($\hat{S}$ does not satisfy $G$)}
            \State \Return \textsc{failure}
            \Else 
            \State \Return $\arpg$ \label{arpgreturn}
            \EndIf
        \EndIf
        \State $\hat{S} \gets res(\alayer, \hat{S})$ \label{alg:arpg-complete:res}
        \State $\alayer' \gets \set{\action{a} \mid a \in \alayer} \setminus \set{\action{a} \mid a \in \supporters{A} \setminus \aleft}$
        \State $\aleft \gets \aleft \setminus \alayer$
        \State $\arpg \gets \arpg;\alayer'$
    \EndWhile
\EndFunction
\end{algorithmic}
\end{algorithm}

To compute the resulting relaxed state when different actions have \textsl{interfering numeric effects} (i.e., $x := \psi$ and $y := \psi'$ with $y$ in $\psi$ and $x$ in $\psi'$, see \citet{Aldinger_Mattmüller_Göbelbecker_2015}), \citet{Scala_Haslum_Thiebaux_Ramirez_2016_AIBR} introduced a compilation of an action to its \textsl{supporters}. Let $a$ be an action, the set of supporters of $a$ is the set of actions
\begin{flalign*}
    \supporters{a} &= \set{\tuple{\pre(a) \cup \set{\psi > 0}, \set{x:= \psi}}, \tuple{\pre(a) \cup \set{-\psi > 0}, \set{x:= \psi}} \mid x := \psi \in \eff(a)}\\ 
    & \cup \set{\tuple{\pre(a), \set{x := \top \in \eff(a)} \cup \set{x := \bot \in \eff(a)}}}.
\end{flalign*}
If $a' \in \supporters{a}$, we denote by $\action{a'}$ the action $a$. If $A$ is a set of actions, then $\supporters{A} = \bigcup_{a \in A} \supporters{a}$

An \textsl{Asymptotic Relaxed Planning Graph} (\arpg) is a sequence $A_1;A_2;\dots;A_k$ of pairwise disjoint sets of actions, called {\sl layers}.  Let $\hat{S}_0$ be a relaxed state. Executing in parallel each set of actions in the sequence $A_1;A_2;\dots;A_k$, one after the other in $\hat{S}_0$ {\sl induces} the sequence of $k+1$ relaxed states $\hat{S}_0;\hat{S}_1;\dots;\hat{S}_k$ such that, for $i \in \set{1,2,\ldots,k}$,$\hat{S}_{i}$ $(i)$ is undefined if either $A_i$ is not executable in $\hat{S}_{i-1}$ or $\hat{S}_{i-1}$ is undefined, and $(ii)$ is the result of executing $A_i$ in $\hat{S}_{i-1}$ otherwise. 
Algorithm~\ref{alg:arpg-complete} shows the construction of the \arpg starting from a state $S$ using the actions in $A$ having $G$ as set of goal conditions.
The algorithm starts from the relaxed version $\hat{S} = \Call{Relax}{S}$ of the state $S$ which maps
\begin{enumerate}
    \item each Boolean variable $v$ to $\set{S(v)}$, and
\item each numeric variable $x$ to $[S(x),S(x)]$.
\end{enumerate} 
Then, the algorithm constructs the first layer of the \arpg by adding to $\alayer$ all the supporters of the actions in $A$ executable in $\hat{S}$. When all executable supporters are added to $\alayer$, the relaxed state is updated, and the actions corresponding to the supporters in $\alayer$ not selected at previous layers are added to the \arpg, until no further layers can be constructed. If the last relaxed state satisfies all the conditions in $G$, the pattern computed by \textsc{computePattern($\arpg$)} is returned (Line~\ref{arpg2pattern}), otherwise, a \textsc{failure} value signals that it is not possible to reach a goal state from $S$ using the actions in $A$.
The pattern returned by \textsc{computePattern($\arpg$)}
is obtained by extending the partial order defined by the \arpg to a total order. In more details, given an \arpg $A_1;A_2;\dots;A_k$ the corresponding pattern $\pattern$ is obtained  by ensuring that for any pair of actions $a$ and $a'$ with $a$ occurring before $a'$ in $\pattern$, $a \in A_i$ and $a' \in A_j$, we have either  $A_i = A_j$ or $A_i$ occurring before $A_j$ in the \arpg (i.e., $i \le j$). To uniquely characterize the returned pattern, we also assume that each pair of actions in the same layer $A_i$ of the \arpg are lexicographically ordered by \textsc{computePattern(\arpg)}.
On the other hand, if \textsc{computeARPG}($S,A,G$) returns \textsc{failure}, then $\Pi$ admits no valid plan \cite{Scala_Haslum_Thiebaux_Ramirez_2016_AIBR}.

Notice that each pattern returned by \textsc{computePatternARPG}$(S,A,G)$ is elementary but not necessarily complete.
Indeed, when the condition $\alayer = \emptyset$ is met and the algorithm exits, some actions may still remain in $\aleft$. These actions are not executable in any state reachable from $S$ via a plan executable in $S$. In particular, when $S = I$, such actions will never become executable and can be safely removed from the whole set $A$ of actions of $\Pi$, making the returned pattern complete. For this reason, if $\pattern = \textsc{computePatternARPG}(I,A,G)$ and the planning problem $\Pi$ admits a valid plan, then there exists a bound $n$ such that $\Pi^\pattern_n$ is satisfiable \cite{DBLP:conf/aaai/CardelliniGM24}. 

\section{Exploiting Search in Symbolic Numeric Planning with Patterns}\label{sec:pattyd}

Consider a numeric planning problem $\Pi = \langle X, A, I, G \rangle$. In this section, we begin by presenting a motivating example, while also introducing the key intuitions and ideas behind our proposed solutions (Subsection~\ref{subsec:pattyd-intuitions}). We then present a general procedure $\pattyd(\Pi)$ that incorporates these ideas, prove its correctness, and identify conditions under which it is also complete (Subsection~\ref{subsec:pattyd-procedure}). 

\subsection{Motivating Example and Intuitions}\label{subsec:pattyd-intuitions}
\newcommand{\At}{\mathit{at}}
\newcommand{\OnSale}{\mathit{onSale}}
\newcommand{\bought}{\mathit{bought}}
\newcommand{\cash}{\mathit{cash}}
\newcommand{\price}{\mathit{price}}
\newcommand{\travel}{\mathit{travel}}
\newcommand{\buy}{\mathit{buy}}
\newcommand{\sell}{\mathit{sell}}

In the market trader domain from the last \ipc~\cite{ipc2023}, the aim of the planning agent is to make money by buying, transporting and selling goods in different markets. We consider a simplified version of it, in which  we assume $(i)$ to have a single object that can be either bought or sold in a set $M$ of markets, $(ii)$ that moving between markets involves traveling through a set $L$ of locations (with $L$ a superset of $M$), and $(iii)$ that the aim of the agent is to have either more money or more objects or both. We can model this scenario in \textsc{pddl2.1} by introducing the state variables
\begin{enumerate}
\item $\At(loc) \in V_B$, representing the fact that the current location of the agent is $loc$, for each $loc \in L$,
\item $\bought \in V_N$, denoting the number of objects possessed,
\item $\cash \in V_N$, denoting the amount of money possessed,
\item $\price(mkt) \in V_N$, denoting the price at which an object can be bought or sold in the market~$mkt$, for each $mkt \in M$.
\end{enumerate}
Assuming locations are numbered from 1 to $m$, i.e., $L = \set{1, 2, \dots, m}$, that the only two markets are the first and the last location, i.e., $M = \set{1,m}$, and that it is possible to travel only from location $i$ to location $i+1$ and vice-versa, the actions of travelling from one location to the other, buying and selling an object in a market can be formalized as follows:
$$
\setlength{\arraycolsep}{0pt}
\begin{array}{rl}
    \travel(f,t) : \tuple{&\set{\At(f) = \top}, \set{\At(t) \asseq \top, \At(f)\asseq \bot}}, \\
    \buy(i) : \tuple{&\set{\At(i), 
    \cash \ge \price(i)},  \set{\bought \pluseq 1, \cash \minuseq \price(i)}}, \\ 
    \sell(i) : \tuple{&\set{\At(i), \bought \ge 1}, \set{\bought \minuseq 1, \cash \pluseq \price(i)}}.
\end{array}
$$
in which $i$ takes values in the set $\set{1,m}$, $f$ and $t$ in the set $\set{1,2,\ldots,m}$ with $|f - t| = 1$. If we further assume that in the initial state $I$ the following atomic formulas are satisfied:
\begin{equation}\label{eq:ex-init}
    \At(1),\quad \cash = 100,\quad \bought = 0,\quad \price(1) = 10,\quad \price(m) = 100;
\end{equation}
then the \arpg{} construction from the initial state, leads to the following pattern $\pattern$:
\begin{equation}
    \label{eq:ex-pattern}
\begin{array}{rl}
\pattern = &
\buy(1); \sell(1); \travel(1,2);\travel(2,1);\travel(2,3);\ldots; \\ 
& \travel(m-1,m-2); 
\travel(m-1,m); \buy(m);\sell(m);\travel(m,m-1).
\end{array}
\end{equation}
Then, in the $\pattern$-encoding of $\Pi$,
$\mathrm{pre}^\pattern(A)$ contains, for the actions $\travel(f,t)$, $\buy(m)$ and $\sell(m)$, formulas entailing ($f \in\set{2,3,\ldots,m}, t \in\set{1,2,\ldots,n-1}$)
$$
\begin{array}{c}
    \buy(1) > 0 \implies \At(1), \\ 
    \buy(1) > 0 \implies \cash 
    \ge \price(1) \times \buy(1), \\

    \sell(1) > 0 \implies \At(1), \\
    \sell(1) > 0 \implies \bought +  \buy(1) \ge 1, \\
    \sell(1) > 1 \implies \bought +  \buy(1) - (sell(1) - 1) \ge 1, \\

    \travel(1,2) > 0 \implies \At(1), \\
    \travel(f,f+1) > 0 \implies \At(f) \vee \travel(f-1,f) > 0, \\
    \travel(t+1,t) > 0 \implies \At(t+1) \vee \travel(t,t+1), \\

    \buy(m) > 0 \implies \At(m) \vee \travel(m-1,m) > 0, \\
    \buy(m) > 0 \implies \cash + \price(1) \times (\sell(1) - \buy(1)) \ge \price(m) \times \buy(m), \\
    
    \sell(m) > 0 \implies \At(m) \vee \travel(m-1,m) > 0, \\
    \sell(m) > 0 \implies \bought + \buy(1) - \sell(1) + \buy(m) \ge 1\\
    \sell(m) > 1 \implies \bought + \buy(1) - \sell(1) + \buy(m) - (sell(m) - 1) \ge 1.
\end{array}
$$
For each  $f,t \in\set{1,2,\ldots,m}$, $|f-t| = 1$, the action $\travel(f,t)$ is not eligible for rolling, and thus 
$$ \begin{array}{c}
     \travel(f,t) = 0 \vee
     \travel(f,t) = 1,
\end{array} $$ 
belongs to $\mathrm{amo}^\pattern(A)$.
Finally, the frame axioms in $\mathrm{frame}^\pattern(X)$, for each $i \in \set{2,3,\ldots,m-1}$,  entail:
$$ \begin{array}{c}
    \cash' = \cash + \price(1) \times (\sell(1) - \buy(1)) + \price(m) \times (\sell(m) - \buy(m)), \\
    \bought' = \bought + (\buy(1) - \sell(1)) + (\buy(m) - \sell(m)), \\
    \At'(1) \equiv (\At(1) \wedge  \travel(1,2) = 0) \vee \travel(2,1) > 0, \\
    \At'(i) \equiv ((\At(i) \vee \travel(i-1,i) > 0) \wedge \travel(i,i+1) = 0) \vee \travel(i+1,i) > 0, \\
    \At'(m) \equiv (\At(m) \vee \travel(m-1,m) > 0) \wedge \travel(m,m-1) = 0.
\end{array} $$ 
As the frame axioms make clear, the $\pattern$-encoding allows in a single state transition $(i)$ the multiple consecutive execution of the same action, as in the rolled-up encoding \cite{Scala_Ramirez_Haslum_Thiebaux_2016_Rolling}, and $(ii)$ the combination of multiple even contradictory effects on a same variable by different actions, as in the \re{} encoding \cite{DBLP:conf/ijcai/BofillEV17}. 

If the goal  is to be in a state with at least five objects and with 500 tokens i.e., 
\begin{equation}\label{eq:G1}
G = \set{\bought \ge 5, \cash = 500}
\end{equation}
then a possible plan is to buy 10 objects in the first market, travel to the last market and there sell 5 objects, i.e.,
\begin{equation}
    \label{eq:G1-plan}
\begin{array}{c}
\pi = \buy(1)^{10}; \travel(1,2);\travel(2,3);\ldots; \travel(m-1,m); \sell(m)^5, 
\end{array}
\end{equation}
where, for each action variable $a$ and natural number $i \ge 0$, $a^i$ denotes the sequence $a;a;\ldots;a$ of length $i$.
Luckily, such a plan corresponds to a model of the $\pattern$-encoding of $\Pi$ with $n=1$.
    However, if the goal is to have at least $50$ objects and 500 tokens, i.e., if 
    \begin{equation}\label{eq:G2}
G = \set{\bought \ge 50, \cash = 500},
    \end{equation}
 the $\pattern$-encoding of $\Pi$ requires a bound $n=m+1$. 
    This is because in this case 
\begin{enumerate}
    \item the shortest valid plan is to buy 10 objects in the first market and travel to the last market (as in the previous case), there sell all the 10 objects for 1000 tokens and then travel back to the first market in order to buy 50 objects, and
    \item returning to the initial location  necessitates a plan where $\travel(t+1,t)$ is executed before $\travel(t,t-1)$ for $t \in \set{2,3,\ldots,m-1}$, while their order in the pattern $\pattern$ of (\ref{eq:ex-pattern}) is the opposite.
\end{enumerate}
Thus, in the \spp approach proposed by \citet{DBLP:conf/aaai/CardelliniGM24}, the formula $\Pi^\pattern_n$ allowing to find a valid plan
\begin{enumerate}
    \item if the goal is (\ref{eq:G1}), has $n=1$, $2\times m + 2$ action variables ($2\times m - 2$ to move between the markets and $4$ to buy and sell in the first and last market), and $2 \times (m+4)$ state variables ($m$ for the location of the agent, $bought$, $cash$, and $2$ for the prices in the first and last market, copied in $\mX$ and $\mX'$), and
    \item if the goal is (\ref{eq:G2}), has $n=m+1$, $(m+1)\times (2\times m + 2)$ action variables and $(m+2)\times (m+4)$ state variables.
\end{enumerate}


The problem highlighted by the motivating example arises because $(i)$ an elementary and complete  pattern $\pattern$ is computed only once at the beginning, starting from the initial state, and $(ii)$ then exploited at every step $i \in \set{0,1,\ldots,n}$ in the $\pattern$-encoding of $\Pi$ with bound $n$ (the formula $\Pi^\pattern_n$). 
Indeed, such procedure does not take into account the fact that if 
at some step $i < n$ in the search we reach an intermediate state $P$ closer then $I$ to a goal state, it may be better to
\begin{enumerate}
    \item  recompute the pattern in $P$ and use the new one for the subsequent search, 
    \item  start the new search from $P$, and
    \item if starting from $I$, refine the pattern that allowed to reach $P$ by removing the actions which are not necessary to reach $P$ from $I$.
\end{enumerate}
Of course, independently from whether the starting state $S$ is set to $P$ or to $I$, we would like to use a pattern that allows to {\sl reach} a goal state from $S$, i.e., a pattern that
covers a plan leading to a goal state when executed from $S$.
A pattern $\pattern$ \textsl{covers a plan} $\pi$
if $\pattern$ is a supersequence of the pattern of $\pi$.
A sequence of actions 
$\pattern$ is the \textit{pattern of a plan} $\pi$
if 
$\pattern$ is obtained from 
$\pi$ by replacing consecutive occurrences of each action 
$a$ eligible for rolling with a single instance of $a$.
If a pattern $\pattern$ covers a valid plan, the formula  $\Pi^\pattern$ (i.e., $\Pi^\pattern_n$ with $n=1$) is satisfiable.

\begin{theorem}\label{th:plan-pattern}
    Let $\Pi$ be a numeric planning problem. 
    Let $\pattern$ be a pattern covering a valid plan.  Let $\Pi^\pattern$ be the formula (\ref{eq:enc-sat}) with $n=1$. $\Pi^\pattern$ is satisfiable. 
\end{theorem}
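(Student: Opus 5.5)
We build a model of $\Pi^\pattern = \mI(\mX_0) \wedge \mT^\pattern(\mX_0,\mA^\pattern_1,\mX_1) \wedge \mG(\mX_1)$ directly from the covered valid plan. Let $\pi$ be a valid plan covered by $\pattern = a_1;\ldots;a_k$. First compress $\pi$ into its pattern $\pi_p = b_1;\ldots;b_h$. Each $b_j$ stands for a maximal block of $c_j \ge 1$ consecutive occurrences of the same action in $\pi$, and $c_j = 1$ whenever $b_j$ is not eligible for rolling. Since $\pattern$ is a supersequence of $\pi_p$, fix an injective, order-preserving map $f:\set{1,\ldots,h}\to\set{1,\ldots,k}$ with $a_{f(j)} = b_j$. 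Define the assignment $\mu$ on the action variables by $\mu(a_{f(j)}) = c_j$, and $\mu(a_i)=0$ for every $i$ outside the image of $f$. Set $\mu(\mX_0)$ to $I$, and set $\mu(\mX_1)$ to $\res(\pi,I)$.

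The core of the proof is an invariant, shown by induction on $i \in \set{0,\ldots,k}$. Let $\pi_i$ be the prefix of $\pi$ made of the blocks $b_j$ with $f(j)\le i$. The invariant states that for every variable $v\in X$, the value of $\sigma_i(v)$ under $\mu$ (with $\mX$ read as $\mX_0$) equals $\res(\pi_i, I)(v)$; this prefix is executable because $\pi$ is. The base case $i=0$ is immediate from $\sigma_0(v)=v$. For the inductive step there are two cases.

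\emph{Case $\mu(a_i)=0$.} Every clause of the definition of $\sigma_i$ collapses to $\sigma_{i-1}(v)$: the disjunction with $a_i>0$ is false, the conjunction with $a_i=0$ is true, the increment is $0\times\ldots$, and the $\ite$ takes its else-branch. Since $\pi_i=\pi_{i-1}$, the invariant carries over.

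\emph{Case $\mu(a_i)=c\ge1$.} We must check that applying $a_i$ $c$ times to $\res(\pi_{i-1},I)$ gives the value that $\sigma_i$ prescribes. For $c=1$ this follows by reading off the definitions. For $c>1$, $a_i$ is eligible for rolling. Condition (b) then guarantees that each increment $x \pluseq \psi$ has $\psi$ free of the variables incremented by $a_i$. The expression $\psi$ could still depend on variables assigned non-incrementally, or on Boolean effects of $a_i$. This is the subtle point, and it is the same reasoning used in the rolling encoding of \citet{Scala_Ramirez_Haslum_Thiebaux_2016_Rolling}, which the pattern encoding inherits. We would appeal to it to conclude that the $c$ repetitions add exactly $c\times\sigma_{i-1}(\psi)$ and that repeated Boolean or $\ite$ assignments are idempotent.

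With the invariant in hand, we check each conjunct of $\mT^\pattern$.
\begin{enumerate}
\item $\op{frame}^\pattern(X)$ holds by the invariant at $i=k$, because $\pi_k=\pi$.
\item $\op{amo}^\pattern(A)$ holds because non-rollable actions receive value $c_j=1$ or $0$.
\item Preconditions with $a_i>0$ hold because $b_j$ is executable in $\res(\pi_{i-1},I)$, whose values coincide with $\sigma_{i-1}$.
\item The formula $a_i>1 \implies \sigma_{i-1}(\psi[a_i])\unrhd 0$ holds because $\sigma_{i-1}(\psi[a_i])$ equals the value of $\psi$ in the state before the last of the $c$ executions of $a_i$, where $a_i$ is executable since $\pi$ is executable.
\end{enumerate}
Finally, $\mI(\mX_0)$ holds by construction, and $\mG(\mX_1)$ holds because $\pi$ is valid.

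The main obstacle is the rolled case $c>1$ in the inductive step. We must show that the closed forms $\sigma_{i-1}(v)+a_i\times\sigma_{i-1}(\psi)$ and $\psi[a_i]$ correctly describe the state after $c$, respectively $c-1$, repetitions. This requires $\psi$ to stay constant across those repetitions, which rests on eligibility conditions (a) and (b). We would settle this first by a small auxiliary induction on $c$, and otherwise invoke the correctness and completeness results of the rolling and pattern encodings cited from \citet{DBLP:conf/aaai/CardelliniGM24}.
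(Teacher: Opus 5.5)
Your argument is the paper's proof unpacked. The paper goes from $\Pi^\pi_1$ (every occurrence set to $1$) to $\Pi^{\pattern_\pi}_1$ for the compressed pattern $\pattern_\pi$ of $\pi$, and then to the supersequence $\pattern$ by setting the extra occurrences to $0$. Your single assignment (block counts $c_j$ on the image of $f$, zeros elsewhere), checked through the invariant on $\sigma_i$, fuses these three steps, and your cases $c=1$, $c>1$ and $\mu(a_i)=0$ match them one by one.

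As in the paper's middle step, the rolled case $c>1$ really rests on the rolling semantics from the cited prior work. Conditions (a) and (b) as stated here are not enough to make your auxiliary induction on $c$ go through. They admit, for example, an action with effects $x \pluseq 1$ and $y := 2y$, whose two-fold execution is not described by the $\sigma$ terms.
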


\begin{proof}
Assume $\pi$ is a valid plan covered by $\pattern$.
First, considering $\pi$ as a pattern, the pattern $\pi$-encoding  of $\Pi$ with bound $n=1$ (i.e., $\Pi^\pi_1$) is satisfiable. Then, if $\pattern_\pi$ is the pattern of $\pi$,  $\Pi^{\pattern_\pi}_1$ is satisfiable. Finally, $\pattern$ is a supersequence of  $\pattern_\pi$, and thus $\Pi^{\pattern}=\Pi^{\pattern}_1$ is satisfiable.
\end{proof}

Of course, the challenge of finding a pattern that allows to reach a goal state, or, equivalently, that covers a valid plan is as complex as finding the plan itself. In practice, starting with an arbitrary pattern $\pattern$, we can iteratively extend and/or simplify it at each step, also checking whether the pattern allows reaching a state heuristically believed to be closer to a goal state. The pattern $\pattern$ can be arbitrarily extended and/or simplified, but, for effectiveness, this needs some care, since
\begin{enumerate}
    \item each newly introduced action in the pattern adds another variable to the encoding, thereby increasing the search space of the solver, and
    \item each removed action reduces the set of plans being covered, and possibly also the set of states that can be reached with it.
\end{enumerate}
Indeed, different patterns —even when one is merely a permutation of the other— usually cover different plans, and various strategies can be devised for selecting which patterns to try.
Starting from a pattern computed in the initial state, the general procedure --presented in the next Subsection-- iteratively extends and simplifies the previously used pattern, possibly at each iteration, till all the goals are satisfied. Such  procedure, compared to the \textsc{sota} procedure based on the  $\Pi^\pattern_n$ encoding in (\ref{eq:enc-sat}), incorporates the following three ideas:
\begin{enumerate}
    \item With patterns, in (\ref{eq:enc-sat}) we do not need the intermediate state variables $\mX_1,\ldots, \mX_{n-1}$. Given an initially computed pattern $\pattern_h$, we can check whether a goal state is reachable with $n$-steps by just concatenating $\pattern_h$     
    $n$ times, and have only two copies $\mX$ (for the initial state) and $\mX'$ (for the goal state) of the set of state variables.
    \item In the procedure outlined in the previous item, it is not necessary to keep the same $\pattern_h$ at each iteration.
    Given a pattern $\pattern_g$ allowing to reach a state $P$ which is believed to be closer to the goal --e.g., because   $P$ satisfies some new goal in $G$-- we can $(i)$ dynamically recompute the pattern $\pattern_h$, and then $(ii)$ iterate the procedure with pattern $\pattern = \pattern_g;\pattern_h$, i.e., the concatenation of $\pattern_g$ and $\pattern_h$, 
    exiting when all the goals in $G$ are satisfied.
    \item In the procedure outlined in the previous item,  given a pattern $\pattern_g$ allowing  to compute a plan $\pi$ leading to a state $P$ in which $\pattern_h$ is recomputed, at each iteration we can 
    \begin{enumerate}
        \item either start searching from $P$ with $\pattern_h$ and $\pattern_g = \epsilon$, 
        \item or search again from the initial state, possibly refining $\pattern_g$ by eliminating the actions which are not necessary to reach $P$. 
    \end{enumerate}
\end{enumerate}

In our example, considering the goal (\ref{eq:G2}),
starting from the initial state and the empty pattern $\pattern_g = \epsilon$ and with pattern $\pattern = \pattern_g; \pattern_h = \pattern_h$ as in (\ref{eq:ex-pattern}), we can 
\begin{enumerate}
    \item determine that we can reach a state $P$ satisfying the goal to have 500 tokens but with only 5 objects,  by travelling through all the locations up to the market $m$, and there selling 5 of the 10 objects bought in the first market, with the plan $\pi$ in (\ref{eq:G1-plan}), and
    \item assuming the computed plan is $\pi$ and that $P=res(\pi,I)$,
    recompute the pattern $\pattern_h$ with $P$ as initial state, which, with the \arpg construction, will be
    \begin{equation}
    \label{eq:ex-pattern2}
\begin{array}{rl}
\pattern_h = &
\buy(m); \sell(m); \travel(m,m-1); \travel(m-1,m-2);\\
& \travel(m-1,m); \ldots; 
\travel(2,1); 
\buy(1);\sell(1);\travel(1;2).%
\end{array}
\end{equation}
 and, finally,
\item 
using  pattern (\ref{eq:ex-pattern2}), determine that also the last goal $\bought \ge 50$ can be satisfied by selling the 5 objects still owned, traveling back to the first market and there buying 50 objects. This plan can be determined checking the satisfiability of
\begin{enumerate}
    \item either  the encoding of $\Pi$ with the pattern $\pattern_h$ and $P$ as starting state,
    \item or the encoding of $\Pi$ with the pattern $\pattern_g;\pattern_h$ and $I$ as starting state, where $\pattern_g$ is the pattern used to reach $P$ from $I$ computed in the previous iteration (i.e., the previously computed $\pattern_h$), 
    \item or --by exploiting the plan $\pi$ as pattern to reach $P$-- the encoding of $\Pi$ with the pattern $\pi;\pattern_h$ and $I$ as starting state.
\end{enumerate}
\end{enumerate}
Notice that up to this point we assumed that the pattern $\pattern_h$ used at each iteration is complete, since computed with \textsc{computePatternARPG}$(S, A, G)$. Indeed, completeness of $\pattern_h$ is not necessary, and we may try to see whether we can make some progress beyond $P$ with an incomplete pattern corresponding to an \arpg having only some of the actions in $A$, but whose last induced relaxed  state still satisfies the goals in $G$. In our example, if the goal is to have at least 5 objects and 50 tokens, i.e., 
\begin{equation}\label{eq:G3}
G = \set{\bought \ge 5, \cash = 50}
\end{equation}
starting from the initial state, we may consider the \arpg
\begin{equation}
\label{eq:inc-arpg}    
\begin{array}{c}
\set{\buy(1)}
\end{array}
\end{equation}
whose corresponding pattern
\begin{equation}
    \label{eq:inc-pattern-G1-plan}
\begin{array}{c}
\pattern_h = \buy(1)
\end{array}
\end{equation}
allows to find the valid plan consisting in just buying 5 objects, without resorting to any complete pattern with $2\times m + 2$ actions.


\subsection{Search in Symbolic Numeric Planning with Patterns}\label{subsec:pattyd-procedure}

As anticipated, we are going to present a procedure that, starting from a state $S$ (initially set to the initial state $I$) and with  pattern $\pattern$, tries to reach a new state $P$ closer to the goal, iterating until $P$ is a goal state. As a consequence,  we do not need variables representing intermediate reachable states, and we will use just the following sets of variables:
\begin{enumerate}
    \item $\mX$ for the set of state variables representing the state $S$ in which the search starts, 
    \item $\mA^\pattern$ for the set of action variables, containing one distinct variable per action occurrence in~$\pattern$,
    \item $\mX'$ for the set of state variables representing the condition on the resulting state.
\end{enumerate}

\begin{algorithm}[t]
\caption{\pattyd algorithm. \\
Input: a numeric planning problem~$\Pi= \langle X, A, I,G\rangle$.\\
Output: a valid plan for $\Pi$. 
}\label{alg:pattyd}
\begin{algorithmic}[1]
\Function{\pattyd}{$\Pi$} 
    \State $P \leftarrow S \leftarrow I$ \label{alg:d:S-init}
    \State $\pattern_P \leftarrow \pattern_g \leftarrow\epsilon$ \label{alg:d:pattern-g-init}
    \State $\mu_P \leftarrow \emptyset$ \label{alg:d:mu-G-init}
    \State $\pattern_h,{GF}_P \leftarrow \ptg(P,A,G)$ \label{alg:d:pattern-h-init}
    \State $n \leftarrow 0$ \label{alg:d:n-init}
    \While{(\textsc{True})}
        \State $n \leftarrow n+1$ \label{alg:d:n-inc}
        \State $\pattern \leftarrow \pattern_g; \pattern_h$ \label{alg:d:concat} 
        \State $\Pi^{\pattern}_{S,P} \gets \mS(\mX) \wedge \mT^{\pattern}(\mX,\mA^{\pattern},\mX') \wedge {\mGF}_P(\mX') \le \max(P({\mGF}_P(\mX)) - \varepsilon,0)$ \label{alg:d:pi}         
        \State $\mu \leftarrow \textsc{Solve}(\Pi^{\pattern}_{S,P})$ \label{alg:d:solve}
        \If{($\mu({\mGF}_P(\mX'))=0$)} \label{alg:d:plan-cond}
            \State \Return 
            $\ite(S=I,
            \textsc{getPlan}(\mu, \pattern),
            \textsc{getPlan}(\mu_P\cup\mu, \pattern_P;\pattern))$
            \label{alg:d:plan}
        \ElsIf{($\mu({\mGF}_P(\mX')) < P({\mGF}_P(\mX))$)} \label{alg:d:S-cond}
            \State $P \leftarrow res(\textsc{getPlan}(\mu, \pattern),S)$ \label{alg:d:P-get}
            \State $\pattern_P \leftarrow \ite(S=I,\pattern,\pattern_P;\pattern)$ \label{alg:d:patternP}
            \State $\mu_P \leftarrow \ite(S=I,\mu,\mu_P \cup \mu)$ \label{alg:d:muP}
            \State $S,\pattern_g \leftarrow \stp(I,P,\mu_P,\pattern_P)$ \label{alg:d:g-pattern}
            \State $\pattern_h, {GF}_P \leftarrow \ptg(P,A,G)$ \label{alg:d:h-pattern} 
        \Else
            \State $S,\pattern_g \leftarrow \stpn(I,P,\mu_P,\pattern_P,n)$ \label{alg:d:g-n-pattern}
            \State $\pattern_h \leftarrow \ptgn(S,P,A,G,n)$ \label{alg:d:h-n-pattern} 
        \EndIf
    \EndWhile
\EndFunction
\end{algorithmic}
\end{algorithm}

Algorithm~\ref{alg:pattyd} presents the procedure, dubbed \pattyd, exploiting the ideas informally presented in the previous subsection. 
In \pattyd, 
\begin{enumerate}
    \item $P$ denotes the computed {\sl intermediate state}, reachable from the initial state and, so far, the closest to satisfying all goals. At the beginning, $P$  is set to the initial state $I$.
    \item $S$ is the  state in which the search is started, at the beginning equal to $I$.
    \item $\pattern_P$ and $\pattern_g$ store a pattern 
    allowing to reach $P$ from $I$ and $S$ respectively. Both $\pattern_P$ and $\pattern_g$
    are set to the empty pattern at the beginning.
    \item $\mu_P$ is the assignment to the variables in $\pattern_P$ corresponding to a plan allowing to reach $P$ from $I$, initially set to the empty assignment.
    \item $\ptg(P,A, G)$ returns 
    \begin{enumerate}
        \item a non-empty pattern $\pattern_h$ computed from $P$ which (hopefully) should allow determining a new intermediate state $P'$ closer than $P$ to a goal state, and 
        \item a {\sl goal value function} $GF_P$ mapping each state to $\real^{\geq 0}$, meant to represent how close is the state to a goal state, and formally returning, for each state $P'$, 
    \begin{enumerate}
        \item the value $0$ if $P'$ is a goal state, and
        \item a positive value otherwise. 
    \end{enumerate} 
    We assume that each action occurrence in $\pattern_h$ is a fresh copy of an action in $A$, this to avoid conflicts with other copies of the action, possibly generated in some previous iteration.
    As we have already seen, the pattern construction can immediately reveal if the goal $G$ is not reachable from state $P$ using the actions in $A$ (which, in the initial call with $P=I$ amounts to determine that the problem $\Pi$ is not solvable). To simplify the algorithm, we omit the check for this case.
    \item $P({\mGF}_P(\mX))$ returns the value of the goal value function in state $P$. We write ${\mGF}_P(\mX)$ instead of $GF_P$ --and, more in general, we use the calligraphic style-- when explicitly indicating the set of variables --in this case $\mX$-- on which the expression depends.
    \item $n$ is the number of iterations, initially set to $0$.
    \item For any state $S$, $\mS(\mX)$ is the formula imposing that $S$ is the starting state, i.e.,
    $$ 
    \mS(\mX) = \bigwedge_{v: S(v) = \top} v  \wedge \bigwedge_{w: S(w) = \bot} \neg w  \wedge \bigwedge_{x, k: S(x) = k} x = k.
    $$
    \item ${\mGF}_P(\mX')$ is obtained from ${\mGF}_P(\mX)$ by replacing each state variable $x \in \mX$ with its corresponding next-state variable $x' \in \mX'$. If $\mu$ is an assignment to the variables in $\mX'$, $\mu({\mGF}_P(\mX'))$ returns the $\mu$ evaluation of ${\mGF}_P(\mX')$, i.e., the goal value of the state $P'$ such that, for each variable $x \in \mX$, $P'(x) = \mu(x')$.
    \item For any state $S$, pattern $\pattern$, and state $P$, the formula $\Pi^{\pattern}_{S,P}$ is defined as:
    \begin{equation}\label{eq:pi-s-pattern}
    \Pi^{\pattern}_{S,P} = \mS(\mX) \wedge \mT^{\pattern}(\mX, \mA^{\pattern}, \mX') \wedge {\mGF}_P(\mX')  \le \max(P({\mGF}_P(\mX)) - \varepsilon, 0) 
    \end{equation}
    This formula is satisfiable when there exists a state $P'$ that is reachable with the pattern $\pattern$, and whose associated goal value is either 0 or smaller than that of $P$ by at least a fixed constant $\varepsilon > 0$ that we assume  $< 1$. The constant $\varepsilon$ is introduced to preclude asymptotic behaviours with infinite chains of non-goal states, each having a strictly decreasing associated goal value.
    \item    
    $\textsc{Solve}(\Pi^{\pattern}_{S,P})$ calls an \smt solver which returns a model of the given formula if it is satisfiable, and an assignment 
    $\mu$ with $\mu({\mGF}_P(\mX')) \ge P({\mGF}_P(\mX))$, otherwise. Any assignment satisfying $\mS(\mX)$ satisfies such condition.
    \item $\textsc{getPlan}(\mu, \pattern)$ returns the sequence of actions $\pi$ ordered as in $\pattern$, each action $a$ repeated $\mu(a)$ times, and analogously $\textsc{getPlan}(\mu_P\cup\mu, \pattern_P;\pattern)$.
    \item  $\stp(I,P,\mu_P, \pattern_P)$ returns a pair representing the state $S$ and pattern $\pattern_g$ to be used at the next iteration. Admissible pairs are:
    \begin{enumerate}
    \item $S = P$ and $\pattern_g = \epsilon$: at the next iteration, the search starts from the state $P$ and $\pattern = \pattern_g;\pattern_h=\pattern_h$, or
    \item $S = I$ and $\pattern_g$ is a pattern  covering a plan allowing to reach the state $P$, typically a subsequence of $\pattern_P$, e.g., $\pattern_P$ itself.
    \end{enumerate}
    \item 
    $\stpn(I,P,\mu_P,\pattern_P,n)$ returns a pair representing the state $S$ and pattern $\pattern_g$ to be used at the next iteration. As before, admissible pairs are:
    \begin{enumerate}
    \item $S = P$ and $\pattern_g = \epsilon$, as in the call to $\stp(I, P, \mu_P, \pattern_P)$: in this case, the next iteration of the search starts from state $P$ with $\pattern = \pattern_h$; or
    \item $S = I$ and $\pattern_g$ is any pattern that covers a plan allowing to reach $P$.
    \end{enumerate}
    \item $\ptgn(S, P, A, G, n)$ returns a non-empty pattern $\pattern_h$. As before, we assume that each action occurrence in the returned pattern $\pattern_h$ is a fresh copy of an action in $A$. The selection of $\pattern_h$ at each iteration $n$ may follow different strategies for exploring the search space, taking into account the chosen starting state $S$, the closest-to-goal computed intermediate state $P$, the available actions $A$, and the goal conditions $G$.
    \end{enumerate}
\end{enumerate}
The definitions of the procedures $\stp(I,P,\mu_P,\pattern_P)$/$\stpn(I,P,\mu_P,\pattern_P,n)$ and the procedures $\ptg(P,A,G)$/$\ptgn(S,P,A,G,n)$ allow for the selection of a different starting state $S$ and patterns $\pattern_g$, $\pattern_h$ at each iteration, thereby yielding a general procedure that can be tailored to the domain at hand. Regardless of their specific definitions, in \pattyd, after the initialization of all the variables in Lines~\ref{alg:d:S-init}-\ref{alg:d:n-init}, we start iterating till a valid plan is found. At each iteration, after the counter $n$ is incremented (Line~\ref{alg:d:n-inc}),
\begin{enumerate}
     \item we set the  pattern $\pattern$ to  $\pattern_g;\pattern_h$ (Line~\ref{alg:d:concat}), and then check whether a state $P'$ with a lower than $P({\mGF}_P(\mX))$ goal value can be reached (Line~\ref{alg:d:solve}).
    \item If such a state $P'$ is a goal state, (condition $(\mu({\mGF}_P(\mX')) = 0)$ at Line~\ref{alg:d:plan-cond}) a plan is returned (Line~\ref{alg:d:plan}) and \pattyd{} terminates.
    \item If such a state $P'$ can be reached but it is not a goal state (condition $(\mu({\mGF}_P(\mX')) < P({\mGF}_P(\mX)))$ at Line~\ref{alg:d:S-cond}),
    we $(i)$ set $P$ to such state (Line~\ref{alg:d:P-get}); $(ii)$ set $\pattern_P$ and $\mu_P$ to be the pattern and assignment to the variables in $\pattern_P$ allowing to compute a plan for reaching the newly determined intermediate state $P$ from $I$ (Lines~\ref{alg:d:patternP}-\ref{alg:d:muP}); $(iii)$ determine the starting state $S$ and $\pattern_g$ to be used at the next iteration (Line~\ref{alg:d:g-pattern}); $(iv)$ recompute the pattern $\pattern_h$ to be used at the next iteration (Line~\ref{alg:d:h-pattern}); and 
    restart the loop,
    \item otherwise, a starting state $S$, pattern $\pattern_g$ and pattern $\pattern_h$,  are computed (Lines~\ref{alg:d:g-n-pattern}-\ref{alg:d:h-n-pattern}) and the loop is iterated once more.
\end{enumerate}


\pattyd{} correctness relies on the correctness of the encoding (for any pattern $\pattern$, each model of the pattern $\pattern$-encoding $\Pi^{\pattern}$ of $\Pi$ corresponds to a valid plan) and the given definitions. Its completeness critically depends on which starting state $S$ and  pattern $\pattern$ are selected at each iteration in the definition of $\Pi^{\pattern}_{S,P}$ at Line~\ref{alg:d:pi}. Indeed, given a valid plan $\pi$ of length $k$, 
there exists an assignment $\mu$ satisfying $\Pi^{\pattern}_{S,P}$ such that $\mu({\mGF}_P(\mX')) = 0$ if
\begin{enumerate}
    \item the starting state $S$ is set to $I$, and
    \item $\pattern$ is a {\sl $k$-complete} pattern, i.e., $\pattern$ is a supersequence of a pattern obtained by concatenating $k$ complete patterns.
\end{enumerate}

\begin{theorem}\label{th:ncompl-pattern}
    Let $\Pi$ be a numeric planning problem having a valid plan of length $k$. Let $\pattern$ be a $k$-complete pattern. For any state $P$,
    $\Pi^{\pattern}_{I,P}$ is satisfiable.
\end{theorem}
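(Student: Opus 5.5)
The plan is to reduce the statement to Theorem~\ref{th:plan-pattern}, exhibiting a single model of $\Pi^{\pattern}_{I,P}$ that corresponds to a valid plan. Fix a valid plan $\pi = a_1;a_2;\ldots;a_k$ of length $k$. By definition of $k$-completeness, $\pattern$ is a supersequence of some $\pattern_1;\pattern_2;\ldots;\pattern_k$, with each $\pattern_j$ complete. Each $\pattern_j$ contains an occurrence of $a_j$, so $\pi$ itself, read as a sequence of actions, is a subsequence of $\pattern_1;\ldots;\pattern_k$ and hence of $\pattern$: pick $a_j$ inside block $\pattern_j$. The pattern of $\pi$ is obtained from $\pi$ by collapsing consecutive repetitions of rollable actions, so it is a subsequence of $\pi$ and hence also of $\pattern$. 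Therefore $\pattern$ covers the valid plan $\pi$.

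Next I would construct the model directly, rather than only citing Theorem~\ref{th:plan-pattern}, because $\Pi^{\pattern}_{I,P}$ differs from $\Pi^{\pattern}_1$ in the goal conjunct. Define $\mu$ on $\mA^{\pattern}$ by assigning $1$ to the chosen occurrence of $a_j$ in each block $\pattern_j$ and $0$ to every other action variable. Assign the variables in $\mX$ according to $I$, and those in $\mX'$ by the frame axioms. The argument underlying Theorem~\ref{th:plan-pattern} (correctness and completeness of the $\pattern$-encoding for a single step) gives the following. Executing the occurrences with value $1$ in order amounts to executing $\pi$ from $I$. The precondition formulas $\op{pre}^\pattern(A)$ are then satisfied, because each $\sigma_{i-1}$ evaluates to the state reached by the prefix of $\pi$ executed so far. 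The $\op{amo}^\pattern(A)$ constraints hold trivially, since all values are $0$ or $1$. Finally, $\mX'$ receives the values of $\res(\pi,I)$, which is a goal state. This yields a model of $\mI(\mX)\wedge \mT^{\pattern}(\mX,\mA^{\pattern},\mX')$. Since $S=I$, the conjunct $\mS(\mX)$ coincides with $\mI(\mX)$.

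It remains to check the last conjunct, ${\mGF}_P(\mX') \le \max(P({\mGF}_P(\mX)) - \varepsilon, 0)$. The state $\res(\pi,I)$ is a goal state, and $GF_P$ returns $0$ on goal states, so $\mu({\mGF}_P(\mX')) = 0$. The right-hand side is always $\ge 0$ because of the $\max$. The conjunct therefore holds for every $P$, and this is exactly why the statement is uniform in $P$.

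The main obstacle I expect is the middle step: justifying that the inductively defined $\sigma_i$ terms, evaluated under $\mu$, track the actual state sequence. This covers the skipped occurrences (value $0$ leaves $\sigma$ unchanged, by each case of the definition) and the executed ones (value $1$ reproduces $\res$). I would handle it by induction on the position $i$ in $\pattern$, citing the single-step correctness and completeness of the pattern encoding from \citet{DBLP:conf/aaai/CardelliniGM24}, which is also what underlies Theorem~\ref{th:plan-pattern}.
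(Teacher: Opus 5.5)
Your proof is correct and follows the paper's route. You show that $\pattern$ covers the valid plan (indeed $\pi$ itself embeds in $\pattern$, one action per complete block), obtain a model of the one-step encoding from $I$ that ends in a goal state (the paper cites Theorem~\ref{th:plan-pattern}, while you build the $0/1$ assignment directly), and note that ${\mGF}_P$ vanishes on goal states while the right-hand side is $\ge 0$, which is why the claim holds for every $P$. Since your direct $0/1$ embedding of $\pi$ already does all the work, the detour in your first paragraph through the rolled pattern of $\pi$ is not actually needed.
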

\begin{proof}
    A plan $\pi$ of length $k$ is a supersequence of the pattern of $\pi$ and is a subsequence of any $k$-complete pattern. Thus, $\pattern$ covers $\pi$ and $\Pi^\pattern = \Pi^\pattern_1$ is satisfiable by Thm.~\ref{th:plan-pattern}. Then, any model of $\Pi^\pattern$ satisfies also ${\mGF}_P(\mX') = 0$, thus also $\Pi^{\pattern}_{I,P}$ and the thesis follows.
\end{proof}

Thus, if at some iteration $n$ of $\pattyd(\Pi)$, the search starts from $S = I$ and the pattern~$\pattern$ is $k$-complete, then the models of $\Pi^{\pattern}_{S,P}$ include those corresponding to the valid plans of length $\leq k$.  
However, $\textsc{Solve}(\Pi^{\pattern}_{S,P})$ may not return any of these plans, since it is only required to find an assignment satisfying $\Pi^{\pattern}_{S,P}$, thus possibly returning a solution with $0 < \mu({\mGF}_P(\mX')) \le P({\mGF}_P(\mX)) - \varepsilon$. However, this behaviour is not possible if  $\pattyd(\Pi)$ is {\sl goal directed}, meaning that for any state $S$, pattern $\pattern$ and state $P$,
$\textsc{Solve}(\Pi^{\pattern}_{S,P})$ returns an assignment $(i)$ always
satisfying $\Pi^{\pattern}_{S,P}$, and $(ii)$ $\mG(\mX')$ whenever possible. How to make \pattyd goal-directed is dependent on the implementation of the solver and will be discussed in Section \ref{sec:impl-exper}.

\begin{theorem}\label{th:corr-compl} 
Let $\Pi$ be a numeric planning problem.
The procedure $\pattyd(\Pi)$ is correct. Moreover, if for every $k \geq 0$, there exists an iteration $n$ of $\pattyd(\Pi)$ such that the formula $\Pi^\pattern_{S,P}$ at Line~\ref{alg:d:pi} of $\pattyd(\Pi)$ is constructed with $S = I$ and a $k$-complete pattern $\pattern$, and
\begin{enumerate}
    \item either $\Pi^\pattern_{S,P}$ has $S = I$ and a $k$-complete $\pattern$ also for every iteration $m > n$, 
    \item or $\pattyd(\Pi)$ is goal directed, 
\end{enumerate}
then $\pattyd(\Pi)$ is also complete.
\end{theorem}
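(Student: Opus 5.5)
Correctness and completeness are handled separately; correctness follows by induction over the iterations of $\pattyd(\Pi)$. The invariant I would maintain at the start of each iteration is: $(a)$ $P$ is a state reachable from $I$, namely $P = res(\textsc{getPlan}(\mu_P,\pattern_P), I)$ with that plan executable in $I$; $(b)$ either $S=I$ and $\pattern_g$ covers a plan reaching $P$ from $I$, or $S=P$ and $\pattern_g=\epsilon$. Initially all of this holds trivially, since $P=S=I$ and $\pattern_P=\pattern_g=\epsilon$.

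For the inductive step I would rely on the correctness of the $\pattern$-encoding proved by \citet{DBLP:conf/aaai/CardelliniGM24}. Any model $\mu$ of $\mS(\mX)\wedge\mT^\pattern(\mX,\mA^\pattern,\mX')$ yields a plan $\textsc{getPlan}(\mu,\pattern)$ that is executable in $S$. The state it reaches is the one described by $\mX'$.

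Consider the branch at Line~\ref{alg:d:S-cond}. If $S=I$, the new $\pattern_P,\mu_P$ are $\pattern,\mu$, so $(a)$ holds. If $S=P$, the plan for $\mu_P\cup\mu$ over $\pattern_P;\pattern$ is the concatenation of the old plan reaching $P$ with a plan executable from $P$. That concatenation is executable from $I$ and reaches the new $P$; the fresh-copy assumption guarantees that the two assignments do not clash. Invariant $(b)$ then follows from the admissible outputs of $\stp$ and $\stpn$.

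When the procedure returns at Line~\ref{alg:d:plan}, $\mu(\mGF_P(\mX'))=0$. By the defining property of $GF_P$, the reached state is therefore a goal state. By the same concatenation argument, the returned plan is executable in $I$ and reaches that state, so it is valid.

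For completeness, assume $\Pi$ has a valid plan of length $k$. By hypothesis there is an iteration $n$ at which $S=I$ and $\pattern$ is $k$-complete. Theorem~\ref{th:ncompl-pattern} then gives that $\Pi^\pattern_{I,P}$ is satisfiable, and moreover by a model that also satisfies $\mG(\mX')$, as in that theorem's proof. Hence the formula has a goal-reaching model at that iteration.

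Case 2 (goal directed) is immediate: \textsc{Solve} must return an assignment satisfying $\mG(\mX')$, so $\mu(\mGF_P(\mX'))=0$ and the procedure returns at iteration $n$.

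Case 1 is where I expect the main obstacle. There the solver may return a non-goal model, but the value must still satisfy $\mu(\mGF_P(\mX'))\le P(\mGF_P(\mX))-\varepsilon$. The procedure therefore either terminates or enters the branch at Line~\ref{alg:d:S-cond} with a strictly smaller goal value. This is where termination has to be argued.

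Two difficulties arise. First, $GF_P$ is recomputed whenever $P$ changes, so the values are not obviously comparable across iterations. Second, I have to show that the sequence of iterations from $n$ onwards cannot go on forever. I would first try to argue that each satisfiable iteration lowers the current value $P(\mGF_P(\mX))$ by at least $\varepsilon$, or else lands on $0$, and that this value is a non-negative real bounded by its value at iteration $n$. If $GF_P$ can change arbitrarily, the natural fallback is this: every later iteration, being in case 1, again has a satisfiable formula by Theorem~\ref{th:ncompl-pattern}. So the loop never enters the else branch, and each iteration produces a new $P$ with a positive goal value dropping by at least $\varepsilon$ relative to the current function. I would make this yield termination by appealing to the way $\mGF_P$ is kept consistent (assumed fixed up to the $\varepsilon$-decrease). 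A sequence of non-negative values each decreasing by $\varepsilon$ must reach $0$ after at most $\lceil P_n(\mGF)/\varepsilon\rceil$ further iterations, and at that point the procedure returns a plan, which is valid by the correctness part.
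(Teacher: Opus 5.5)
Your route is the paper's. Correctness comes from the correctness of the $\pattern$-encoding; your invariant on $P,\mu_P,\pattern_P$ and the concatenation argument for the $S=P$ branch spell out what the paper states in one line. Completeness comes from Theorem~\ref{th:ncompl-pattern} at an iteration with $S=I$ and a $k$-complete pattern, and the goal-directed case closes at that very iteration, as in the paper.

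Case 1, however, is left unproved, and the obstacle you flag is real. The paper's proof disposes of it with the bound $\lceil\mu({\mGF}_P(\mX'))/\varepsilon\rceil$, which tacitly assumes the same comparability you end up postulating. But nothing in the specification of $\ptg$ relates ${\mGF}_{P'}$ to ${\mGF}_P$, and without such a relation termination can fail.

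For example, suppose $\Pi$ has a valid plan, $I$ is not a goal state, and two distinct non-goal states $Q_1,Q_2\neq I$ are each reachable from $I$ by one action. Let the $j$-th call of $\ptg$ return a $j$-complete $\pattern_h$ together with the function that is $0$ on goal states, $2$ on the current $P$, and $1$ on all remaining states. Let $\stp$ always return $S=I$. At every iteration the formula has a model reaching a state in $\{Q_1,Q_2\}$ other than $P$, since its value $1$ is at most $2-\varepsilon$. So every iteration succeeds, and a solver that is not goal directed may return $Q_1,Q_2,Q_1,\ldots$ forever, although all hypotheses of case 1 hold.

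You therefore need an explicit extra assumption, e.g.\ $P'({\mGF}_{P'}(\mX)) \le P'({\mGF}_P(\mX))$ whenever $P'$ becomes the new intermediate state. With it, the current values $P({\mGF}_P(\mX))$ are non-negative and drop by at least $\varepsilon$ at each success, and your count goes through. Both functions of Subsection~\ref{subsec:goal-value-function} satisfy this. (\ref{eq:gf-psi}) does not depend on $P$ at all. For (\ref{eq:gf-numgoal}), $P({\mGF}_P(\mX)) = |G\setminus G_P|$, an integer that decreases by at least one per success, so there are at most $|G|$ successes.
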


\begin{proof}
        Correctness follows from the correctness of the encoding. For completeness, assume there exists a valid plan $\pi$ of length $k$. Let $n$ be the first iteration in which  $\Pi^\pattern_{S,P}$ is constructed with $S = I$ and a $k$-complete pattern. Then, $\pi$ is a subsequence of $\pattern$ and thus there exists an assignment satisfying $\Pi^\pattern_{I,P}$ and all the goal conditions. Assuming condition 1 is satisfied, one such assignment exists also for every $m > n$, and, if $\mu$ is a model of $\Pi^\pattern_{I,P}$, 
        after at most  $\lceil\mu(\mGF_P(\mX'))/\varepsilon\rceil$ iterations, one assignment satisfying all the goal conditions will be computed and the corresponding valid plan returned. Assuming condition 2 is satisfied, at iteration $n$ one model of $\Pi^\pattern_{I,P}$ satisfying all the goal conditions will be computed and the corresponding valid plan returned.
\end{proof}



\section{Cautious, Brave, Reckless and Greedy Procedures}\label{sec:cautious-brave-reckless-greedy}
Let $\Pi = \tuple{X,A,I,G}$ be a numeric planning problem. Consider the procedure $\pattyd(\Pi)$ in Algorithm~\ref{alg:pattyd}, and let $P$ be the last computed intermediate state. The procedure can be customized in many ways, by defining, possibly at each iteration,
\begin{enumerate}
    \item 
    a function --in $\pattyd(\Pi)$ abstractly modelled with the goal value function ${\mGF}_P(\mX)$-- allowing us to decide when a state $P'$ is ``closer" than  $P$ to a goal state, and ultimately, which assignment $\mu$ has to be returned by $\textsc{Solve}(\Pi^{ \pattern}_{S,P})$.
    \item 
    how the starting state $S$ --and corresponding formula $\mS(\mX)$-- and patterns $\pattern_g$ and $\pattern_h$ are computed.
\end{enumerate} 
Indeed, we considered several combinations (some of which presented in the ablation studies), and here we present four specific configurations of $\pattyd$, dubbed cautious, brave, reckless, and greedy,  
\begin{enumerate}
    \item sharing how the model of $\Pi^\pattern_{S,P}$ is computed, and 
    \item differing on how the starting state $S$ and/or patterns $\pattern_g$ and/or $\pattern_h$ are defined at each iteration.
\end{enumerate}
Before presenting the cautious, brave, reckless and greedy  implementations of $\pattyd$ in Subsection~\ref{subsec:cautious-brave-reckless-greedy}, we first describe  
\begin{enumerate}
    \item the goal value function used to select the next intermediate state $P'$ closer than $P$ to a goal state, if any (Subsection~\ref{subsec:goal-value-function}),
    \item the functions used to compute the pattern $\pattern_g$ allowing to reach $P$ from the initial state $I$, given the already computed pattern $\pattern_P$ and assignment 
    $\mu_P$ to the variables in $\pattern_P$ (Subsection~\ref{subsec:patterng}), and
    \item the functions used to compute the pattern $\pattern_h$ in state $P$, hopefully allowing to reach a state $P'$ closer than $P$ to a goal state  (Subsection~\ref{subsec:patternh}).
\end{enumerate}
We end the section presenting the different behaviours of the four procedures on the motivating example (Subsection~\ref{subsec:proc-motivating-example}).

\subsection{Computing the Goal Value Function}\label{subsec:goal-value-function}

Assume the last computed intermediate state $P$ satisfies a proper subset $G_P$ of the set $G$ of goals and that it has an associated goal value $P(\mGF_P(\mX)) = c > 0$. Then, for any state $P'$, the goal value $P'({\mGF}_P(\mX))$ which has to be associated to $P'$, 
\begin{enumerate}
    \item has to be zero when $P'$ is a goal state, 
    \item otherwise, if $P'$ is considered ``closer" than $P$ to a goal state, it has to be in $[0,\max(c - \varepsilon, 0)]$ in order to satisfy the constraint 
    ${\mGF}_P(\mX')  \le \max(P({\mGF}_P(\mX)) - \varepsilon, 0)$ in (\ref{eq:pi-s-pattern}),
    \item otherwise, it has to be $> \max(c - \varepsilon, 0)$.
\end{enumerate}
Specifically, in our case we consider a state $P'$  {\sl closer} than $P$ to a goal state if and only if 
\begin{enumerate}
    \item $P'$ satisfies a strict superset of the goals satisfied by $P$, or
    \item $G$ consists of a single numeric goal $\psi \unrhd 0$ and $P'(\psi) \ge P(\psi) + \varepsilon$.
\end{enumerate}

In the presence of multiple goals, a goal value function $\mGF_P(\mX)$ allowing to characterize such closer states is:
\begin{equation}\label{eq:gf-numgoal}
{\mGF}_P(\mX) = |G \setminus G_P| \times \sum_{g \in G_P} \ite(g,0,1) + \sum_{g \in G \setminus G_P} \ite(g,0,1).
\end{equation}
Indeed, 
\begin{enumerate}
    \item if $P'$ is a goal state, $P'({\mGF}_P(\mX)) = 0$,
    \item if $P'$ satisfies a strict superset of $G_P$,  then $P'({\mGF}_P(\mX)) \le \max(P({\mGF}_P(\mX)) - \varepsilon,0)$ since
    $P'({\mGF}_P(\mX)) \le |G \setminus G_P| - 1 < |G \setminus G_P| - \varepsilon =  P({\mGF}_P(\mX)) - \varepsilon$, and clearly $\max(P({\mGF}_P(\mX)) - \varepsilon,0) = P({\mGF}_P(\mX)) - \varepsilon$,
    \item if $P'$ does not satisfy a strict superset of $G_P$,  $P'({\mGF}_P(\mX)) \ge |G \setminus G_P| =  P({\mGF}_P(\mX))$ and thus $P'({\mGF}_P(\mX)) > \max(P({\mGF}_P(\mX)) - \varepsilon,0)$.
\end{enumerate}

Conversely, --similar to the Manhattan Distance Heuristic proposed by \citet{DBLP:conf/socs/ChenT24}-- if $G$ consists of a single numeric goal $g = \psi \unrhd 0$, 
a goal value function $\mGF_P(\mX)$ allowing to characterize the  states closer to satisfying $G = \set{\psi\unrhd 0}$ is:
\begin{equation}
    \label{eq:gf-psi}
{\mGF}_P(\mX) = \ite(\psi \unrhd 0, 0, -\psi + \varepsilon).%
\footnote{We return the term $-\psi + \varepsilon$ instead of the simpler $-\psi$ to ensure that the returned value is $> 0$ also when $G = \set{\psi > 0}$ and we consider a non-goal state $P'$ with $P'(\psi) = 0$.}
\end{equation}
Indeed, 
\begin{enumerate}
    \item if $P'$ is a goal state, i.e., if $P'(\psi) \unrhd 0$, $P'({\mGF}_P(\mX)) = 0$,
    \item otherwise, if $P'(\psi) \ge P(\psi) + \varepsilon$ then
    $P'({\mGF}_P(\mX)) \le \max(P({\mGF}_P(\mX)) - \varepsilon,0)$
    since $P'({\mGF}_P(\mX)) = -P'(\psi) + \varepsilon \le -P(\psi) = P({\mGF}_P(\mX)) - \varepsilon = \max(P({\mGF}_P(\mX)) - \varepsilon,0)$ (the last equality holds since $P'(\psi) \le 0$ and thus $- P(\psi) \ge \varepsilon \ge 0$) ,
    \item otherwise, $P'(\psi) < P(\psi) + \varepsilon$, i.e., $-P'(\psi) > -P(\psi) - \varepsilon$, and thus
     $P'({\mGF}_P(\mX)) = -P'(\psi) + \varepsilon > -P(\psi) = \max(P({\mGF}_P(\mX)) - \varepsilon,0)$.
\end{enumerate}

Notice that in (\ref{eq:gf-numgoal}) the first addend is the one which enforces that the goal in $G_P$ have to be satisfied in $P'$, if
$P'$ has to be closer than $P$ to a goal state. Of course, a simpler and likely more effective way to obtain the same result is to simply add the formula $\bigwedge_{g \in G_P} g$ as conjunct to $\Pi^\pattern_{S,P}$. In the implementation section, we will detail how the features of the \smt solver may be exploited to enhance overall effectiveness.

\subsection{Computing the Pattern to Reach the Intermediate State}\label{subsec:patterng}


In $\pattyd(\Pi)$, at each iteration we may select whether to start the search from the state $S=I$ or $S=P$. In both cases, the pattern $\pattern_g$ is meant to make the state $P$ reachable from $S$, but while any pattern will do when $S=P$ (and in such a case we select the empty pattern), when $S=I$ we refine the pattern $\pattern_P$ by discarding those actions in it which are not necessary to reach the state $P$ from $I$. Given the assignment $\mu_P$ to the variables in $\pattern_P = a_1; a_2; \ldots; a_k$, by construction, the plan
\begin{equation}\label{eq:plan-pi}
\pi=a_1^{\mu_P(a_1)}; a_2^{\mu_P(a_2)}; \ldots; a_k^{\mu_P(a_k)}
\end{equation}
is such that $\res(\pi,I) = P$. Thus, a simple solution is to exploit the pattern of $\pi$, simply obtained by removing from $\pattern_P$ the actions $a_i$ with $\mu_P(a_i) = 0$ and then recursively substitute each two consecutive occurrences of the same action with a single one. Such a pattern can be computed in linear time and is a subsequence of $\pattern_P$. 

\subsection{Computing the Pattern to Progress Beyond the Intermediate State}\label{subsec:patternh}


\begin{algorithm}[t]
\caption{Incomplete \arpg algorithm. \\
Input: a numeric planning problem~$\Pi= \langle X, A, I,G\rangle$, a state $S$, and a parameter $p \in \natural$. \\
Output: An incomplete \arpg, i.e., a sequence of disjunct set of actions. \label{alg:incomplete-arpg}
}
\newcommand{\iarpg}{\textsc{iarpg}}
\begin{algorithmic}[1]
\Function{$\textsc{computeIncompletePatternARPG}$}{$S, A, G, p$}
    \State $\iarpg \gets \epsilon$
    \State $A_1;A_2;\dots;A_k \gets \Call{computeARPG}{S,A,G}$
    \State $\hat{S}_0 \gets \Call{Relax}{S}$
    \State $\hat{S}_0;\hat{S}_1;\ldots;\hat{S}_k \gets \Call{getRelaxedStates}{A_1;A_2;\ldots;A_k, \hat{S}_0}$
    \State $i \gets k$
    \State $G' \gets G$
    \While{($i > 0$)}
        \State $i \gets i - 1$
        \State $\Gamma \gets \set{g \mid g \in G', \hat{S}_{i+1}~\mbox{satisfies}~g} \setminus \set{g \mid g \in G', \hat{S}_{i}~\mbox{satisfies}~g}$\label{alg:iarpg-Gamma}
        \If{($\Gamma = \emptyset$)} 
            \State \textbf{continue}
        \EndIf
        \State $A_\Gamma \gets \emptyset$
        \For{($g \in \Gamma$)}\label{alg:iarpg-for}
            \State $p_g \gets 0$
            \For{($a \in A_i$)}
                \If{(($res(a, \hat{S}_i)~\mbox{satisfies}~g)$ \textbf{and} ($p_g < p$))}
                    \State $A_\Gamma \gets A_\Gamma \cup \set{a}$
                    \State $p_g \gets p_g + 1$ \label{alg:iarpg-pg}
                \EndIf
            \EndFor
        \EndFor
        \State $G' \gets \set{g \mid g \in G', \hat{S}_{i}~\mbox{satisfies}~g} \cup \bigcup_{a \in A_\Gamma} \pre(a)$ \label{alg:iarpg-G'}
        \State $\iarpg \gets A_\Gamma;\iarpg$
        
    \EndWhile
    \State \Return $\textsc{computePattern}(\iarpg)$
\EndFunction
\end{algorithmic}
\end{algorithm}

In Subsection~\ref{subsec:arpg} we presented the procedure used by \citet{DBLP:conf/aaai/CardelliniGM24} to compute a pattern using the \arpg construction starting from $P$. The returned pattern is ensured to include all the actions executable in some state reachable from $P$. While this might be a desirable property (e.g., to have a complete pattern), in some cases we may adopt a greedy approach in which only a subset of the actions are considered, still ensuring that the goal conditions are satisfied in the relaxed state obtained after executing the actions in the pattern, of course starting from $P$.

In more details, let $A_1; A_2; \ldots; A_k$ be an \arpg, $\hat{S}_0$ be the relaxed state corresponding to $P$,  $\hat{S}_0;\hat{S}_1;\dots;\hat{S}_k$ be the sequence of induced relaxed states by the \arpg from $\hat{S}_0$. 
If completeness of the computed pattern is not required, we can determine an incomplete pattern ensuring that each goal $g$ becomes satisfied in some relaxed state, by setting an upper bound $p \ge 1$ on the number of actions whose execution in the relaxed state $\hat{S}_i$ causes the goal to become satisfied in $\hat{S}_{i+1}$. This is the idea in Algorithm~\ref{alg:incomplete-arpg}, whose  $\textsc{computeIncompletePatternARPG}(S,A,G,p)$ function returns a possibly incomplete pattern, while still ensuring that in the last induced relaxed state  $\hat{S}_k$ all the goals are satisfied. 
In the algorithm,
\begin{enumerate}
    \item $\Call{computeARPG}{S,A,G}$ returns the $\arpg$ computed as in Algorithm~\ref{alg:arpg-complete},
    \item $\Call{getRelaxedStates}{A_1;A_2;\ldots;A_k,\hat{S}_0}$ computes the sequence of relaxed states induced by $A_1;A_2;\ldots;A_k$ in $\hat{S}_0$.
\end{enumerate}
The algorithm proceeds backward: it starts with $i = k-1$ in the first iteration and decrements $i$ at each step until $i > 0$. In each iteration, 
\begin{enumerate}
    \item the subset $\Gamma$ of goals that are relaxed satisfied in $\hat{S}_{i+1}$ and not in $\hat{S}_{i}$ is computed (Line~\ref{alg:iarpg-Gamma}), 
    \item for each $g \in \Gamma$, at most $p$ actions leading to a relaxed state satisfying $g$ are selected and returned (Lines~\ref{alg:iarpg-for}-\ref{alg:iarpg-pg}),
    \item the preconditions of the selected actions are added to the set of goal conditions that have to be relaxed satisfied in $\hat{S}_{i}$ (Line~\ref{alg:iarpg-G'}).
\end{enumerate}
In our motivating example, if $G$ is (\ref{eq:G3}),  $\textsc{computeIncompletePatternARPG}(I,A,G, p)$ with $p=1$  produces the  \arpg in (\ref{eq:inc-arpg}) and the corresponding incomplete pattern~(\ref{eq:inc-pattern-G1-plan}).

\subsection{Cautious, Brave, Reckless and Greedy Procedures}\label{subsec:cautious-brave-reckless-greedy}

Cautious, brave, reckless and greedy procedures correspond to specific implementations of the procedures setting the starting state $S$ and the patterns $\pattern_g$ and $\pattern_h$ at each iteration. 
To help the reader in following the differences between the procedures, in Figures \ref{fig:infographic:cautious}, \ref{fig:infographic:brave}, \ref{fig:infographic:reckless}, and \ref{fig:infographic:greedy} of Appendix \ref{appendix:infographic} we show the infographics of the cautious, brave, reckless and greedy procedures, respectively. We remind that all the four procedures use the same goal value function to select the next intermediate state $P'$, that $P$ is the last computed intermediate state, and that $\pattern = \pattern_g;\pattern_h$.
More in details, in \pattyd, the pair $S$, $\pattern_g$ and $\pattern_h$
are respectively defined at each iteration by either $\stp(I,P,\mu_P,\pattern_P)$ and $\ptg(P,A,G)$ or 
$\stpn(I,P,\mu_P,\pattern_P,n)$ and $\ptgn(S,P,A,G,n)$. 
Notice that the former (resp. latter) pair of functions is invoked only if  $\Pi^{\pattern}_{S,P}$ is satisfiable (resp. unsatisfiable), i.e., only if, in the considered iteration, we {\em succeeded} (resp. {\em failed}) in finding a new intermediate state. Then, we formally define the following four procedures:  
\begin{enumerate}
    \item {\sl Cautious $\patty_D$} (Fig. \ref{fig:infographic:cautious} in the appendix): The pair $S, \pattern_g$ returned by the procedures $\stp(I, P, \mu_P, \pattern_P)$ and $\stpn(I, P, \mu_P, \pattern_P, n)$ has $S = I$, and $\pattern_g$  obtained by removing the actions in $\pattern_P$ that are not necessary to reach $P$, following the procedure defined in Subsection~\ref{subsec:patterng}. The pattern $\pattern_h$ returned by $\ptg(P, A, G)$ is computed invoking  $\textsc{computePatternARPG}(P,A,G)$ algorithm presented in Subsection~\ref{subsec:arpg}. The  function $\ptgn(P, A, G, n)$ returns the pattern obtained by adding a complete pattern at the end of the 
    pattern $\pattern_h$ used in the previous iteration. More in details, the added pattern is the one computed by $\ptg(P, A, G)$, once modified to ensure its completeness, i.e., after replacing Line~\ref{arpg2pattern} in Algorithm~\ref{alg:arpg-complete} with
    \begin{equation}\label{eq:arpg2pattern-mod}
                \Return\ \ \  \textsc{computePattern($\arpg; \aleft$)}.
    \end{equation}
    Thus, after $k>0$ consecutive failures of determining a state $P'$ closer than $P$ to a goal state, the procedure $\ptgn(P, A, G, n)$ returns a  $(k+1)$-complete pattern $\pattern_h$ and sets $S=I$.
    
    \item {\sl Brave $\patty_D$} (Fig. \ref{fig:infographic:brave} in the appendix): Differently from the cautious procedure, now the pair $S, \pattern_g$ returned by  $\stp(I, P, \mu_P, \pattern_P)$ has $S=P$ and $\pattern_g = \epsilon$. The other three procedures
    $\stpn(I, P, \mu_P, \pattern_P, n)$, $\ptg(P,A,G)$ and $\ptgn(S,P,A,G,n)$ are the same as in the cautious case.
    Thus, after $k>0$ consecutive failures of determining a state $P'$ closer than $P$ to a goal state, $\ptgn(P, A, G, n)$ returns a  $(k+1)$-complete pattern $\pattern_h$ and sets $S=I$.
    
    \item {\sl Reckless $\patty_D$} (Fig. \ref{fig:infographic:reckless} in the appendix): The pair $S, \pattern_g$ returned by the procedure  $\stp(I, P, \mu_P, \pattern_P)$ has $S = P$, and $\pattern_g = \epsilon$, and the same holds also for $\stpn(I, P, \mu_P, \pattern_P, n)$. $\ptg(P,A,G)$ is the same one used in the cautious and brave cases.
    Differently from the previous two cases, since the search always starts from $P$ even in the case of (repeated) failures, $\ptgn(S,P,A,G,n)$ adds a  pattern at the end of the 
    $\pattern_h$ used in the previous iteration, this time  the pattern being added at each iteration is the one computed by $\ptg(P, A, G)$ (without the modification in (\ref{eq:arpg2pattern-mod})). Assuming the pattern computed by $\ptg(P, A, G)$ is complete, after $k>0$ consecutive failures, $\ptgn(P, A, G, n)$ returns a  $(k+1)$-complete~$\pattern_h$ and sets $S=P$.
    
    \item {\sl Greedy $\patty_D$} (Fig. \ref{fig:infographic:greedy} in the appendix): The pair $S, \pattern_g$ returned by the procedure $\stp(I, P, \mu_P, \pattern_P)$ has $S = P$ and $\pattern_g = \epsilon$, and the same holds also for $\stpn(I, P, \mu_P, \pattern_P, n)$, as in the reckless case. $\ptg(P, A, G)$ returns the pattern computed by \Call{$\textsc{computeIncompletePatternARPG}$}{$P, A, G, p$} with $p=1$, which is (possibly) incomplete .
    The procedure $\ptgn(P, A, G, n)$ uses the same mechanism as $\ptg(P, A, G)$ to compute $\pattern_h$, but with $p = 2, 4, 8, \ldots$ actions at the first, second, third, \ldots consecutive failure, until $\pattern_h$ is no longer extended. At that point, it continues to extend $\pattern_h$ at each failure by concatenating the pattern computed by $\ptg(P, A, G)$, as in the reckless case. Thus,  assuming the pattern computed by $\ptg(P, A, G)$ is complete, the function $\ptgn(P, A, G, n)$ returns a  $k$-complete pattern $\pattern_h$ after at most $(k + \lceil \log_2(|A|)\rceil$ consecutive calls.
\end{enumerate}

\begin{theorem}\label{th:cautious-brave} 
Let $\Pi$ be a numeric planning problem. Cautious, brave, reckless and greedy $\pattyd(\Pi)$ procedures are correct. The cautious and brave procedures are also complete.
\end{theorem}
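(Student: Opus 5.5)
Correctness of all four procedures is inherited from Theorem~\ref{th:corr-compl}: the cautious, brave, reckless and greedy variants are instances of $\pattyd(\Pi)$ and only fix how $S$, $\pattern_g$, $\pattern_h$ and ${GF}_P$ are chosen. So the only thing to check is that every choice they make is admissible for the general procedure. First, the goal value function of Subsection~\ref{subsec:goal-value-function} returns $0$ exactly on goal states and a positive value otherwise. Second, the pairs $S,\pattern_g$ are either $S=P$, $\pattern_g=\epsilon$, or $S=I$ with $\pattern_g$ the pattern of the plan $\pi$ in (\ref{eq:plan-pi}), which covers a plan reaching $P$. Third, every $\pattern_h$ is non-empty and uses fresh copies of actions.

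With these checks, each returned plan comes from a model of $\mT^\pattern$ with $\mu(\mGF_P(\mX'))=0$. When $S=P$, the assignment is combined with $\mu_P$ over $\pattern_P;\pattern$. By induction on the number of successful iterations, $\textsc{getPlan}(\mu_P,\pattern_P)$ is executable in $I$ and leads to $P$. Hence the concatenated plan is executable from $I$ and ends in a goal state.

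For completeness of the cautious and brave procedures, I would verify the hypotheses of Theorem~\ref{th:corr-compl}, aiming at condition~1. Fix $k$ and suppose a valid plan of length $k$ exists. The key count is the following. Successful iterations that do not terminate strictly decrease the goal value by at least $\varepsilon$ (by at least $1$ for (\ref{eq:gf-numgoal})). For the multi-goal function this can happen at most $|G|$ times. For the single numeric goal, it should be bounded by reachability, but the $\varepsilon$-decrease together with the fact that $\Pi^\pattern_{I,P}$ is satisfiable by the $k$-complete valid plan handles this once we are in the good regime.

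In both procedures, after each failure $\stpn$ sets $S=I$, and $\ptgn$ appends a complete pattern, so after $j$ consecutive failures $\pattern_h$ is $(j{+}1)$-complete with $S=I$. Since completeness is monotone under supersequences, $\pattern=\pattern_g;\pattern_h$ is then also $(j{+}1)$-complete.

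The main obstacle is that a success may reset $\pattern_h$ to a fresh, non-complete \textsc{computePatternARPG}$(P,A,G)$ pattern, and in the brave case it also sets $S=P$. This appears to break condition~1. I would argue it differently. Between two successes, consecutive failures eventually produce, with $S=I$, a $k$-complete pattern, which by Theorem~\ref{th:ncompl-pattern} makes $\Pi^\pattern_{I,P}$ satisfiable. So failures cannot continue forever, and a success must follow. Each success either terminates or strictly decreases the goal value of $P$ by at least $\varepsilon$. In the multi-goal case this bounds the number of successes by $|G|$. In the single-goal case I would use the same argument, since once $S=I$ and the pattern is $k$-complete the satisfying assignments include goal-reaching ones, and the value can drop by $\varepsilon$ only finitely often before reaching $0$: the bound is $\lceil P(\mGF_P)/\varepsilon\rceil$, as in the proof of Theorem~\ref{th:corr-compl}. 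This termination-by-descent argument gives completeness without needing goal-directedness. Pinning down this finiteness for the single numeric goal is the step I expect to require the most care.
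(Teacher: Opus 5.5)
Your proposal is correct and follows essentially the paper's own argument. Correctness comes from the correctness of the encoding. Completeness comes from combining two bounds: at most $\lceil I(\mathcal{GF}_I(\mathcal{X}))/\varepsilon\rceil$ non-terminating successes, because the goal value is non-negative and drops by at least $\varepsilon$ each time; and at most $k$ consecutive failures between successes, because after $k$ of them $S=I$ with a $k$-complete pattern, so $\Pi^{\prec}_{I,P}$ is satisfiable by Theorem~\ref{th:ncompl-pattern}.

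The single-numeric-goal finiteness you flag is immediate, since that goal value function does not depend on $P$ and is non-negative. Also, after $j$ failures only $j$-completeness (not $(j{+}1)$) is guaranteed, because the $\prec_h$ recomputed at a state $P\neq I$ may be incomplete; this does not affect the argument.
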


\begin{proof} 
    Correctness follows from the correctness of the encoding. For completeness, assume there exists a valid plan of length $n$. Consider the procedure $\pattyd(\Pi)$ in Algorithm~\ref{alg:pattyd} with $\mGF_P(\mX)$, $\stp(I,P,\mu_p,\pattern_P)$, $\stpn(I,P,\mu_p,\pattern_P,n)$ defined according to the cautious and brave specifications of $\pattyd(\Pi)$.
    
    Each time Line~\ref{alg:d:h-n-pattern} is executed, a complete pattern $\pattern_h$ is concatenated to the previously computed $\pattern_h$, while each time Line~\ref{alg:d:h-pattern} is executed the goal value of the newly computed state $P$ decreases at least by $\varepsilon > 0$. Thus, Line~\ref{alg:d:h-pattern} can be executed at most $\lceil I(\mGF_I(\mX))/\varepsilon\rceil$ times, each time after at most after $(n-1)$ iterations between two consecutive executions (since after $n$ consecutive executions of Line~\ref{alg:d:h-n-pattern}, $\pattern_h$ would be $n$-complete and all the goals in $G$ would be satisfiable). Thus, a valid plan will be found at most at the $(p+n)$-th iteration, where in the worst case $p = \lceil I(\mGF_I(\mX))/\varepsilon\rceil \times (n-1)$.
\end{proof}

Consider the cautious, brave, reckless, and greedy procedures, and assume that, at some iteration --not necessarily the same for all-- each has identified $P$ as an intermediate state, reachable from the initial state $I$ via a pattern $\pattern_g$. As already said, the  four procedures differ in how they define the formula $\Pi^\pattern_{S,P}$ to be used in the subsequent iterations, until a new state $P'$ is found that is closer to the goal than $P$.
If we consider the $i$-th iteration ($i \ge 1$) following the one in which $P$ was computed, then, from the definitions:
\begin{enumerate}
    \item For the greedy procedure, $\Pi^\pattern_{S,P} = \Pi^{\pattern_i}_{P,P}$, where $\pattern_i$ is a subpattern of the pattern $\pattern_h$ used by the cautious, brave, and reckless procedures.
    
    \item For the reckless procedure, $\Pi^\pattern_{S,P} = \Pi^{\pattern_h}_{P,P}$.
    
    \item For the brave procedure, $\Pi^\pattern_{S,P} = \Pi^{\pattern_h}_{P,P}$ if $i = 1$, and $\Pi^\pattern_{S,P} = \Pi^{\pattern_g;\pattern_h}_{I,P}$ otherwise.
    
    \item For the cautious procedure, $\Pi^\pattern_{S,P} = \Pi^{\pattern_g;\pattern_h}_{I,P}$.
\end{enumerate}
Further, if
\begin{enumerate}
    \item $g/r/b/c$ denote the number of iterations required by the greedy/reckless/brave/cautious procedures respectively, before finding a model of $\Pi^\pattern_{S,P}$; and
    \item since any model of $\Pi^{\pattern_i}_{P,P}$ can be extended to a model of $\Pi^{\pattern_h}_{P,P}$, and any model of $\Pi^{\pattern_h}_{P,P}$ can be extended to a model of $\Pi^{\pattern_g;\pattern_h}_{I,P}$,
\end{enumerate}
then $g \ge r \ge b \ge c$, i.e., the greedy (resp. cautious) procedure requires the most (resp. fewest) iterations before successfully finding a state $P'$ closer to the goal than $P$.
On the other hand, since $\pattern_i$ contains at most the same number of actions as $\pattern_h$, and $\pattern_h$ contains at most the same number of actions as $\pattern_g;\pattern_h$, we can expect each procedure to be able to determine the state $P'$ with fewer iterations and/or using formulas $\Pi^\pattern_{S,P}$ with fewer action variables, compared to each of the other three. Indeed, it can be expected that the higher the number of action variables in $\Pi^\pattern_{S,P}$, the more difficult it is to determine its satisfiability. For this reason, we can  expect each procedure to perform comparatively better than the others, at least on some problems or domains.

\subsection{Cautious, Brave, Reckless and Greedy Procedures on the Motivating Example}\label{subsec:proc-motivating-example}

Consider the problem $\Pi$ in the motivating example, in which initially the conditions in (\ref{eq:ex-init}) are satisfied. We now show that it is indeed the case that, depending on the goal, each procedure can find a valid plan with fewer iterations and/or using formulas $\Pi^\pattern_{S,P}$ with fewer action variables, compared to the other three.

We start with the simple goals to have at least 5 objects and 50 tokens, i.e., with $G$ as in~(\ref{eq:G3}). Then,
greedy/reckless/ brave/cautious $\pattyd(\Pi)$ find a valid plan with just one iteration, but
\begin{enumerate}
    \item greedy $\pattyd(\Pi)$ using the pattern $\pattern=\buy(1)$ and thus with 1 action variable in the corresponding formula $\Pi^\pattern_{I,I}$, while
    \item reckless/brave/cautious using the pattern $\pattern$ as in (\ref{eq:ex-pattern}), and thus with $(2\times m + 2)$ action variables in the corresponding formula $\Pi^\pattern_{I,I}$.
\end{enumerate}

Assume now the goal is to have at least 5 objects and 500 tokens, i.e., with $G$ as in~(\ref{eq:G1}). Then,
greedy/reckless/ brave/cautious $\pattyd(\Pi)$ with 2/1/1/1 iterations, since
\begin{enumerate}
    \item greedy $\pattyd(\Pi)$ will first try using the pattern $\pattern=\buy(1);\sell(1)$ and thus switch to the complete pattern (\ref{eq:ex-pattern}) used by the reckless/brave/cautious in their first and only iteration, while
    \item reckless/brave/cautious will just use the pattern $\pattern$ as in (\ref{eq:ex-pattern}) as in the previous case.
\end{enumerate}

Assume the goal is to have at least 50 objects and 500 tokens, i.e., with $G$ as in~(\ref{eq:G2}). Then,
greedy/reckless/brave/ cautious $\pattyd(\Pi)$ will reach it with 4/2/2/2 iterations, since
\begin{enumerate}
    \item greedy $\pattyd(\Pi)$ will first try using the pattern $\pattern=\buy(1);\sell(1)$ and then switch to the complete pattern (\ref{eq:ex-pattern}) used by the reckless/brave/cautious in their first iteration to get to the intermediate state $P$ satisfying $\At(m)$, $\bought = 5$ and $\cash=500$,
    \item in state $P$, greedy $\pattyd(\Pi)$ will first try using the pattern $\pattern=\buy(1)$ and then switch to the complete pattern (\ref{eq:ex-pattern2}) used by the reckless/brave/cautious in their second iteration to reach a goal state.
\end{enumerate}
The reckless, brave, and cautious procedures use the same formula in their first iteration. In the second iteration, however, both the reckless and brave procedures use $P$ as the starting state, while the cautious procedure, unnecessarily, resets the starting state to $I$. As a result, after refining the pattern used in the first iteration, the formula $\Pi^\pattern_{S,P}$ used by the cautious procedure  will contain $(m + 1 + 2 \times m + 2)$ action variables, more than the $(2 \times m + 2)$ action variables in the  formula used by the brave and reckless procedures.
    
Finally, assume the goal is to have at least 45 objects and 550 tokens, i.e.,  
$$
G = \set{\bought \ge 45, \cash = 550}.
$$
Then, reckless/brave/cautious $\pattyd(\Pi)$ will all first use the complete pattern (\ref{eq:ex-pattern}) to get to the intermediate state $P$ satisfying  $\At(m)$, $\bought = 0$ and $\cash=550$, obtained executing the plan
    $$
    \buy(1)^5;\travel(1,2);\travel(2,3); \ldots; \travel(m-1;m); \sell(m)^5.
    $$
In state $P$, the complete pattern (\ref{eq:ex-pattern2}) will be used by the reckless/brave/cautious, but only the cautious procedure can find the valid plan
    \begin{gather*}
    \buy(1)^{10};\travel(1,2);\travel(2,3); \ldots; \travel(m-1;m); \\
    \sell(m)^{10}; 
    \travel(m,m-1);\travel(m-1,m-2); \ldots; \travel(2;1); \buy(1)^{45}.        
    \end{gather*}
    at the second iteration.
    Indeed, only the cautious procedure will fruitfully start the second search from $I$, and thus will be able to correctly decide that 10 (and not 5) objects needs to be initially bought in the first market. The brave procedure will need a third iteration to set the starting state to $I$, and it will also consider the unnecessarily long pattern obtained by concatenating (\ref{eq:ex-pattern2}) once more. The reckless procedure, given its early commitment to buy 5 objects in the initial state and then move to the last market to sell them,  when starting from state $P$ needs $m$ iterations --each time chaining pattern (\ref{eq:ex-pattern2})-- to reach the goal state satisfying $\At(m), \bought=50, \cash=550$ (corresponding to the plan in which 
    the agent travels back to the first market, there buys 50 objects and then goes again in the last market to sell 5 of them). 
    The greedy procedure explores the search space analogously to the reckless procedure, except for additional calls (e.g., the initial call with $S = P = I$ and $\pattern = \buy(1);\sell(1)$) introduced by first trying some incomplete pattern.


\section{Implementation and Experimental Results}\label{sec:impl-exper}

Consider a planning problem $\Pi = \tuple{X,I,A,G}$. To have a goal directed behaviour of $\pattyd(\Pi)$, and, more in general, reduce the number of iterations and obtain  best overall performance, at each iteration (corresponding to a given value for the variables $S, P$ and $\pattern$), we would like $\textsc{Solve}(\Pi^{ \pattern}_{S,P})$ to return an assignment leading to a state $P'$ with is as close as possible to a goal state. Naturally, how to compute such state effectively $P'$ critically depends on the functionalities and performance of the underlying \smt{} solver. In our case, we use \texttt{Z3} version 4.12.2 --upgrading from version \texttt{v4.8.7} used by \citet{cardellini2024patterns}-- \cite{de2008z3}, which supports the specification of hard constraints expressed as assertions, soft constraints, and objective functions for minimization or maximization. These features, assuming the intermediate state $P$ already satisfied a subset $G_P \subseteq G$ of the goals, enable us to implement the following mechanism for selecting the state $P'$:
\begin{enumerate}
\item The formulas $\mS(\mX)$ and $\mT^\pattern(\mX, \mA^\pattern, \mX')$ are added as hard constraints (assertions), ensuring that the extracted plan has to be executable starting from $S$.
\item If $G$ consists of a single numeric goal $g = \psi \unrhd 0$, we exploit the goal value function $\mGF_P(\mX)$ defined as in (\ref{eq:gf-numgoal}) by adding
\begin{equation}
    \label{eq:GF-spec}
{\mGF}_P(\mX')  \le \max(P({\mGF}_P(\mX)) - \varepsilon, 0) 
\end{equation}
as hard constraint, and introducing
$\mGF_P(\mX')$ as an objective function that the solver has to minimize.
\item 
If $G$ does not consist of a single numeric condition, the additional assertion
$$    
\bigwedge_{g \in G_P} g(\mX') \;\wedge\; \bigvee_{g \in G \setminus G_P} g(\mX')
$$
is included to require that the resulting state $P'$ both preserves the goals already satisfied in $P$ and satisfies at least one additional goal in $G \setminus G_P$. Moreover, each goal in the set $G \setminus G_P$ of goals yet to be satisfied is added as a soft constraint, thereby instructing \texttt{z3} to attempt to satisfy as many of them as possible. With such additions, it is not necessary to add (\ref{eq:GF-spec}) as assertion with $\mGF_P(\mX)$ defined as in (\ref{eq:gf-numgoal}).
\end{enumerate}
$\pattyd(\Pi)$ is thus goal-directed and surely will return a valid plan whenever $\Pi$ has a valid plan of length $k$, a $k$-complete pattern $\pattern = \pattern_g;\pattern_h$ is generated, and the starting state $S$ is the initial state $I$. 

For the experiments, we considered the settings and 20 domains and 20 problems per domain used  in the Agile track of the last 2023 Numeric \ipc, and also the 20 problems in the line exchange domain introduced by \citet{DBLP:conf/aaai/CardelliniGM24}. Thus, each system had a time limit of $5$ minutes on an Intel Xeon Platinum $8000$ $3.1$GHz with 8 GB of RAM. 

\subsection{Cautious, Brave, Reckless and Greedy \pattyd vs \pattyo}

            \begin{table*}[tb]
            \centering
            \resizebox{\textwidth}{!}{
\begin{tabular}{|l||ccccc||ccccc||ccccc||ccccc||}
\hline
 & \multicolumn{5}{c||}{Solved (out of $20$)}&\multicolumn{5}{c||}{Time (s)}&\multicolumn{5}{c||}{\textsc{smt} calls}&\multicolumn{5}{c||}{$|\mathcal{A}^\prec|$}\\
Domain & $\mathrm{P}_\textsc{o}$&$\mathrm{P}_\textsc{c}$&$\mathrm{P}_\textsc{b}$&$\mathrm{P}_\textsc{r}$&$\mathrm{P}_\textsc{g}$&$\mathrm{P}_\textsc{o}$&$\mathrm{P}_\textsc{c}$&$\mathrm{P}_\textsc{b}$&$\mathrm{P}_\textsc{r}$&$\mathrm{P}_\textsc{g}$&$\mathrm{P}_\textsc{o}$&$\mathrm{P}_\textsc{c}$&$\mathrm{P}_\textsc{b}$&$\mathrm{P}_\textsc{r}$&$\mathrm{P}_\textsc{g}$&$\mathrm{P}_\textsc{o}$&$\mathrm{P}_\textsc{c}$&$\mathrm{P}_\textsc{b}$&$\mathrm{P}_\textsc{r}$&$\mathrm{P}_\textsc{g}$\\
\hline
\textsc{BlGrp} (S)&\textbf{20}&\textbf{20}&\textbf{20}&\textbf{20}&\textbf{20}&1.8&1.9&2.0&2.0&\textbf{1.7}&\textbf{1.0}&\textbf{1.0}&\textbf{1.0}&\textbf{1.0}&1.4&87&87&87&87&\textbf{26}\\
\textsc{Cnt} (S)&\textbf{20}&\textbf{20}&\textbf{20}&\textbf{20}&\textbf{20}&0.9&\textbf{0.8}&0.9&0.9&1.1&\textbf{1.0}&\textbf{1.0}&\textbf{1.0}&\textbf{1.0}&4.3&\textbf{45}&\textbf{45}&\textbf{45}&\textbf{45}&80\\
\textsc{Cnt} (L)&\textbf{20}&\textbf{20}&\textbf{20}&\textbf{20}&\textbf{20}&1.0&\textbf{0.9}&\textbf{0.9}&\textbf{0.9}&1.0&\textbf{1.0}&\textbf{1.0}&\textbf{1.0}&\textbf{1.0}&\textbf{1.0}&\textbf{46}&\textbf{46}&\textbf{46}&\textbf{46}&\textbf{46}\\
\textsc{Del} (S)&5&8&8&8&\textbf{9}&227.3&187.0&189.5&189.0&\textbf{171.8}&\textbf{2.2}&\textbf{2.2}&\textbf{2.2}&\textbf{2.2}&5.0&297&312&297&297&\textbf{6}\\
\textsc{Drn} (S)&3&\textbf{16}&\textbf{16}&13&\textbf{16}&255.3&110.8&99.9&110.8&\textbf{98.2}&\textbf{5.7}&12.0&12.0&12.7&9.0&\textbf{12}&31&37&23&23\\
\textsc{Exp} (S)&\textbf{4}&\textbf{4}&\textbf{4}&2&2&\textbf{243.2}&246.1&249.2&270.8&270.8&\textbf{4.5}&5.5&5.5&5.5&\textbf{4.5}&\textbf{96}&444&444&444&444\\
\textsc{Farm} (S)&\textbf{20}&\textbf{20}&\textbf{20}&\textbf{20}&17&0.9&\textbf{0.8}&0.9&\textbf{0.8}&45.7&\textbf{1.0}&\textbf{1.0}&\textbf{1.0}&\textbf{1.0}&6.6&22&22&22&22&\textbf{11}\\
\textsc{Farm} (L)&\textbf{20}&\textbf{20}&\textbf{20}&\textbf{20}&17&4.3&3.3&3.6&\textbf{3.2}&48.0&\textbf{1.0}&\textbf{1.0}&\textbf{1.0}&\textbf{1.0}&4.9&28&28&28&28&\textbf{11}\\
\textsc{HPwr} (S)&\textbf{20}&\textbf{20}&\textbf{20}&\textbf{20}&14&\textbf{17.2}&97.1&90.1&93.4&230.2&\textbf{1.0}&\textbf{1.0}&\textbf{1.0}&\textbf{1.0}&3.0&\textbf{146}&151&\textbf{146}&\textbf{146}&292\\
\textsc{Mrkt} (L)&-&2&5&5&\textbf{19}&-&294.8&279.2&279.8&\textbf{19.5}&-&\textbf{3.0}&\textbf{3.0}&\textbf{3.0}&21.5&-&100&90&90&\textbf{3}\\
\textsc{MPrime} (S)&12&\textbf{14}&\textbf{14}&\textbf{14}&10&133.6&\textbf{119.0}&128.7&124.3&154.9&\textbf{1.1}&1.3&1.3&1.3&2.9&653&666&666&664&\textbf{554}\\
\textsc{PathM} (S)&\textbf{20}&\textbf{20}&\textbf{20}&\textbf{20}&\textbf{20}&4.7&4.4&\textbf{4.0}&4.7&4.8&\textbf{1.0}&\textbf{1.0}&\textbf{1.0}&\textbf{1.0}&\textbf{1.0}&353&353&353&353&\textbf{77}\\
\textsc{PlWat} (S)&6&\textbf{20}&\textbf{20}&19&\textbf{20}&215.5&20.8&16.9&23.9&\textbf{10.7}&\textbf{8.2}&14.5&14.5&15.0&19.8&\textbf{32}&158&142&59&64\\
\textsc{Rvr} (S)&13&13&\textbf{14}&10&7&129.9&114.9&\textbf{113.6}&158.8&200.7&\textbf{1.4}&\textbf{1.4}&\textbf{1.4}&\textbf{1.4}&7.2&\textbf{309}&317&\textbf{309}&\textbf{309}&480\\
\textsc{Sail} (S)&\textbf{20}&\textbf{20}&\textbf{20}&\textbf{20}&\textbf{20}&1.4&1.1&\textbf{0.9}&1.0&\textbf{0.9}&\textbf{3.3}&\textbf{3.3}&\textbf{3.3}&\textbf{3.3}&7.2&31&44&31&31&\textbf{28}\\
\textsc{Sail} (L)&\textbf{20}&\textbf{20}&\textbf{20}&\textbf{20}&\textbf{20}&4.5&1.4&\textbf{0.8}&0.9&0.9&\textbf{1.4}&\textbf{1.4}&\textbf{1.4}&\textbf{1.4}&3.9&34&38&34&34&\textbf{12}\\
\textsc{Stlrs} (S)&\textbf{15}&\textbf{15}&\textbf{15}&\textbf{15}&\textbf{15}&85.5&83.5&83.7&82.8&\textbf{79.1}&\textbf{1.0}&\textbf{1.0}&\textbf{1.0}&\textbf{1.0}&1.4&576&576&576&576&\textbf{176}\\
\textsc{Sgr} (S)&\textbf{20}&\textbf{20}&\textbf{20}&15&15&\textbf{6.0}&12.2&14.9&81.3&78.4&\textbf{2.4}&4.4&4.4&4.6&4.6&\textbf{233}&481&487&454&449\\
\textsc{Tpp} (L)&2&3&3&\textbf{4}&3&270.4&259.3&255.6&\textbf{252.5}&264.6&2.5&\textbf{2.0}&\textbf{2.0}&\textbf{2.0}&5.5&\textbf{54}&61&\textbf{54}&\textbf{54}&108\\
\textsc{Zeno} (S)&\textbf{11}&\textbf{11}&\textbf{11}&\textbf{11}&\textbf{11}&146.5&136.8&\textbf{136.7}&137.1&137.8&\textbf{1.6}&\textbf{1.6}&\textbf{1.6}&\textbf{1.6}&4.6&\textbf{256}&264&\textbf{256}&\textbf{256}&437\\
\textsc{Line} (L)&\textbf{20}&\textbf{20}&\textbf{20}&\textbf{20}&\textbf{20}&\textbf{1.1}&1.6&1.5&1.4&1.4&\textbf{2.9}&6.7&6.7&7.0&7.8&\textbf{23}&82&81&67&65
\\\hline
\textit{Best}&\textbf{291}&\textbf{326}&\textbf{330}&\textbf{316}&\textbf{315}&\textbf{52}&\textbf{55}&\textbf{59}&\textbf{54}&\textbf{138}&\textbf{290}&\textbf{256}&\textbf{247}&\textbf{247}&\textbf{116}&\textbf{160}&\textbf{81}&\textbf{117}&\textbf{140}&\textbf{208}\\\hline

        \end{tabular}}
        \caption{Cautious, Brave, Reckless and Greedy \pattyd vs \pattyo. Planner names are abbreviated.}
        \label{tab:cbrg}
        \end{table*}

We first considered the cautious, brave, reckless and greedy versions of $\pattyd$, hereby abbreviated respectively with  $\pattyc$, $\pattyb$ and $\pattyb$ and $\pattyg$.
To assess the improvement against the previous approach based on patterns by \citet{DBLP:conf/aaai/CardelliniGM24}, we also include the original version of \patty, called \pattyo, implementing the planning as satisfiability approach presented in subsection~\ref{subsec:planning-as-sat-patterns}. Notice that
\pattyo was shown to have better performance on every single domain than  the Planning as Satisfiability planners $(i)$
 \textsc{OMTPlan}~\cite{Leofante2020},  $(ii)$ the system \re{} by \citet{DBLP:conf/aaai/CardelliniGM24} implementing  the relaxed-relaxed-$\exists$ encoding proposed by \citet{DBLP:conf/ijcai/BofillEV17}, and $(iii)$ \textsc{Springroll}~\cite{Scala_Ramirez_Haslum_Thiebaux_2016_Rolling}. \pattyo is thus the  \textsc{sota} among the symbolic numeric planners.
 
 Table~\ref{tab:cbrg} presents the results.
In the first three subtables of the table, we show: the name of the domain (subtable Domain); the number of solved problems (subtable Solved); the average time to find a solution, counting the time limit when the solution could not be found (subtable Time). In the last three subtables, we considered only the problems solved by all the  planners
and show
the average number of calls to the \smt solver (subtable \smt calls); the number of action variables in the pattern (subtable $|A^\pattern|$)  when a solution is found. 
The last row (``\textit{Best}'') reports on how many of the 420 considered problems, each planner obtained the best result, representing respectively, whether the problem was solved, the lowest time, the lowest number of \smt calls and the lowest number of variables in 
$A^\pattern$. Thus, for example, \pattyo was able to solve 291 problems, was the fastest planner on 52, had the lowest number of \smt calls on 290 and had the lowest number of action variables in the last call to the \smt solver on 160 of the 291 problems it solved.

Looking at the performance results, the first observation is that the solvers in the cautious/brave pair and those in the reckless/greedy pair of $\pattyd(\Pi)$ exhibit overall comparable performance, and that all the four versions of $\pattyd(\Pi)$ perform better than \pattyo. A closer look however, highlights the different performances of the solvers in different domains. If for each domain we rank solvers first on the number of solved problems and then on time required,
\begin{enumerate}
    \item \pattyo performs best on 4 domains
    (\textsc{Exp(S)},  \textsc{HPwr(S)}, \textsc{Sgr(S)}, \textsc{Line(L)}) 
    and worst on 10 domains
    (\textsc{Cnt(L)}, 
    (\textsc{Del(S)}, 
    \textsc{Drn(S)}, 
    \textsc{Mrkt(L)},  
    \textsc{PlWat(S)}, 
    \textsc{Sail(S)}, 
    \textsc{Sail(L)}, 
    \textsc{Stlrs(S)}, 
    \textsc{Tpp(L)}), 
    \textsc{Zeno(S)}),

    \item \pattyc performs best on 4 domains
    (\textsc{Cnt(S)}, \textsc{Cnt(L)}, \textsc{Farm(S)}, \textsc{MPrime(S)})
    and worst on 1 domain 
    (\textsc{Line(L)}),
    
    \item \pattyb performs best on 4 domains
    (\textsc{Cnt(L)}, \textsc{Path(M)}, \textsc{Sail(S)}, \textsc{Sail(L)})
    and worst on 1 domain
    (\textsc{BlGrp(S)}),
    
    \item \pattyr performs best on 4 domains
    (\textsc{Cnt(L)}, \textsc{Farm(S)}, \textsc{Tpp(L)})
    and worst on 3 domains
    (\textsc{BlGrp(S)}),
    \textsc{Exp(S)}),
    (\textsc{Sgr(S)}))

    \item \pattyg performs best on 7 domains
    (\textsc{BlGrp(S)}, \textsc{Del(S)},\textsc{Drn(S)}, \textsc{Mrkt(L)}, \textsc{PlWat(S)},
    \textsc{Sail(S)}, \textsc{Stlrs(S)})
    and worst on 9 domains
    (\textsc{Cnt(S)}, \textsc{Cnt(L)}, \textsc{Exp(S)}, \textsc{Farm(S)},
    \textsc{Farm(L)}, \textsc{HPwr(S)},
    \textsc{MPrime(S)}, \textsc{PathM(S)}, \textsc{Rvr(S)}. 
\end{enumerate}
Thus, $\pattyg$ is the planner with the most extreme performances in many domains, with its high number of positive results offset by an even higher number of negative results in other domains.
\pattyc and \pattyb have more balanced performances, leading to the best overall results. 
The possibly significant differences between the performances of the planners are best highlighted by the \textsc{Mrkt (L)} domain, where $\pattyg$ solves almost four times more problems than the others. Problems in this domain are characterized by $(i)$ a single numeric goal, and $(ii)$ a high number of executable plans in any possible starting state. In such cases, considering complete patterns to progress the search causes \texttt{z3} to get lost while attempting to find the plan that minimizes the given objective function. This is reflected by the low number of variables in $\mA^\pattern$ (column $|\mA^\pattern|$) and the high number of \smt calls (column ``\smt calls'') for \pattyg: For these problems, it is preferable to repeatedly solve many ``small'' optimization problems rather than a few ``large'' ones.
The results in the $|\mA^\pattern|$ and \smt calls subtables are in line with expectations. At the generic iteration $i$ of the procedures. each exploiting a pattern $\pattern = \pattern_g;\pattern_h$, we can expect:
\begin{enumerate}
\item \pattyo to use a longer pattern $\pattern$ than \pattyc, since in \pattyc the pattern $\pattern_g$ has been refined,
\item \pattyc to use a longer pattern $\pattern$ than \pattyb, since in \pattyb the pattern $\pattern_g$ can be empty,
\item \pattyb to use a longer pattern $\pattern$ than \pattyr, since in \pattyr the pattern $\pattern_g$ is always empty,
\item \pattyr to use a longer pattern $\pattern$ than \pattyg, since in \pattyg the pattern $\pattern_h$ can be incomplete.
\end{enumerate}
From the above, we can expect, on most domains, a correspondingly lower number of \smt calls for \pattyo/\pattyc/ \pattyb/\pattyr/\pattyg. Of course, there can be exceptions, since each two systems may employ different patterns in each iteration, e.g., because they compute a different intermediate state.

Overall, $\pattyc/\pattyb/\pattyr/\pattyg$ outperform the previous \textsc{sota} symbolic planner $\pattyo$, solving at least 24 additional problems --39 in the case of $\pattyb$, which achieves the highest number of solved problems. Notably, $\pattyc$ and $\pattyb$ solve at least as many problems as $\pattyo$ in every domain, and outperform it in 8 out of the 11 domains whose problems were not all solved by \pattyo.

            \begin{table*}[tb]
            \centering
            \resizebox{\textwidth}{!}{
\begin{tabular}{|l||cccccccc||cccccccc||}
\hline
 & \multicolumn{8}{c||}{Solved (out of $20$)}&\multicolumn{8}{c||}{Time (s)}\\
Domain & $\mathrm{P}_\textsc{c}$&$\mathrm{P}_\textsc{b}$&$\mathrm{P}_\textsc{r}$&$\mathrm{P}_\textsc{g}$&$\mathrm{EN}_\mathrm{CT}$&$\mathrm{EN}$&$\mathrm{NFD}$&$\mathrm{FF}$&$\mathrm{P}_\textsc{c}$&$\mathrm{P}_\textsc{b}$&$\mathrm{P}_\textsc{r}$&$\mathrm{P}_\textsc{g}$&$\mathrm{EN}_\mathrm{CT}$&$\mathrm{EN}$&$\mathrm{NFD}$&$\mathrm{FF}$\\
\hline
\textsc{BlGrp} (S)&\textbf{20}&\textbf{20}&\textbf{20}&\textbf{20}&14&16&-&2&1.9&2.0&2.0&\textbf{1.7}&117.2&81.5&-&270.2\\
\textsc{Cnt} (S)&\textbf{20}&\textbf{20}&\textbf{20}&\textbf{20}&10&12&11&15&\textbf{0.8}&0.9&0.9&1.1&163.8&133.8&149.8&95.7\\
\textsc{Cnt} (L)&\textbf{20}&\textbf{20}&\textbf{20}&\textbf{20}&12&10&6&8&\textbf{0.9}&\textbf{0.9}&\textbf{0.9}&1.0&142.3&170.9&214.0&180.0\\
\textsc{Del} (S)&8&8&8&9&14&13&9&\textbf{18}&187.0&189.5&189.0&171.8&117.1&121.7&165.2&\textbf{41.2}\\
\textsc{Drn} (S)&16&16&13&16&\textbf{18}&16&16&2&110.8&99.9&110.8&98.2&\textbf{55.4}&62.9&66.0&268.4\\
\textsc{Exp} (S)&4&4&2&2&\textbf{6}&\textbf{6}&3&-&246.1&249.2&270.8&270.8&224.0&\textbf{212.3}&253.7&-\\
\textsc{Farm} (S)&\textbf{20}&\textbf{20}&\textbf{20}&17&\textbf{20}&\textbf{20}&15&9&\textbf{0.8}&0.9&\textbf{0.8}&45.7&1.8&0.9&85.3&188.1\\
\textsc{Farm} (L)&\textbf{20}&\textbf{20}&\textbf{20}&17&\textbf{20}&18&11&15&3.3&3.6&3.2&48.0&\textbf{2.5}&48.6&151.2&80.5\\
\textsc{HPwr} (S)&\textbf{20}&\textbf{20}&\textbf{20}&14&\textbf{20}&2&1&1&97.1&90.1&93.4&230.2&\textbf{4.6}&270.3&285.1&285.0\\
\textsc{Mrkt} (L)&2&5&5&19&\textbf{20}&4&-&-&294.8&279.2&279.8&\textbf{19.5}&35.0&259.3&-&-\\
\textsc{MPrime} (S)&14&14&14&10&\textbf{17}&\textbf{17}&14&\textbf{17}&119.0&128.7&124.3&154.9&74.6&68.1&127.2&\textbf{45.1}\\
\textsc{PathM} (S)&\textbf{20}&\textbf{20}&\textbf{20}&\textbf{20}&3&2&1&10&4.4&\textbf{4.0}&4.7&4.8&262.8&272.2&284.2&154.9\\
\textsc{PlWat} (S)&\textbf{20}&\textbf{20}&19&\textbf{20}&\textbf{20}&16&14&3&20.8&16.9&23.9&\textbf{10.7}&41.1&101.3&167.2&268.3\\
\textsc{Rvr} (S)&13&\textbf{14}&10&7&12&8&4&10&114.9&\textbf{113.6}&158.8&200.7&143.7&197.4&240.8&133.3\\
\textsc{Sail} (S)&\textbf{20}&\textbf{20}&\textbf{20}&\textbf{20}&\textbf{20}&\textbf{20}&10&1&1.1&\textbf{0.9}&1.0&\textbf{0.9}&5.0&2.0&150.3&285.0\\
\textsc{Sail} (L)&\textbf{20}&\textbf{20}&\textbf{20}&\textbf{20}&2&2&15&8&1.4&\textbf{0.8}&0.9&0.9&270.8&270.6&96.8&182.8\\
\textsc{Stlrs} (S)&\textbf{15}&\textbf{15}&\textbf{15}&\textbf{15}&2&1&-&4&83.5&83.7&82.8&\textbf{79.1}&279.0&288.6&-&243.8\\
\textsc{Sgr} (S)&\textbf{20}&\textbf{20}&15&15&11&8&4&13&\textbf{12.2}&14.9&81.3&78.4&144.5&182.5&245.7&122.5\\
\textsc{Tpp} (L)&3&3&4&3&\textbf{7}&3&2&2&259.3&255.6&252.5&264.6&\textbf{212.3}&255.2&270.0&266.7\\
\textsc{Zeno} (S)&11&11&11&11&17&\textbf{19}&9&11&136.8&136.7&137.1&137.8&89.5&\textbf{28.1}&172.5&135.0\\
\textsc{Line} (L)&\textbf{20}&\textbf{20}&\textbf{20}&\textbf{20}&11&9&6&6&1.6&1.5&\textbf{1.4}&\textbf{1.4}&149.5&175.4&235.0&211.6
\\\hline
\textit{Best}&\textbf{326}&\textbf{330}&\textbf{316}&\textbf{315}&\textbf{276}&\textbf{222}&\textbf{151}&\textbf{155}&\textbf{35}&\textbf{38}&\textbf{37}&\textbf{93}&\textbf{34}&\textbf{21}&\textbf{36}&\textbf{90}\\\hline

        \end{tabular}}
        \caption{Comparative analysis between \pattyc, \pattyb, \pattyr and \pattyg and other publicly available search-based numeric planners.}
        \label{tab:search}
        \end{table*}

\subsection{Cautious, Brave, Reckless and Greedy \pattyd vs SOTA Planners}

As a second step, we compared \pattyc/\pattyb/\pattyr/\pattyg with the search-based planners
 \ENHSP with the configurations \texttt{sat-hadd}~\cite{Scala_Haslum_Thiébaux_Ramirez_2020}, \texttt{sat-aibr} \cite{Scala_Haslum_Thiebaux_Ramirez_2016_AIBR} and \texttt{sat-hmrphj}~\cite{scala2020search} considering for each problem its best resulting configuration,
 the very recent extension of the \ENHSP planner  introduced by \citet{DBLP:conf/socs/ChenT24} (that we call \ENHSPCT), \textsc{MetricFF} (\FF) \cite{hoffmann2003metric}, and 
 \textsc{NLM-CutPlan} (\NLM) in the {\texttt{sat} configuration which won the Agile track of that last \ipc} \cite{Kuroiwa_Shleyfman_Piacentini_Castro_Beck_2022}.%
\footnote{
For \ENHSP, \ENHSPCT~and \NLM, we got in touch with the authors about which configuration to use for their planners. See \url{https://ipc2023-numeric.github.io/results/presentation.pdf} for \ipc results.}
Table~\ref{tab:search} shows for each planner the number of problem it solves and the average time it takes, the latter computed as before. 
As it can be seen, \pattyc/\pattyb/\pattyr/\pattyg perform overall better on many domains
than each search-based planner being able to solve the highest number of problems in 13/14/11/9 domains respectively, compared to the 10  by \ENHSPCT, 5 by \ENHSP, 0 by \NLM, and 2 by \FF. 

\begin{figure}[t]
    \centering
    \includegraphics[width=\columnwidth]{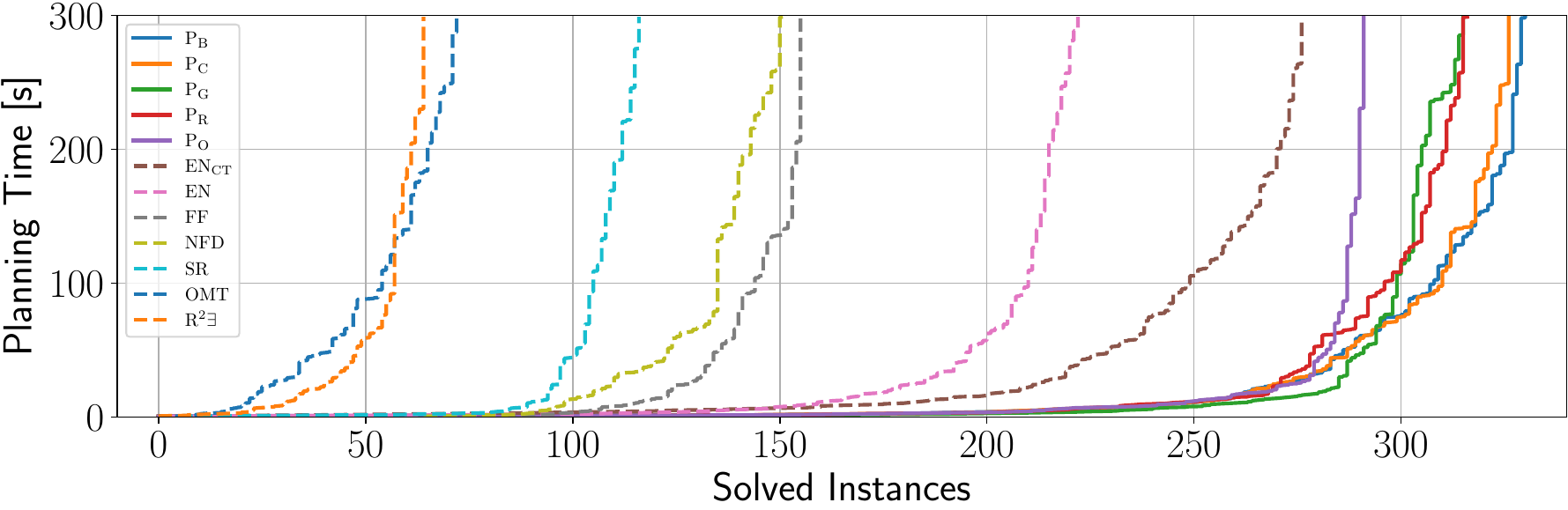}
    \caption{Number of problems solved ($x$-axis) within a given time ($y$-axis). Planner names are abbreviated. \spp planners are represented with  solid lines.}
    \label{fig:cactus-plot}
\end{figure}

The cactus plot in Figure~\ref{fig:cactus-plot} summarizes the performance of all the systems considered, showing how many problems each solves within a given time. As it can be seen, symbolic planners not based (resp. based) on patterns solve the fewest (resp. most). 
Clearly, different figures can be obtained by considering different domains/problems, especially if comparing symbolic vs search-based planners. Indeed, as also the above results point out,  depending on the domain, search-based planners may perform far better/worse than symbolic planners.
We also experimented with a 1800s time-limit, obtaining the same overall picture.


\section{Ablation Studies}\label{sec:ablation}

Consider a planning problem $\Pi$ and let $S$, $P$ and $\pattern = \pattern_g;\pattern_h$ the last computed starting state, intermediate state and pattern, respectively. To better understand what are the main reasons behind the positive performance of the various versions of \pattyd, we performed ablation studies varying
\begin{enumerate}
    \item the goal value function, i.e., the criteria used to select the next intermediate state (Subsection~\ref{subsec:abl-gf}),
    \item the pattern $\pattern_g$ used to reach the last computed intermediate state $P$ (Subsection~\ref{subsec:abl-patterng}),
    \item the pattern $\pattern_h$ used to progress beyond the last computed intermediate state $P$ (Subsection~\ref{subsec:abl-patternh}).
\end{enumerate}
Thus, we will consider various versions of \pattyd, each characterized by a triple representing the goal value function, the pattern $\pattern_g$ and the pattern $\pattern_h$ used. In particular:
\begin{enumerate}
    \item $\pattyc$ and $\pattyb$ use a goal value function with a {\em \underline{n}umeric} condition, the computed {\em \underline{p}lan} to reach $P$ from the starting state $S$ as $\pattern_g$, and a {\em \underline{c}omplete} pattern $\pattern_h$ to progress beyond $P$. They will therefore also be referred to as $\pattyc^{npc}$ and $\pattyb^{npc}$, respectively.
    \item $\pattyr$ (resp. $\pattyg$) uses a goal value function with a {\em \underline{n}umeric} condition, the {\em \underline{e}mpty} pattern to reach $P$ from the starting state $S$, and a {\em \underline{c}omplete} (resp. {\em \underline{i}ncomplete}) pattern $\pattern_h$ to progress beyond $P$. They will therefore also be referred to as $\pattyr^{nec}$ and $\pattyg^{nei}$, respectively.
\end{enumerate}

\subsection{Ablation Studies on the Goal Value Function}\label{subsec:abl-gf}

            \begin{table*}[tb]
            \centering
            \resizebox{\textwidth}{!}{
\begin{tabular}{|l||ccccccccc||ccccccccc||}
\hline
 & \multicolumn{9}{c||}{Solved (out of $20$)}&\multicolumn{9}{c||}{Time (s)}\\
Domain & $\mathrm{P}_\textsc{c}^{npc}$&$\mathrm{P}_\textsc{b}^{npc}$&$\mathrm{P}_\textsc{r}^{nec}$&$\mathrm{P}_\textsc{g}^{nei}$&$\mathrm{P}_\textsc{c}^{gpc}$&$\mathrm{P}_\textsc{b}^{gpc}$&$\mathrm{P}_\textsc{r}^{gec}$&$\mathrm{P}_\textsc{g}^{gei}$&$\mathrm{P}_\textsc{c}^{aes}$&$\mathrm{P}_\textsc{c}^{npc}$&$\mathrm{P}_\textsc{b}^{npc}$&$\mathrm{P}_\textsc{r}^{nec}$&$\mathrm{P}_\textsc{g}^{nei}$&$\mathrm{P}_\textsc{c}^{gpc}$&$\mathrm{P}_\textsc{b}^{gpc}$&$\mathrm{P}_\textsc{r}^{gec}$&$\mathrm{P}_\textsc{g}^{gei}$&$\mathrm{P}_\textsc{c}^{aes}$\\
\hline
\textsc{BlGrp} (S)&\textbf{20}&\textbf{20}&\textbf{20}&\textbf{20}&\textbf{20}&\textbf{20}&\textbf{20}&\textbf{20}&\textbf{20}&1.9&2.0&2.0&1.7&1.8&2.0&1.8&1.7&\textbf{1.6}\\
\textsc{Cnt} (S)&\textbf{20}&\textbf{20}&\textbf{20}&\textbf{20}&\textbf{20}&\textbf{20}&\textbf{20}&\textbf{20}&\textbf{20}&\textbf{0.8}&0.9&0.9&1.1&0.9&\textbf{0.8}&0.9&1.2&0.9\\
\textsc{Cnt} (L)&\textbf{20}&\textbf{20}&\textbf{20}&\textbf{20}&\textbf{20}&\textbf{20}&\textbf{20}&\textbf{20}&\textbf{20}&\textbf{0.9}&\textbf{0.9}&\textbf{0.9}&1.0&\textbf{0.9}&\textbf{0.9}&\textbf{0.9}&1.0&\textbf{0.9}\\
\textsc{Del} (S)&8&8&8&\textbf{9}&8&8&8&8&6&187.0&189.5&189.0&\textbf{171.8}&189.9&189.4&190.3&182.8&228.4\\
\textsc{Drn} (S)&\textbf{16}&\textbf{16}&13&\textbf{16}&\textbf{16}&\textbf{16}&13&\textbf{16}&3&110.8&99.9&110.8&98.2&100.4&95.4&131.9&\textbf{93.0}&255.3\\
\textsc{Exp} (S)&\textbf{4}&\textbf{4}&2&2&\textbf{4}&\textbf{4}&2&2&\textbf{4}&246.1&249.2&270.8&270.8&250.0&248.9&270.8&269.2&\textbf{244.5}\\
\textsc{Farm} (S)&\textbf{20}&\textbf{20}&\textbf{20}&17&\textbf{20}&\textbf{20}&\textbf{20}&17&\textbf{20}&\textbf{0.8}&0.9&\textbf{0.8}&45.7&\textbf{0.8}&0.9&\textbf{0.8}&45.7&\textbf{0.8}\\
\textsc{Farm} (L)&\textbf{20}&\textbf{20}&\textbf{20}&17&\textbf{20}&\textbf{20}&\textbf{20}&17&\textbf{20}&3.3&3.6&\textbf{3.2}&48.0&3.3&3.5&3.4&48.9&4.0\\
\textsc{HPwr} (S)&\textbf{20}&\textbf{20}&\textbf{20}&14&\textbf{20}&\textbf{20}&\textbf{20}&\textbf{20}&\textbf{20}&97.1&90.1&93.4&230.2&15.0&\textbf{14.0}&14.1&89.8&14.9\\
\textsc{Mrkt} (L)&2&5&5&\textbf{19}&-&-&-&-&-&294.8&279.2&279.8&\textbf{19.5}&-&-&-&-&-\\
\textsc{MPrime} (S)&\textbf{14}&\textbf{14}&\textbf{14}&10&\textbf{14}&\textbf{14}&\textbf{14}&10&12&\textbf{119.0}&128.7&124.3&154.9&120.5&124.3&124.0&154.7&131.9\\
\textsc{PathM} (S)&\textbf{20}&\textbf{20}&\textbf{20}&\textbf{20}&\textbf{20}&\textbf{20}&\textbf{20}&\textbf{20}&\textbf{20}&4.4&4.0&4.7&4.8&\textbf{3.7}&4.1&4.2&5.3&4.0\\
\textsc{PlWat} (S)&\textbf{20}&\textbf{20}&19&\textbf{20}&\textbf{20}&\textbf{20}&\textbf{20}&\textbf{20}&6&20.8&16.9&23.9&10.7&17.6&14.9&\textbf{9.2}&10.1&226.8\\
\textsc{Rvr} (S)&13&\textbf{14}&10&7&12&13&10&6&13&114.9&\textbf{113.6}&158.8&200.7&132.1&122.1&160.7&219.0&127.9\\
\textsc{Sail} (S)&\textbf{20}&\textbf{20}&\textbf{20}&\textbf{20}&\textbf{20}&\textbf{20}&\textbf{20}&\textbf{20}&\textbf{20}&1.1&\textbf{0.9}&1.0&\textbf{0.9}&1.0&1.0&1.0&1.0&1.4\\
\textsc{Sail} (L)&\textbf{20}&\textbf{20}&\textbf{20}&\textbf{20}&\textbf{20}&\textbf{20}&\textbf{20}&\textbf{20}&19&1.4&\textbf{0.8}&0.9&0.9&1.2&\textbf{0.8}&\textbf{0.8}&\textbf{0.8}&20.3\\
\textsc{Stlrs} (S)&\textbf{15}&\textbf{15}&\textbf{15}&\textbf{15}&\textbf{15}&\textbf{15}&\textbf{15}&\textbf{15}&\textbf{15}&83.5&83.7&82.8&\textbf{79.1}&82.3&82.7&83.1&\textbf{79.1}&88.5\\
\textsc{Sgr} (S)&\textbf{20}&\textbf{20}&15&15&\textbf{20}&\textbf{20}&17&16&\textbf{20}&12.2&14.9&81.3&78.4&9.8&13.2&56.6&68.8&\textbf{7.5}\\
\textsc{Tpp} (L)&3&3&\textbf{4}&3&2&2&\textbf{4}&3&2&259.3&255.6&\textbf{252.5}&264.6&270.2&270.1&256.1&264.8&274.6\\
\textsc{Zeno} (S)&\textbf{11}&\textbf{11}&\textbf{11}&\textbf{11}&\textbf{11}&\textbf{11}&\textbf{11}&\textbf{11}&\textbf{11}&136.8&\textbf{136.7}&137.1&137.8&137.0&\textbf{136.7}&136.9&137.1&138.0\\
\textsc{Line} (L)&\textbf{20}&\textbf{20}&\textbf{20}&\textbf{20}&\textbf{20}&\textbf{20}&\textbf{20}&\textbf{20}&\textbf{20}&1.6&1.5&1.4&1.4&1.6&1.4&1.5&1.4&\textbf{1.1}
\\\hline
\textit{Best}&\textbf{326}&\textbf{330}&\textbf{316}&\textbf{315}&\textbf{322}&\textbf{323}&\textbf{314}&\textbf{301}&\textbf{291}&\textbf{18}&\textbf{15}&\textbf{21}&\textbf{70}&\textbf{35}&\textbf{44}&\textbf{51}&\textbf{72}&\textbf{60}\\\hline

        \end{tabular}}
        \caption{Impact of the goal value function on Cautious, Brave, Reckless and Greedy \pattyd. Planner names are abbreviated.}
        \label{tab:abl-gf}
        \end{table*}

In \pattyc/\pattyb/\pattyr/\pattyg the formula $\Pi^\pattern_{S,P}$ is satisfiable if and only if there is a next state $P'$ reachable from $S$ with $\pattern$ which either extends the set of goals satisfied by $P$ or --assuming there is a single numeric goal $(\psi \unrhd 0)$ in $\Pi$-- which is closer than $P$ to a goal state according to the goal value function~(\ref{eq:gf-psi}).
Such condition differs from the one used by \citet{DBLP:conf/kr/CardelliniG25}, where the next intermediate state $P'$ had to satisfy at least one more goal and was selected among the ones satisfying as many additional goals as possible. Thus, here we study
\begin{enumerate}
    \item what is the impact of introducing search, and
    \item what is the impact of introducing the goal value function~(\ref{eq:gf-psi}) when there is a single numeric goal.
\end{enumerate}
We thus considered the system
\begin{enumerate}
    \item $\pattyc^{aes}$ in which the goal value function returns 0 if all the goals are satisfied, and 1 otherwise. In $\pattyc^{aes}$, {\em \underline{a}ll} the goals have to be satisfied in $P'$, both $S$ and $P$ are equal to $I$ and thus $\pattern_g$ is {\em \underline{e}mpty}, and $\pattern_h$ is {\em \underline{s}tatically} computed at the beginning in the initial state.\footnote{$\pattyc^{aes}$ is equal to the system called \pattygkr by \citet{DBLP:conf/kr/CardelliniG25}, and can be seen as the version of $\pattyo$ in which the next state variables, but the one corresponding to the bound, are eliminated.}
    \item $\pattyc^{gpc}/\pattyb^{gpc}/\pattyr^{gec}/\pattyg^{gei}$, each obtained from $\pattyc^{npc}$/$\pattyb^{npc}$/ $\pattyr^{nec}$/$\pattyg^{nei}$ by considering the {\em \underline{g}oal-count} as the only goal value function. Clearly, $\pattyc^{gpc}$ differs from $\pattyc^{npc}$ only on problems with a single numeric goal, and similarly for the other considered planners.
\end{enumerate}
Table~\ref{tab:abl-gf} shows the results. Considering the overall performances, each system with the numeric goal condition performs better than the corresponding one with only the goal count condition, and all the system doing search perform better than $\pattyc^{aes}$. The largest difference in between a ``numeric" and the corresponding ``goal-count" version is between 
$\pattyg^{nei}$ and $\pattyg^{gei}$ on the \textsc{HPwr(S)} and \textsc{Mrkt(L)} domains. Such differences can be explained by the fact that in both domains the problems have a single numeric goal. However, while \textsc{HPwr(S)} problems can be solved with a single \smt call when using a complete pattern, \textsc{Mrkt(L)} problems require multiple calls, and \texttt{z3} gets stuck with complete patterns. Notice that $\pattyc^{aes}$ solves 291 problems, as $\pattyo$, whereas \citet{DBLP:conf/kr/CardelliniG25} showed close but different numbers: this can be attributed to the newer version of \texttt{z3} employed in this paper.

\subsection{Ablation Studies on the Pattern to Reach the Intermediate State}\label{subsec:abl-patterng}

\begin{algorithm}
\caption{\textsc{Pattern\_$g$} function for pattern refinement via action elimination. \\
Input: A initial state $I$, 
a pattern $\pattern=a_1;a_2;\ldots;a_k$, an assignment $\mu$ to the actions in $\pattern$, the state $P$ resulting from the execution of the plan $a_1^{\mu(a_1)};a_2^{\mu(a_2)};\ldots;a_k^{\mu(a_k)}$. \\
Output: a subpattern of $\pattern$ allowing to reach $P$ starting from $I$.}
\label{alg:pattern-simplification}
\begin{algorithmic}[1]
\Function{\textsc{Pattern\_$g$}
}{$I,P,\mu,\pattern$}
\State $a_1; \ldots; a_n \gets \textsc{getPlan}(\mu,\pattern)$ \label{alg:ps-pattern2plan}   
\State $S \gets I$
\For{($i \gets 1$ \textbf{to} $n$)}
    \State $\hat{a}_i \gets \tuple{\pre(a_i) \cup \set{\psi = S(\psi) \mid x \asseq \psi \in \eff(a_i), x \in V_n},\eff(a_i)} $  \label{alg:ps-ce2strips}              
    \State $S \gets \res(a_i,S)$
\EndFor
\State $removed \gets \emptyset$
\State $S \gets I$
\For{($i \gets 1$ \textbf{to} $n$)}
    \State $marks \gets \emptyset$
    \If{($i \notin removed$)}
        \State $marks \gets marks \cup \{i\}$
        \State $S' \gets S$
        \For{($j \gets i+1$ \textbf{to} $n$)}
            \If{($j \notin removed$)}
                \If{($\hat{a}_j$ is not executable in $S'$)}
                    \State $marks \gets marks \cup \{j\}$
                \Else
                    \State $S' \gets \res(a_j,S')$
                \EndIf
            \EndIf
        \EndFor
        \If{($S' = P$)}
            \State $removed \gets removed \cup marks$
        \Else
            \State $S \gets \res(a_i,S)$
        \EndIf
    \EndIf
\EndFor
\State $\pi \gets a_1; a_2; \ldots; a_n \mbox{ with } \set{a_i \mid i \in removed} \mbox{ removed}$
\State \Return $\textsc{getPattern}(\pi)$
\EndFunction
\end{algorithmic}
\end{algorithm}

In \pattyc/\pattyb the pattern $\pattern_g$ is is computed by $\stp(I, P, \mu_P, \pattern_P)$ and $\stpn(I, P, \mu_P, \pattern_P, n)$ at Lines~\ref{alg:d:g-pattern} and \ref{alg:d:g-n-pattern} of \pattyd, and is the pattern of the plan $\pi$ defined as in~(\ref{eq:plan-pi}), assuming $\pattern_P = a_1;a_2;\ldots;a_k$.
Any pattern allowing to reach $P$ from $I$ will do. However, as already said, longer patterns $(i)$ cover more plans and thus may reduce the number of \smt calls, but $(ii)$ increase the number of action variables, thus likely making each \smt call more difficult.
The reason behind the choice we adopted for \pattyc/\pattyb is that $(i)$ the pattern of the plan $\pi$ can be computed in linear time, $(ii)$  is a subpattern of $\pattern_P$, and $(iii)$ is significantly shorter than $\pattern_P$ in many cases, thereby not introducing a significant overhead and making simpler the next call to the \smt solver for solving $\Pi^{\pattern}_{S,P}$.

We now investigate what is the effect of using either an optimized $\pattern_g$ by exploiting a quadratic greedy action elimination algorithm derived from \cite{fink1992formalizing,DBLP:conf/aips/MedC22}, or the likely redundant $\pattern_P$ as $\pattern_g$. The need for further optimization of the pattern $\pattern_g$ of the plan $\pi$, comes from the observation that $\pattern_g$ may  be redundant, i.e., contain  actions which can be removed still obtaining a pattern  allowing to reach $P$ from $I$. For the motivating example, we may have that the model found in the first iteration corresponds to the plan
\begin{equation}
    \label{eq:G1-plan-redundant}
\begin{array}{c}
\pi = \travel(1,2);\travel(2,3);\ldots; \travel(m-1,m); \buy(m); \buy(m); \sell(m). 
\end{array}
\end{equation}
which is redundant (indeed, it is not necessary to buy and then sell an object in market $m$) and leads to a redundant pattern having $\sell(m)$ as last action.

Though computing an irredundant pattern may be a desirable property, it comes with an extra price, since it is well known that checking whether a pattern is irredundant is already co-\textsc{np}-hard in the classical setting with no numeric variables~(see, e.g.,~\cite{DBLP:conf/ijcai/BercherHM24}).\footnote{If there are no numeric variables in $\Pi$ then no action is eligible for rolling and the pattern correspond to a plan.}
An intermediate solution between the mentioned linear and co-\textsc{np}-hard methods for computing a subpattern of $\pattern_P$ allowing to reach $P$ from $I$, are the greedy action elimination procedures, first proposed in the classical setting by \citet{fink1992formalizing} and recently extended to handle conditional effects by \citet{DBLP:conf/aips/MedC22}.

The procedure that we adopt for removing redundant actions from $\pi$ --presented in Algorithm~\ref{alg:pattern-simplification}-- is the standard procedure used in the classical setting, once actions with (possibly state dependent) numeric effects are converted into corresponding actions with state independent numeric effects at Line~\ref{alg:ps-ce2strips}. Such a step correspond to greedily trying to remove an action $a$ from $\pi$ and all the subsequent actions which either $(i)$ become non executable, or $(ii)$ whose effects are affected by the removal of $a$.
Notice that the second case can happen only when actions with state-dependent effects are allowed, as in numeric planning and also in classical planning with conditional effects. The procedure adopted by \citet{DBLP:conf/aips/MedC22}, also in the presence of state dependent effects, removes a subsequent action only if it does not become executable because of the previously removed actions, thus leaving the ones which are still executable though they are affected in their effects. In our motivating example, if we assume that the plan $a_1;\ldots;a_n$ computed at Line~\ref{alg:ps-pattern2plan} of Algorithm~\ref{alg:pattern-simplification}
is the one in (\ref{eq:G1-plan-redundant}), our procedure will successfully remove the last $\buy(m);\sell(m)$ actions (thus returning the sequence (\ref{eq:G1-plan})), while this will not be the case when considering the standard procedure. The added step can be performed in linear time, and the overall time required by the procedure remains quadratic.

Given the above, we consider the following systems:
\begin{enumerate}
    \item $\pattyc^{noc}/\pattyb^{noc}$ obtained from $\pattyc^{npc}/\pattyb^{npc}$ by exploiting the presented greedy action elimination procedure for computing an {\em \underline{o}ptimized} $\pattern_g$ with a quadratic time computation overhead, and
    \item $\pattyc^{nrc}/\pattyb^{nrc}$ obtained from $\pattyc^{npc}/\pattyb^{npc}$ by keeping the {\em \underline{r}edundant} $\pattern_P$ as $\pattern_g$, without incurring in any computation overhead.\footnote{$\pattyc^{nrc}$ is equal to the $\pattyhkr$ system presented by \citet{DBLP:conf/kr/CardelliniG25}.}
\end{enumerate}
Clearly, it does not make sense to consider the reckless and greedy versions of \pattyd since, in such cases, $\pattern_g$ is empty.

            \begin{table*}[t]
            \centering
            \resizebox{\textwidth}{!}{
\begin{tabular}{|l||cccccc||cccccc||}
\hline
 & \multicolumn{6}{c||}{Solved (out of $20$)}&\multicolumn{6}{c||}{Time (s)}\\
Domain & $\mathrm{P}_\textsc{c}^{npc}$&$\mathrm{P}_\textsc{c}^{nrc}$&$\mathrm{P}_\textsc{c}^{noc}$&$\mathrm{P}_\textsc{b}^{npc}$&$\mathrm{P}_\textsc{b}^{nrc}$&$\mathrm{P}_\textsc{b}^{noc}$&$\mathrm{P}_\textsc{c}^{npc}$&$\mathrm{P}_\textsc{c}^{nrc}$&$\mathrm{P}_\textsc{c}^{noc}$&$\mathrm{P}_\textsc{b}^{npc}$&$\mathrm{P}_\textsc{b}^{nrc}$&$\mathrm{P}_\textsc{b}^{noc}$\\
\hline
\textsc{BlGrp} (S)&\textbf{20}&\textbf{20}&\textbf{20}&\textbf{20}&\textbf{20}&\textbf{20}&1.9&\textbf{1.8}&\textbf{1.8}&2.0&1.9&\textbf{1.8}\\
\textsc{Cnt} (S)&\textbf{20}&\textbf{20}&\textbf{20}&\textbf{20}&\textbf{20}&\textbf{20}&\textbf{0.8}&0.9&0.9&0.9&0.9&0.9\\
\textsc{Cnt} (L)&\textbf{20}&\textbf{20}&\textbf{20}&\textbf{20}&\textbf{20}&\textbf{20}&\textbf{0.9}&1.0&\textbf{0.9}&\textbf{0.9}&1.0&\textbf{0.9}\\
\textsc{Del} (S)&\textbf{8}&5&\textbf{8}&\textbf{8}&6&\textbf{8}&\textbf{187.0}&228.1&187.4&189.5&212.9&191.6\\
\textsc{Drn} (S)&\textbf{16}&3&12&\textbf{16}&3&13&110.8&255.4&168.6&\textbf{99.9}&255.4&147.8\\
\textsc{Exp} (S)&\textbf{4}&\textbf{4}&\textbf{4}&\textbf{4}&\textbf{4}&\textbf{4}&246.1&\textbf{243.6}&246.1&249.2&245.0&244.7\\
\textsc{Farm} (S)&\textbf{20}&\textbf{20}&\textbf{20}&\textbf{20}&\textbf{20}&\textbf{20}&\textbf{0.8}&0.9&\textbf{0.8}&0.9&\textbf{0.8}&0.9\\
\textsc{Farm} (L)&\textbf{20}&\textbf{20}&\textbf{20}&\textbf{20}&\textbf{20}&\textbf{20}&3.3&3.5&\textbf{3.2}&3.6&3.3&3.4\\
\textsc{HPwr} (S)&\textbf{20}&19&\textbf{20}&\textbf{20}&\textbf{20}&\textbf{20}&97.1&104.0&90.8&\textbf{90.1}&93.6&92.2\\
\textsc{Mrkt} (L)&2&-&1&\textbf{5}&3&\textbf{5}&294.8&-&299.9&279.2&279.4&\textbf{269.9}\\
\textsc{MPrime} (S)&\textbf{14}&\textbf{14}&\textbf{14}&\textbf{14}&13&\textbf{14}&\textbf{119.0}&119.5&124.5&128.7&130.3&125.7\\
\textsc{PathM} (S)&\textbf{20}&\textbf{20}&\textbf{20}&\textbf{20}&\textbf{20}&\textbf{20}&4.4&4.7&4.3&\textbf{4.0}&4.4&4.6\\
\textsc{PlWat} (S)&\textbf{20}&6&18&\textbf{20}&6&\textbf{20}&20.8&217.4&134.6&\textbf{16.9}&219.7&85.5\\
\textsc{Rvr} (S)&13&12&13&\textbf{14}&12&\textbf{14}&114.9&143.1&121.6&113.6&141.8&\textbf{109.0}\\
\textsc{Sail} (S)&\textbf{20}&\textbf{20}&17&\textbf{20}&\textbf{20}&\textbf{20}&1.1&1.5&70.8&\textbf{0.9}&\textbf{0.9}&1.0\\
\textsc{Sail} (L)&\textbf{20}&\textbf{20}&16&\textbf{20}&\textbf{20}&\textbf{20}&1.4&10.8&60.9&\textbf{0.8}&\textbf{0.8}&\textbf{0.8}\\
\textsc{Stlrs} (S)&\textbf{15}&\textbf{15}&\textbf{15}&\textbf{15}&\textbf{15}&\textbf{15}&83.5&\textbf{82.8}&83.4&83.7&83.6&83.7\\
\textsc{Sgr} (S)&\textbf{20}&19&\textbf{20}&\textbf{20}&19&\textbf{20}&\textbf{12.2}&26.5&16.3&14.9&24.0&18.2\\
\textsc{Tpp} (L)&\textbf{3}&2&2&\textbf{3}&\textbf{3}&\textbf{3}&259.3&270.6&270.1&\textbf{255.6}&\textbf{255.6}&\textbf{255.6}\\
\textsc{Zeno} (S)&\textbf{11}&\textbf{11}&\textbf{11}&\textbf{11}&\textbf{11}&\textbf{11}&136.8&137.4&\textbf{136.6}&136.7&137.0&137.1\\
\textsc{Line} (L)&\textbf{20}&\textbf{20}&\textbf{20}&\textbf{20}&\textbf{20}&\textbf{20}&1.6&\textbf{1.2}&6.8&1.5&\textbf{1.2}&5.4
\\\hline
\textit{Best}&\textbf{326}&\textbf{290}&\textbf{311}&\textbf{330}&\textbf{295}&\textbf{327}&\textbf{66}&\textbf{60}&\textbf{44}&\textbf{86}&\textbf{53}&\textbf{53}\\\hline

        \end{tabular}}
        \caption{Impact of the pattern $\pattern_g$ on Cautious and Brave \pattyd. Planner names are abbreviated.}
        \label{tab:abl-patterng}
        \end{table*}
        
The results are presented in Table~\ref{tab:abl-patterng}. As it can be seen, none of the alternative versions of $\pattyc^{npc}/\pattyb^{npc}$ can improve the performance of the corresponding standard planner. In the case of $\pattyc^{noc}/\pattyb^{noc}$, the degradation in performance is due to the not very significant reduction in the pattern and the quadratic time spent to compute the reduced pattern. In the case of $\pattyc^{nrc}/\pattyb^{nrc}$, the degradation in performance is due to the significant number of additional variables in the encoding, which cause a significant degradation in the solving time of the corresponding formula.

\subsection{Ablation Studies on the Pattern to Progress the Search Beyond the Intermediate State}\label{subsec:abl-patternh}

In \pattyc/\pattyb/\pattyr/\pattyg the pattern $\pattern_h$ is dynamically recomputed in any intermediate state. Further,  while in \pattyc/\pattyb/\pattyr, the pattern $\pattern_h$ is complete, this is not the case for \pattyg, which can be seen as the version of $\pattyr$ when using an incomplete pattern.

We now investigate what is the effect of dynamically recomputing the pattern $\pattern_h$ in any intermediate state and what is the effect of using an incomplete $\pattern_h$ in the cautious and brave versions of $\pattyd$. We therefore consider the following systems:
\begin{enumerate}
    \item $\pattyc^{nps}/\pattyb^{nps}/\pattyr^{nps}/\pattyg^{nps}$ in which $\pattern_h$ is {\em \underline{s}tatic} and, more precisely, is the one computed in the initial state $I$, and
    \item 
    $\pattyc^{npi}/\pattyb^{npi}$ in which $\pattern_h$ is possibly {\em \underline{i}ncomplete} and computed as in $\pattyg^{nei}$.
\end{enumerate}

            \begin{table*}[tb]
            \centering
            \resizebox{\textwidth}{!}{
\begin{tabular}{|l||cccccccccc||cccccccccc||}
\hline
 & \multicolumn{10}{c||}{Solved (out of $20$)}&\multicolumn{10}{c||}{Time (s)}\\
Domain & $\mathrm{P}_\textsc{c}^{npc}$&$\mathrm{P}_\textsc{b}^{npc}$&$\mathrm{P}_\textsc{r}^{nec}$&$\mathrm{P}_\textsc{g}^{nei}$&$\mathrm{P}_\textsc{c}^{nps}$&$\mathrm{P}_\textsc{b}^{nps}$&$\mathrm{P}_\textsc{r}^{nes}$&$\mathrm{P}_\textsc{g}^{nes}$&$\mathrm{P}_\textsc{c}^{npi}$&$\mathrm{P}_\textsc{b}^{npi}$&$\mathrm{P}_\textsc{c}^{npc}$&$\mathrm{P}_\textsc{b}^{npc}$&$\mathrm{P}_\textsc{r}^{nec}$&$\mathrm{P}_\textsc{g}^{nei}$&$\mathrm{P}_\textsc{c}^{nps}$&$\mathrm{P}_\textsc{b}^{nps}$&$\mathrm{P}_\textsc{r}^{nes}$&$\mathrm{P}_\textsc{g}^{nes}$&$\mathrm{P}_\textsc{c}^{npi}$&$\mathrm{P}_\textsc{b}^{npi}$\\
\hline
\textsc{BlGrp} (S)&\textbf{20}&\textbf{20}&\textbf{20}&\textbf{20}&\textbf{20}&\textbf{20}&\textbf{20}&\textbf{20}&\textbf{20}&\textbf{20}&1.9&2.0&2.0&\textbf{1.7}&1.9&1.8&1.8&1.8&1.8&1.9\\
\textsc{Cnt} (S)&\textbf{20}&\textbf{20}&\textbf{20}&\textbf{20}&\textbf{20}&\textbf{20}&\textbf{20}&\textbf{20}&\textbf{20}&\textbf{20}&\textbf{0.8}&0.9&0.9&1.1&0.9&\textbf{0.8}&1.0&\textbf{0.8}&1.3&1.3\\
\textsc{Cnt} (L)&\textbf{20}&\textbf{20}&\textbf{20}&\textbf{20}&\textbf{20}&\textbf{20}&\textbf{20}&\textbf{20}&\textbf{20}&\textbf{20}&\textbf{0.9}&\textbf{0.9}&\textbf{0.9}&1.0&\textbf{0.9}&\textbf{0.9}&\textbf{0.9}&\textbf{0.9}&\textbf{0.9}&1.0\\
\textsc{Del} (S)&8&8&8&\textbf{9}&6&6&7&7&8&\textbf{9}&187.0&189.5&189.0&\textbf{171.8}&227.4&226.2&209.6&211.0&186.8&173.0\\
\textsc{Drn} (S)&\textbf{16}&\textbf{16}&13&\textbf{16}&\textbf{16}&\textbf{16}&13&13&\textbf{16}&\textbf{16}&110.8&99.9&110.8&98.2&107.6&103.6&125.2&141.4&94.7&\textbf{88.5}\\
\textsc{Exp} (S)&\textbf{4}&\textbf{4}&2&2&\textbf{4}&\textbf{4}&2&2&\textbf{4}&\textbf{4}&246.1&249.2&270.8&270.8&247.7&250.4&270.9&270.9&\textbf{244.6}&\textbf{244.6}\\
\textsc{Farm} (S)&\textbf{20}&\textbf{20}&\textbf{20}&17&\textbf{20}&\textbf{20}&\textbf{20}&\textbf{20}&\textbf{20}&\textbf{20}&\textbf{0.8}&0.9&\textbf{0.8}&45.7&0.9&\textbf{0.8}&0.9&\textbf{0.8}&1.0&1.0\\
\textsc{Farm} (L)&\textbf{20}&\textbf{20}&\textbf{20}&17&\textbf{20}&\textbf{20}&\textbf{20}&\textbf{20}&\textbf{20}&\textbf{20}&3.3&3.6&3.2&48.0&3.2&3.4&3.4&3.1&\textbf{0.8}&\textbf{0.8}\\
\textsc{HPwr} (S)&\textbf{20}&\textbf{20}&\textbf{20}&14&\textbf{20}&\textbf{20}&\textbf{20}&\textbf{20}&15&15&97.1&90.1&93.4&230.2&103.5&97.1&95.7&\textbf{88.6}&216.3&221.7\\
\textsc{Mrkt} (L)&2&5&5&\textbf{19}&-&3&4&3&8&8&294.8&279.2&279.8&\textbf{19.5}&-&281.1&277.8&278.9&200.1&190.5\\
\textsc{MPrime} (S)&\textbf{14}&\textbf{14}&\textbf{14}&10&\textbf{14}&\textbf{14}&\textbf{14}&\textbf{14}&10&10&\textbf{119.0}&128.7&124.3&154.9&126.5&125.7&123.8&119.6&155.9&154.2\\
\textsc{PathM} (S)&\textbf{20}&\textbf{20}&\textbf{20}&\textbf{20}&\textbf{20}&\textbf{20}&\textbf{20}&\textbf{20}&\textbf{20}&\textbf{20}&4.4&\textbf{4.0}&4.7&4.8&4.5&4.2&\textbf{4.0}&4.4&4.7&4.7\\
\textsc{PlWat} (S)&\textbf{20}&\textbf{20}&19&\textbf{20}&\textbf{20}&\textbf{20}&\textbf{20}&\textbf{20}&\textbf{20}&\textbf{20}&20.8&16.9&23.9&\textbf{10.7}&24.4&21.1&12.1&11.5&24.2&19.8\\
\textsc{Rvr} (S)&13&14&10&7&\textbf{15}&14&9&9&13&12&114.9&113.6&158.8&200.7&\textbf{106.9}&136.4&177.0&177.4&136.1&137.1\\
\textsc{Sail} (S)&\textbf{20}&\textbf{20}&\textbf{20}&\textbf{20}&\textbf{20}&\textbf{20}&\textbf{20}&\textbf{20}&\textbf{20}&\textbf{20}&1.1&\textbf{0.9}&1.0&\textbf{0.9}&1.0&1.0&1.0&1.0&1.1&1.0\\
\textsc{Sail} (L)&\textbf{20}&\textbf{20}&\textbf{20}&\textbf{20}&\textbf{20}&\textbf{20}&\textbf{20}&\textbf{20}&\textbf{20}&\textbf{20}&1.4&\textbf{0.8}&0.9&0.9&1.0&\textbf{0.8}&0.9&0.9&0.9&\textbf{0.8}\\
\textsc{Stlrs} (S)&\textbf{15}&\textbf{15}&\textbf{15}&\textbf{15}&\textbf{15}&\textbf{15}&\textbf{15}&\textbf{15}&\textbf{15}&\textbf{15}&83.5&83.7&82.8&79.1&82.2&84.1&82.8&83.3&\textbf{78.8}&79.4\\
\textsc{Sgr} (S)&\textbf{20}&\textbf{20}&15&15&\textbf{20}&\textbf{20}&15&15&\textbf{20}&\textbf{20}&12.2&14.9&81.3&78.4&9.9&10.3&83.0&83.6&\textbf{7.5}&7.9\\
\textsc{Tpp} (L)&3&3&\textbf{4}&3&2&2&3&3&2&2&259.3&255.6&\textbf{252.5}&264.6&270.1&270.1&264.1&264.0&270.6&270.4\\
\textsc{Zeno} (S)&\textbf{11}&\textbf{11}&\textbf{11}&\textbf{11}&\textbf{11}&\textbf{11}&\textbf{11}&\textbf{11}&\textbf{11}&\textbf{11}&136.8&\textbf{136.7}&137.1&137.8&137.8&138.3&137.6&137.9&147.8&138.1\\
\textsc{Line} (L)&\textbf{20}&\textbf{20}&\textbf{20}&\textbf{20}&\textbf{20}&\textbf{20}&\textbf{20}&\textbf{20}&\textbf{20}&\textbf{20}&1.6&1.5&\textbf{1.4}&\textbf{1.4}&1.6&1.5&1.6&1.5&1.5&\textbf{1.4}
\\\hline
\textit{Best}&\textbf{326}&\textbf{330}&\textbf{316}&\textbf{315}&\textbf{323}&\textbf{325}&\textbf{313}&\textbf{312}&\textbf{322}&\textbf{322}&\textbf{24}&\textbf{29}&\textbf{35}&\textbf{80}&\textbf{25}&\textbf{26}&\textbf{26}&\textbf{34}&\textbf{37}&\textbf{62}\\\hline

        \end{tabular}}
        \caption{Impact of a static (resp. incomplete) pattern $\pattern_h$ computation on Cautious, Brave, Reckless, Greedy (resp. Cautious, Brave) \pattyd. Planner names are abbreviated.}
        \label{tab:abl-patternh-incomplete}
        \end{table*}
        
The results are reported in Table~\ref{tab:abl-patternh-incomplete}. The first observation is that adopting a static pattern does not have a dramatic overall impact on performance, leading to improvements in some domains while causing a worsening in others. The \textsc{Mrkt(L)} domain shows the possible significant impact on performances with $\pattyg$. Such result can be expected since when the agent is in a state $S$ different from the initial state, it is likely that $(i)$ the actions which are executable in $S$ are not included in the initially computed incomplete pattern, and $(ii)$ the encodings become unsolvable as soon as complete patterns are considered. Exploiting an incomplete pattern $\pattern_h$ on top of the cautious and brave approaches rarely improves performance: $\pattyc^{npi}$ does not solve more problems than $\pattyc^{npc}$ in any domain, and $\pattyb^{npi}$ does so only in \textsc{Del(S)}. Moreover, the time differences are marginal.

\section{Conclusion}

We introduced a novel symbolic search-based procedure for numeric planning.
We proved its correctness and stated conditions for its completeness. We implemented a cautious/brave/reckless/greedy version of the procedure and showed that they attain different performance on different domains and that they perform comparatively well against all the available numeric planners on the benchmarks of the 2023 \ipc Agile Track and we performed ablation studies. 

This work opens new  avenues for future research on leveraging search techniques in symbolic planning. Indeed, this is the first work where the choice of how to encode the planning problem as a logic formula is guided by search. As the different performance of \pattyc/\pattyb/\pattyr/\pattyg on different domains show, more research is needed, e.g., to alternate different strategies to come up with a single more robust procedure and to deal with  problems where there is only one propositional goal, for which the strategies we propose here are ineffective.
Despite this, we have shown that \pattyc/\pattyb/\pattyr/\pattyg achieve remarkably good performance, even when compared to \textsc{sota} search-based planners. We are currently working on how to exploit alternative, novel search heuristic research in our symbolic setting. Moreover, it could be interesting to explore how to modify the \smt solver to exploit planning-specific branching heuristics, as proposed by \citet{DBLP:conf/aaai/GiunchigliaMS98,DBLP:journals/ai/Rintanen12}.

\printbibliography

\pagebreak
\appendix

\section{Infographics of the various \pattyd versions}\label{appendix:infographic}
Please find the infographics of the cautious/brave/reckless/greedy approaches in Fig. \ref{fig:infographic:cautious}/\ref{fig:infographic:brave}/\ref{fig:infographic:reckless}/\ref{fig:infographic:greedy}, respectively.

\section{Reproducibility Checklist for JAIR}

Select the answers that apply to your research -- one per item. 

\subsection*{All articles:}

\begin{enumerate}
    \item All claims investigated in this work are clearly stated. 
    [yes]
    \item Clear explanations are given how the work reported substantiates the claims. 
    [yes]
    \item Limitations or technical assumptions are stated clearly and explicitly. 
    [yes]
    \item Conceptual outlines and/or pseudo-code descriptions of the AI methods introduced in this work are provided, and important implementation details are discussed. 
    [yes]
    \item 
    Motivation is provided for all design choices, including algorithms, implementation choices, parameters, data sets and experimental protocols beyond metrics.
    [yes]
\end{enumerate}

\subsection*{Articles containing theoretical contributions:}
Does this paper make theoretical contributions? 
[yes] 

If yes, please complete the list below.

\begin{enumerate}
    \item All assumptions and restrictions are stated clearly and formally. 
    [yes]
    \item All novel claims are stated formally (e.g., in theorem statements). 
    [yes]
    \item Proofs of all non-trivial claims are provided in sufficient detail to permit verification by readers with a reasonable degree of expertise (e.g., that expected from a PhD candidate in the same area of AI). [yes]
    \item
    Complex formalism, such as definitions or proofs, is motivated and explained clearly.
    [yes]
    \item 
    The use of mathematical notation and formalism serves the purpose of enhancing clarity and precision; gratuitous use of mathematical formalism (i.e., use that does not enhance clarity or precision) is avoided.
    [yes]
    \item 
    Appropriate citations are given for all non-trivial theoretical tools and techniques. 
    [yes]
\end{enumerate}

\subsection*{Articles reporting on computational experiments:}
Does this paper include computational experiments? [yes]

If yes, please complete the list below.
\begin{enumerate}
    \item 
    All source code required for conducting experiments is included in an online appendix 
    or will be made publicly available upon publication of the paper.
    The online appendix follows best practices for source code readability and documentation as well as for long-term accessibility.
    [yes]
    \item The source code comes with a license that
    allows free usage for reproducibility purposes.
    [yes]
    \item The source code comes with a license that
    allows free usage for research purposes in general.
    [yes]
    \item 
    Raw, unaggregated data from all experiments is included in an online appendix 
    or will be made publicly available upon publication of the paper.
    The online appendix follows best practices for long-term accessibility.
    [yes]
    \item The unaggregated data comes with a license that
    allows free usage for reproducibility purposes.
    [yes]
    \item The unaggregated data comes with a license that
    allows free usage for research purposes in general.
    [yes]
    \item If an algorithm depends on randomness, then the method used for generating random numbers and for setting seeds is described in a way sufficient to allow replication of results. 
    [NA]
    \item The execution environment for experiments, the computing infrastructure (hardware and software) used for running them, is described, including GPU/CPU makes and models; amount of memory (cache and RAM); make and version of operating system; names and versions of relevant software libraries and frameworks. 
    [yes]
    \item 
    The evaluation metrics used in experiments are clearly explained and their choice is explicitly motivated. 
    [yes]
    \item 
    The number of algorithm runs used to compute each result is reported. 
    [yes]
    \item 
    Reported results have not been ``cherry-picked'' by silently ignoring unsuccessful or unsatisfactory experiments. 
    [yes]
    \item 
    Analysis of results goes beyond single-dimensional summaries of performance (e.g., average, median) to include measures of variation, confidence, or other distributional information. 
    [yes]
    \item 
    All (hyper-) parameter settings for 
    the algorithms/methods used in experiments have been reported, along with the rationale or method for determining them. 
    [NA]
    \item 
    The number and range of (hyper-) parameter settings explored prior to conducting final experiments have been indicated, along with the effort spent on (hyper-) parameter optimisation. 
    [NA]
    \item 
    Appropriately chosen statistical hypothesis tests are used to establish statistical significance
    in the presence of noise effects.
    [NA]
\end{enumerate}

\subsection*{Articles using data sets:}
Does this work rely on one or more data sets (possibly obtained from a benchmark generator or similar software artifact)? 
[yes]

If yes, please complete the list below.
\begin{enumerate}
    \item 
    All newly introduced data sets 
    are included in an online appendix 
    or will be made publicly available upon publication of the paper.
    The online appendix follows best practices for long-term accessibility with a license
    that allows free usage for research purposes.
    [yes]
    \item The newly introduced data set comes with a license that
    allows free usage for reproducibility purposes.
    [yes]
    \item The newly introduced data set comes with a license that
    allows free usage for research purposes in general.
    [yes]
    \item All data sets drawn from the literature or other public sources (potentially including authors' own previously published work) are accompanied by appropriate citations.
    [yes]
    \item All data sets drawn from the existing literature (potentially including authors’ own previously published work) are publicly available. [yes]
    \item All new data sets and data sets that are not publicly available are described in detail, including relevant statistics, the data collection process and annotation process if relevant.
    [NA]
    \item 
    All methods used for preprocessing, augmenting, batching or splitting data sets (e.g., in the context of hold-out or cross-validation)
    are described in detail. [NA]
\end{enumerate}

\begin{figure}[p]
    \centering
    \includegraphics[width=0.9\linewidth]{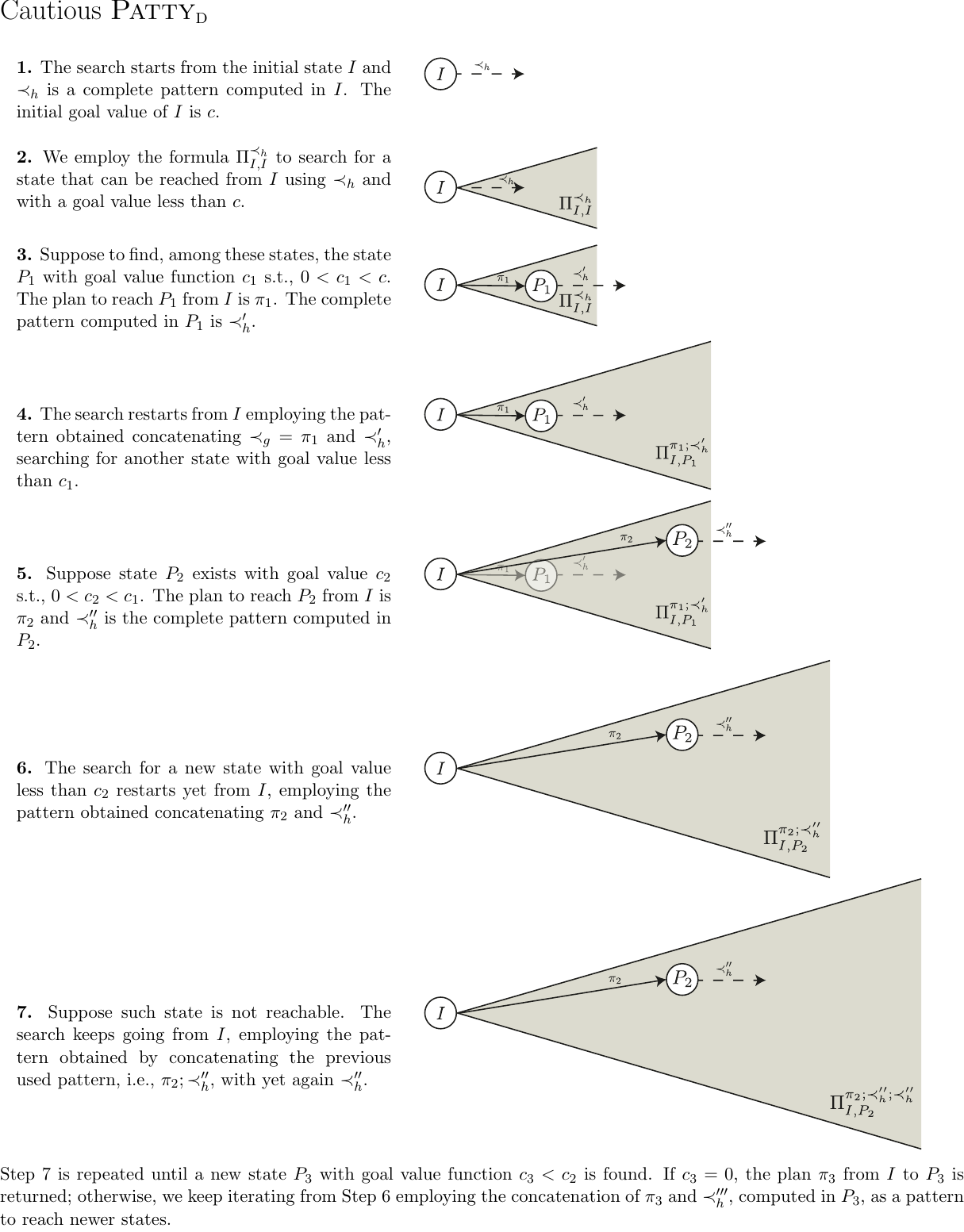}
    \caption{Infographic showing an example of the cautious \pattyd search.}
    \label{fig:infographic:cautious}
\end{figure}

\begin{figure}[p]
    \centering
    \includegraphics[width=0.9\linewidth]{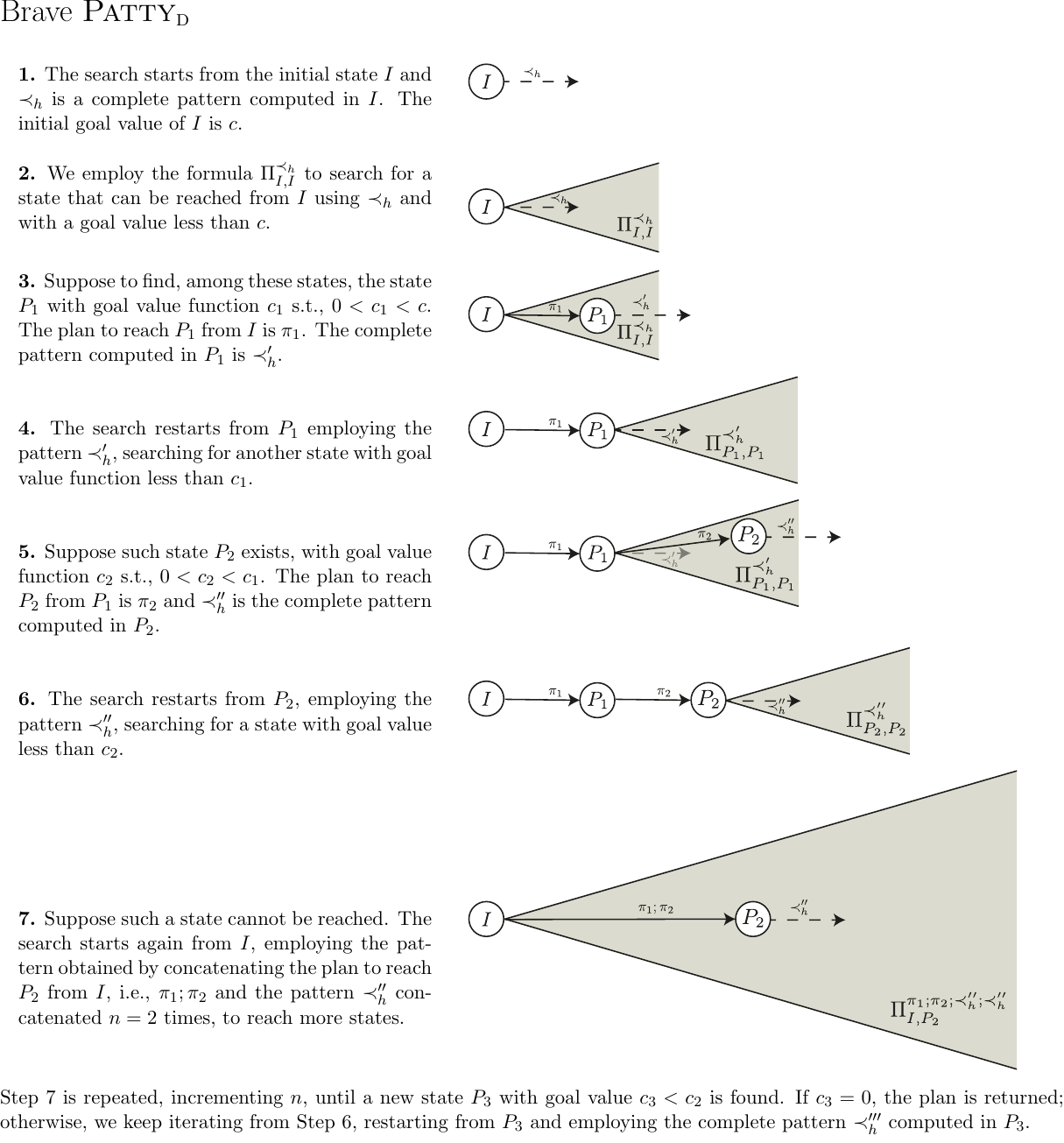}
    \caption{Infographic showing an example of the brave \pattyd search.}
    \label{fig:infographic:brave}
\end{figure}

\begin{figure}[p]
    \centering
    \includegraphics[width=0.9\linewidth]{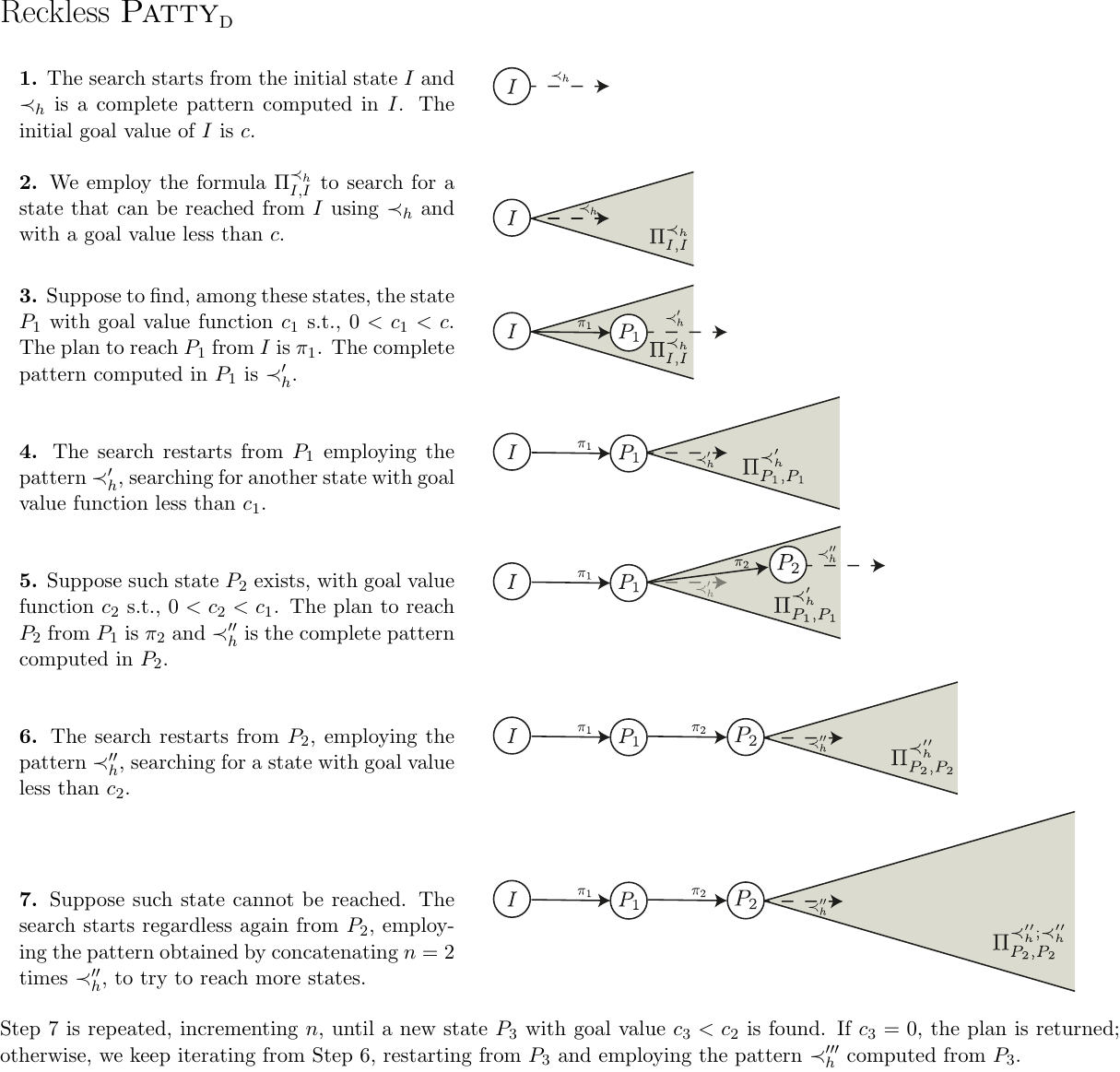}
    \caption{Infographic showing an example of the reckless \pattyd search.}
    \label{fig:infographic:reckless}
\end{figure}

\begin{figure}[p]
    \centering
    \includegraphics[width=0.9\linewidth]{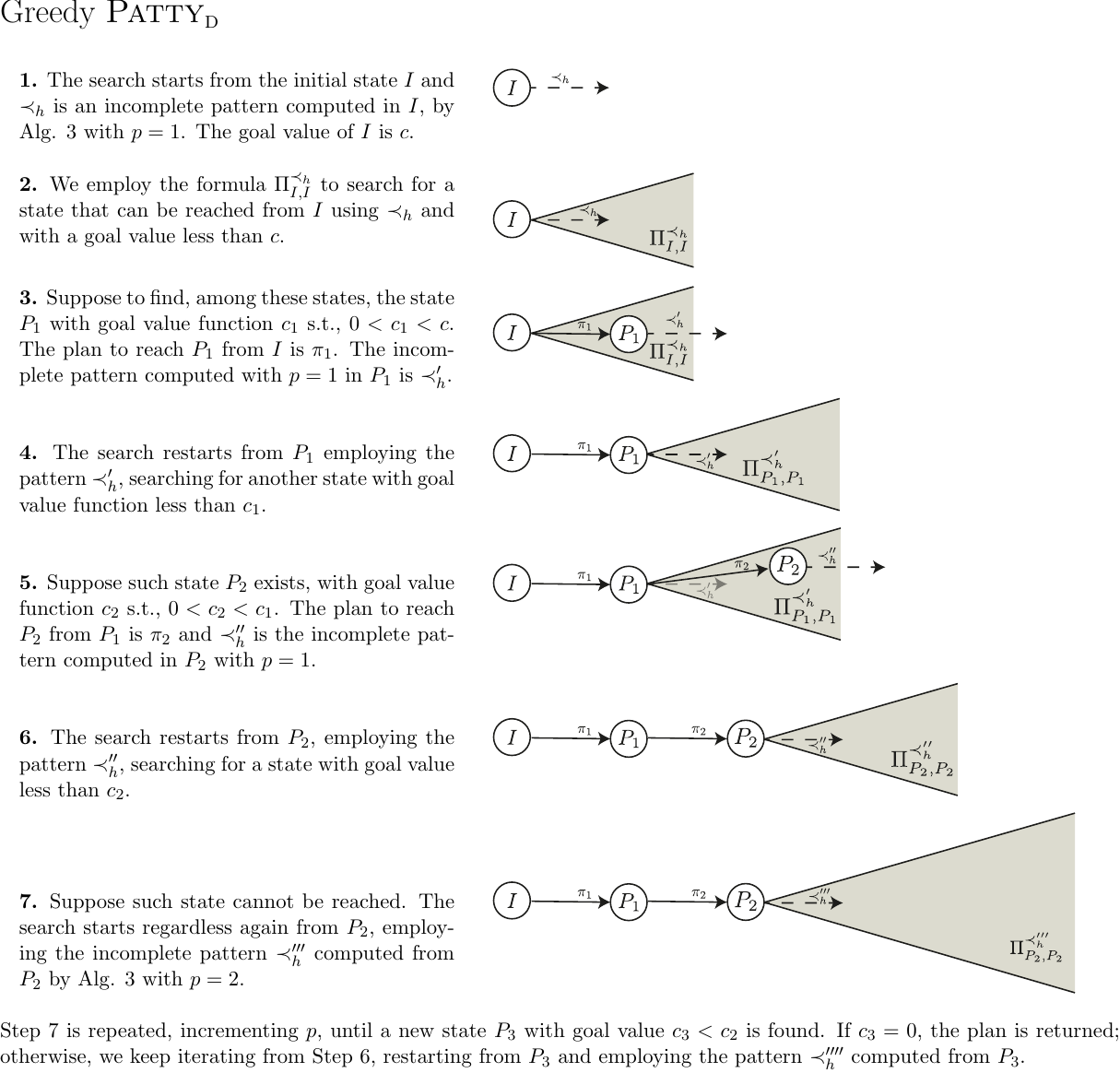}
    \caption{Infographic showing an example of the greedy \pattyd search.}
    \label{fig:infographic:greedy}
\end{figure}

\end{document}